\definecolor{LightCyan}{rgb}{0.88,1,1}
\newcommand\hl{\bgroup\markoverwith
  {\textcolor{LightCyan}{\rule[-.5ex]{2pt}{2.5ex}}}\ULon}
\DeclarePairedDelimiter\abs{\lvert}{\rvert}%
\newtheorem{theorem}{Theorem}
\newtheorem{lemma}{Lemma}
\newtheorem{proposition}{Proposition}
\newtheorem{assumption}{Assumption}
\newtheorem{remark}{Remark}
\icmltitlerunning{Momentum-Based Policy Gradient Methods}
\begin{document}

\twocolumn[
\icmltitle{Momentum-Based Policy Gradient Methods}

% It is OKAY to include author information, even for blind
% submissions: the style file will automatically remove it for you
% unless you've provided the [accepted] option to the icml2020
% package.

% List of affiliations: The first argument should be a (short)
% identifier you will use later to specify author affiliations
% Academic affiliations should list Department, University, City, Region, Country
% Industry affiliations should list Company, City, Region, Country

% You can specify symbols, otherwise they are numbered in order.
% Ideally, you should not use this facility. Affiliations will be numbered
% in order of appearance and this is the preferred way.
%\icmlsetsymbol{equal}{*}

\begin{icmlauthorlist}
\icmlauthor{Feihu Huang}{1}
\icmlauthor{Shangqian Gao}{1}
\icmlauthor{Jian Pei}{2}
\icmlauthor{Heng Huang}{1,3}
\end{icmlauthorlist}

\icmlaffiliation{1}{Department of Electrical and Computer Engineering, University of Pittsburgh, Pittsburgh, USA}
\icmlaffiliation{2}{School of Computing Science, Simon Fraser University, Vancouver, Canada}
\icmlaffiliation{3}{JD Finance America Corporation, Mountain View, CA, USA}

\icmlcorrespondingauthor{Feihu Huang}{huangfeihu2018@gmail.com}
\icmlcorrespondingauthor{Heng Huang}{heng.huang@pitt.edu}

% You may provide any keywords that you
% find helpful for describing your paper; these are used to populate
% the "keywords" metadata in the PDF but will not be shown in the document
\icmlkeywords{Machine Learning, ICML}

\vskip 0.3in
]

% this must go after the closing bracket ] following \twocolumn[ ...

% This command actually creates the footnote in the first column
% listing the affiliations and the copyright notice.
% The command takes one argument, which is text to display at the start of the footnote.
% The \icmlEqualContribution command is standard text for equal contribution.
% Remove it (just {}) if you do not need this facility.

\printAffiliationsAndNotice{}  % leave blank if no need to mention equal contribution
%\printAffiliationsAndNotice{\icmlEqualContribution}
% otherwise use the standard text.

\begin{abstract}
In the paper, we propose a class of efficient momentum-based policy gradient methods
for the model-free reinforcement learning,
which use adaptive learning rates and do not require any large batches.
Specifically, we propose a fast important-sampling momentum-based policy gradient (IS-MBPG) method based on a new momentum-based variance reduced technique and the importance sampling technique.
We also propose a fast Hessian-aided momentum-based policy gradient (HA-MBPG) method based on the momentum-based variance reduced technique and the Hessian-aided technique.
Moreover, we prove that both the IS-MBPG and HA-MBPG methods
reach the best known sample complexity of $O(\epsilon^{-3})$ for finding an $\epsilon$-stationary point of the nonconcave performance function, which only require one trajectory at each iteration.
In particular, we present a non-adaptive version of IS-MBPG method, i.e., IS-MBPG*,
which also reaches the best known sample complexity of  $O(\epsilon^{-3})$ without any large batches.
In the experiments, we apply four benchmark tasks to demonstrate the effectiveness of our algorithms.
\end{abstract}

\section{Introduction}
Reinforcement Learning (RL) has achieved great success in solving many sequential decision-making problems
such as autonomous driving \cite{shalev2016safe}, robot manipulation \cite{deisenroth2013survey}, the game of Go \cite{silver2017mastering}
and natural language processing \cite{wang2018deep}.
In general, RL involves a Markov decision process (MDP), where an agent takes actions dictated by a policy
in a stochastic environment over a sequence of time steps, and then maximizes the long-term cumulative rewards to obtain an optimal policy.
Due to easy implementation and avoiding policy degradation, policy gradient method \cite{williams1992simple,sutton2000policy}
is widely used for finding the optimal policy in MDPs,
especially for the high dimensional continuous state and action spaces.
To obtain the optimal policy, policy gradient methods directly maximize the expected total reward
(also called as performance function $J(\theta)$) via using the
stochastic first-order gradient of cumulative rewards.
Recently, policy gradient methods have achieved significant empirical successes
in many challenging deep reinforcement learning applications \cite{li2017deep}
such as playing game of Go and robot manipulation.
\begin{table*}
\vspace*{-6pt}
  \centering
  \caption{ Convergence properties of the representative variance-reduced policy algorithms on the \emph{non-oblivious} model-free RL problem
  for finding an $\epsilon$-stationary point of the nonconcave performance function $J(\theta)$, i.e., $\mathbb{E}\|\nabla J(\theta)\|\leq \epsilon$. Our algorithms (IS-MBPG, IS-MBPG* and HA-MBPG)
  and REINFORCE are \textbf{single-loop} algorithms, while the other algorithms are \textbf{double-loops}, which need the outer-loop and inner-loop mini-batch sizes.
  Note that \citet{papini2018stochastic} only remarked that apply the ADAM algorithm \cite{kingma2014adam}
  to the SVRPG algorithm to obtain an adaptive learning rate, but did not provide any theoretical analysis about this learning rate. In addition, the sample complexity $O(\epsilon^{-4})$ of REINFORCE does not directly come from \cite{williams1992simple}, but follows theoretical results of SGD \cite{ghadimi2013stochastic} (A detailed theoretical analysis is given in the Appendix \ref{Appendix:A4}). }
  \label{tab:1}
  \begin{tabular}{c|c|c|c|c}
  \toprule
  %\hline
  % after \\: \hline or \cline{col1-col2} \cline{col3-col4} ...
 \textbf{Algorithm} & \textbf{Reference} &  \textbf{Sample Complexity}  & \textbf{Batch Size} & \textbf{Adaptive Learning Rate}\\ \hline
  REINFORCE & \citet{williams1992simple} & $O(\epsilon^{-4})$  & $O(\epsilon^{-2})$ &   \\ \hline
  SVRPG & \citet{papini2018stochastic} & $O(\epsilon^{-4})$ & $O(\epsilon^{-2})$ \& $O(\epsilon^{-2})$ &  \\ \hline
  SVRPG & \citet{xu2019improved} & $O(\epsilon^{-10/3})$  & $O(\epsilon^{-4/3})$ \& $O(\epsilon^{-2})$ & \\ \hline
  HAPG & \citet{shen2019hessian} & $O(\epsilon^{-3})$ &  $O(\epsilon^{-1})$ \& $O(\epsilon^{-2})$ &  \\ \hline
  SRVR-PG & \citet{xu2019sample} & $O(\epsilon^{-3})$  & $O(\epsilon^{-1})$ \& $O(\epsilon^{-2})$ &    \\ \hline
  IS-MBPG &Ours &  $O(\epsilon^{-3})$  & $O(1)$ &\checkmark \\ \hline
  HA-MBPG & Ours & $O(\epsilon^{-3})$ & $O(1)$ & \checkmark  \\ \hline
  IS-MBPG* & Ours & $O(\epsilon^{-3})$ & $O(1)$ &  \\ \hline
  %\bottomrule
  \end{tabular}
\end{table*}

Thus, policy gradient methods have regained much interest in reinforcement learning,
and some corresponding algorithms and theory of policy gradient \cite{fellows2018fourier,fujimoto2018addressing,papini2018stochastic,haarnoja2018soft,xu2019improved,shen2019hessian,cheng2019predictor,
cheng2019trajectory,wang2019neural} have been proposed and studied.
Since the classic policy gradient methods (e.g., REINFORCE \cite{williams1992simple}, PGT \cite{sutton2000policy}, GPOMDP \cite{baxter2001infinite} and TRPO \cite{schulman2015trust})
approximate the gradient of the expected total reward based on a batch of sampled trajectories,
they generally suffer from large variance in the estimated gradients, which results in a poor convergence.
Following the standard stochastic gradient methods \cite{robbins1951stochastic,ghadimi2013stochastic},
these gradient-based policy methods require $O(\epsilon^{-4})$ samples for finding an $\epsilon$-stationary point of non-concave performance function $J(\theta)$ (
i.e., $\mathbb{E}\|\nabla J(\theta)\|\leq \epsilon$).
Thus, recently many works have begun to study to reduce variance in the policy gradient methods.
For example, the early variance reduced policy methods \cite{greensmith2004variance,peters2008reinforcement} mainly focused on
using unbiased baseline functions to reduce the variance. \citet{schulman2015high} presented the generalized advantage estimation (GAE)
to discover the balance between bias and variance of policy gradient.
Then \citet{gu2016q} applied both the GAE and linear baseline function to reduce variance.
Recently, \citet{mao2018variance,wu2018variance} proposed the input-dependent
and action-dependent baselines to reduce the variance, respectively.
More recently, \citet{cheng2019predictor} leveraged the predictive models to reduce the variance to accelerate policy learning.

Recently, the variance reduced gradient estimators such as SVRG \cite{johnson2013accelerating,allen2016variance,reddi2016stochastic}, SAGA \cite{defazio2014saga},
SARAH \cite{nguyen2017sarah}, SPIDER \cite{fang2018spider}, SpiderBoost \cite{wang2019spiderboost} and SNVRG \cite{zhou2018stochastic} have been successful in the oblivious supervised learning.
However, the RL optimization problems are \emph{non-oblivious}, i.e., the distribution of the samples is non-stationarity and changes
over time. Thus, \citet{du2017stochastic,xu2017stochastic,wai2019variance} first transform the original non-oblivious policy evaluation problem into some oblivious subproblems,
and then use the existing variance reduced gradient estimators (such as SVRG and SAGA) to solve these subproblems
to reach the goal of reducing the large variance in the original RL problem.
For example, \citet{du2017stochastic} first transforms the empirical policy evaluation problem into a quadratic
convex-concave saddle-point problem via linear function approximation, and then
applies the variants of SVRG and SAGA \cite{palaniappan2016stochastic} to solve this oblivious saddle-point problem.

More recently, \citet{papini2018stochastic,xu2019improved,xu2019sample,shen2019hessian} further have developed some variance reduced policy gradient estimators directly used in the non-oblivious
model-free RL, based on the existing variance reduced techniques such as SVRG and SPIDER used in the oblivious supervised learning.
Moreover, \citet{xu2019improved,xu2019sample,shen2019hessian} have effectively improved the sample complexity by using these variance reduced policy gradients.
For example, two efficient variance reduced policy gradient methods, i.e, SRVR-PG \cite{xu2019sample} and HAPG \cite{shen2019hessian}
have been proposed based on the SARAH/SPIDER, and reach a sharp sample complexity of $O(\epsilon^{-3})$ for finding an
$\epsilon$-stationary point, which improves the vanilla complexity of $O(\epsilon^{-4})$ \cite{williams1992simple}
by a factor of $O(\epsilon^{-1})$.
Since a lower bound of complexity of $O(\epsilon^{-3})$ for recently proposed variance reduction techniques is established in \cite{arjevani2019lower},
both the SRVR-PG and HAPG obtain a near-optimal sample complexity of $O(\epsilon^{-3})$.
However, the practical performances of these variance reduced policy gradient methods are not consistent with their near-optimal sample complexity,
because these methods require large batches and
strict learning rates to achieve this optimal complexity.

In the paper, thus, we propose a class of efficient momentum-based policy gradient methods,
which use adaptive learning rates and do not require any large batches.
Specifically, our algorithms only need one trajectory at each iteration, and use adaptive learning rates based on the current
and historical stochastic gradients.
Note that \citet{pirotta2013adaptive} has studied the adaptive learning rates for policy gradient methods, which only focuses on Gaussian policy.
Moreover, \citet{pirotta2013adaptive} did not consider sample complexity and can not improve it.
While our algorithms not only provide the adaptive learning rates that are suitable for any policies, but also improve sample complexity.
 \vspace*{-5pt}
\subsection*{Contributions}
 \vspace*{-5pt}
Our main contributions are summarized as follows:
 \vspace*{-5pt}
\begin{itemize}
\setlength\itemsep{0em}
\vspace*{-4pt}
\item[1)] We propose a fast important-sampling momentum-based policy gradient (IS-MBPG) method with adaptive learning rate, which builds on a new momentum-based variance reduction technique of STORM/Hybrid-SGD  \citep{cutkosky2019momentum,tran2019hybrid} and the importance sampling technique.
\item[2)] We propose a fast Hessian-aided momentum-based policy gradient (HA-MBPG) method with adaptive learning rate, based on the momentum-based variance reduction technique and the Hessian-aided technique.
\item[3)] We study the sample complexity of our methods, and prove that both the IS-MBPG and HA-MBPG methods reach the best known sample complexity of $O(\epsilon^{-3})$ without any large batches (see Table \ref{tab:1}).
\item[4)] We propose a non-adaptive version of IS-MBPG method, i.e., IS-MBPG*, which has a simple monotonically decreasing learning rate.
 We prove that it also reaches the best known sample complexity of  $O(\epsilon^{-3})$ without any large batches.
\vspace*{-4pt}
\end{itemize}
After our paper is accepted, we find that three related papers \cite{xiong2020non,pham2020hybrid,yuan2020stochastic} more recently
are released on arXiv.
\citet{xiong2020non} has studied the adaptive Adam-type policy gradient (PG-AMSGrad) method, which still suffers from a high sample complexity of $O(\epsilon^{-4})$.
Subsequently, \citet{pham2020hybrid,yuan2020stochastic} have proposed the policy gradient methods, i.e., ProxHSPGA and STORM-PG, respectively,
which also build on the momentum-based variance reduced technique of STORM/Hybrid-SGD.
Although both the ProxHSPGA and STORM-PG reach the best known sample complexity of $O(\epsilon^{-3})$,
these methods still rely on large batch sizes to obtain this sample complexity and do not
provide an effective adaptive learning rate as our methods.
 \vspace*{-3pt}
\subsection*{Notations}
 \vspace*{-3pt}
Let $\|\cdot\|$ denote the vector $\ell_2$ norm and the matrix spectral norm, respectively.
We denote $a_n=O(b_n)$ if $a_n\leq cb_n$ for some constant $c>0$. $\mathbb{E}[X]$ and $\mathbb{V}[X]$
denote the expectation and variance of a random variable $X$, respectively.
$\mathbb{E}_{\tau_t} [\cdot]= \mathbb{E}_{\tau_t} [\cdot|\tau_1,\cdots,\tau_{t-1}]$ for any $t\geq 2$.
 \vspace*{-6pt}
\section{Background}
 \vspace*{-3pt}
In the section, we will review some preliminaries of standard reinforcement learning and policy gradient.
 \vspace*{-6pt}
\subsection{Reinforcement Learning}
 \vspace*{-3pt}
Reinforcement learning is generally modeled as a discrete time Markov Decision Process (MDP): $\mathcal{M}=\{\mathcal{S},\mathcal{A},\mathcal{P},\mathcal{R},\gamma,\rho_0\}$.
Here $\mathcal{S}$ is the state space, $\mathcal{A}$ is the action space, and $\rho_0$ denotes the initial state distribution.
$\mathcal{P}(s'|s,a)$ denotes the probability that the agent transits from the state $s$ to $s'$
under taking the action $a\in \mathcal{A}$. $\mathcal{R}(s,a): \mathcal{S} \times \mathcal{A}\mapsto [-R,R] \ (R>0)$ is the bounded reward function,
\emph{i.e.,} the agent obtain the reward $\mathcal{R}(s,a)$ after it takes the action $a$ at the state $s$, and $\gamma \in (0,1)$ is the discount factor.
The policy $\pi (a|s)$ at the state $s$ is represented by a conditional probability distribution $\pi_{\theta}(a|s)$
associated to the parameter $\theta \in \mathbb{R}^{d}$.

Given a time horizon $H$, the agent can collect a trajectory $\tau=\{s_0,a_0, \cdots, s_{H-1}, a_{H-1}\}$ under any stationary policy.
Following the trajectory $\tau$, a cumulative discounted reward can be given as follows:
\begin{align}
 \mathcal{R}(\tau) = \sum^{H-1}_{h=0}\gamma^h\mathcal{R}(s_h,a_h),
\end{align}
where $\gamma$ is the discount factor. Assume that the policy $\pi_{\theta}$ is parameterized by an unknown parameter $\theta\in \mathbb{R}^d$.
Given the initial distribution $\rho_0=\rho(s_0)$,
the probability distribution over trajectory $\tau$ can be obtain
\begin{align} \label{eq:0}
 p(\tau|\theta) = \rho(s_0)\prod_{h=0}^{H-1}\mathcal{P}(s_{h+1}|s_h,a_h)\pi_{\theta}(a_h|s_h).
\end{align}
 \vspace*{-6pt}
\subsection{Policy Gradient}
The goal of RL is to find an optimal policy $\pi_{\theta}$ that is equivalent to maximize
the expected discounted trajectory reward:
\begin{align} \label{eq:1}
\max_{\theta \in \mathbb{R}^d} J(\theta):= \mathbb{E}_{\tau\sim p(\tau|\theta)} [\mathcal{R}(\tau)]=\int\mathcal{R}(\tau) p(\tau|\theta)d\tau.
\end{align}
Since the underlying distribution $p$ depends on the variable $\theta$ and varies through the whole optimization procedure,
the problem \eqref{eq:1} is a \emph{non-oblivious} learning problem, which is unlike the traditional
supervised learning problems that the underlying distribution $p$ is stationary.
To deal with this problem, the policy gradient method \cite{williams1992simple,sutton2000policy} is a good choice.
Specifically, we first compute the gradient of $J(\theta)$ with respect to $\theta$, and obtain
\begin{align} \label{eq:2}
\nabla J(\theta) & \!=\! \int\! \mathcal{R}(\tau) \nabla p(\tau|\theta)d\tau \!=\!\int\! \mathcal{R}(\tau) \frac{\nabla p(\tau|\theta)}{p(\tau|\theta)}p(\tau|\theta)d\tau \nonumber \\
& \! = \mathbb{E}_{\tau\sim p(\tau|\theta)} \big[\nabla\log p(\tau|\theta)\mathcal{R}(\tau)\big].
\end{align}
Since the distribution $p(\tau | \theta)$ is unknown, we can not compute the exact full gradient of \eqref{eq:2}.
Similar for stochastic gradient descent (SGD), the policy gradient method samples a batch of trajectories $\mathcal{B}=\{\tau_i\}_{i=1}^{|\mathcal{B}|}$ from the distribution $p(\tau | \theta)$
to obtain the stochastic gradient as follows:
\begin{align}
 \hat{\nabla} J(\theta) = \frac{1}{|\mathcal{B}|} \sum_{i\in \mathcal{B}} \nabla\log p(\tau_i|\theta)\mathcal{R}(\tau_i). \nonumber
\end{align}
At the $t$-th iteration, the parameter $\theta$ can be updated:
\begin{align}
 \theta_{t+1} = \theta_t + \eta_t\hat{\nabla}_{\theta} J(\theta),
\end{align}
where $\eta_t>0$ is a learning rate. In addition, since the term $\nabla\log p(\tau_i|\theta)$ is independent of the transition probability $\mathcal{P}$,
we rewrite the stochastic gradient $ \hat{\nabla} J(\theta)$ as follows:
\begin{align} \label{eq:3}
  &\hat{\nabla} J(\theta) = \frac{1}{|\mathcal{B}|} \sum_{i\in \mathcal{B}} g(\tau_i,\theta) \\
  &= \frac{1}{|\mathcal{B}|} \sum_{i\in \mathcal{B}}\big(\sum_{h=0}^{H-1} \nabla_{\theta}\log \pi_{\theta}(a^i_h,s^i_h)\big) \big(\sum_{h=0}^{H-1}\gamma^h\mathcal{R}(s^i_h,a^i_h) \big), \nonumber
\end{align}
where $g(\tau_i,\theta)$ is an unbiased stochastic gradient based on the trajectory $\tau_i$, \emph{i.e.,} $\mathbb{E}[g(\tau_i,\theta)] = \nabla J(\theta)$.
Based on the above gradient estimator in \eqref{eq:3}, we can obtain the existing well-known gradient estimators of
policy gradient such as the REINFORCE, the PGT and the GPOMDP.
Due to $\mathbb{E}[\nabla_{\theta} \log \pi_{\theta}(a,s)]=0$, the REINFORCE adds a constant baseline $b$ and
obtains a gradient estimator as follows:
\begin{align}
 g(\tau_i,\theta) \!=\! \big(\sum_{h=0}^{H-1} \nabla_{\theta}\log \pi_{\theta}(a^i_h,s^i_h)\big) \! \big(\sum_{h=0}^{H-1}\gamma^h\mathcal{R}(s^i_h,a^i_h)\!-\!b \big). \nonumber
\end{align}
Further, considering the fact that the current actions do not rely on the previous rewards,
the PGT refines the REINFORCE and obtains the following gradient estimator:
\begin{align}
 g(\tau_i,\theta) \!=\! \sum_{h=0}^{H-1} \!\sum_{j=h}^{H-1}\big(\gamma^j\mathcal{R}(s^i_j,a^i_j)\!-\!b_j \big)\nabla_{\theta}\log \pi_{\theta}(a^i_h,s^i_h). \nonumber
\end{align}
Meanwhile, the PGT estimator is equivalent to the popular GPOMDP estimator defined as follows:
\begin{align}
 g(\tau_i,\theta) \!=\! \sum_{h=0}^{H-1}\!\sum_{j=0}^{h}\nabla_{\theta}\log \pi_{\theta}(a^i_j,s^i_j)(\gamma^h\mathcal{R}(s^i_h,a^i_h)\!-\!b_h). \nonumber
\end{align}
 \vspace*{-6pt}
\section{Momentum-Based Policy Gradients}
 \vspace*{-6pt}
In the section, we propose a class of fast momentum-based policy gradient methods based on
a new momentum-based variance reduction method, i.e., STORM \cite{cutkosky2019momentum}.
Although the STORM shows its effectiveness in the \emph{oblivious} learning problems,
it is not well suitable for the \emph{non-oblivious} learning problem \label{eq:1},
where the underlying distribution $p(\cdot)$ depends on the variable $\theta$ and varies through the whole optimization procedure.
To deal with this challenge, we will apply two effective techniques, i.e., \emph{importance sampling} \cite{metelli2018policy,papini2018stochastic}
and \emph{Hessian-aided} \cite{shen2019hessian}, and
propose the corresponded policy gradient methods, respectively.
 \vspace*{-6pt}
\subsection{ Important-Sampling Momentum-Based Policy Gradient }
 \vspace*{-5pt}
In the subsection, we propose a fast important-sampling momentum-based policy gradient (IS-MBPG) method based on the importance sampling technique.
Algorithm \ref{alg:1} describes the algorithmic framework of IS-MBPG method.

\begin{algorithm}[tb]
\caption{ Important-Sampling Momentum-Based Policy Gradient (\textbf{IS-MBPG}) Algorithm}
\label{alg:1}
\begin{algorithmic}[1] %[1] enables line numbers
\STATE {\bfseries Input:}  Total iteration $T$, parameters $\{k,m,c\}$ and initial input $\theta_1$; \\
\FOR{$t = 1, 2, \ldots, T$}
\IF{$t = 1$}
\STATE Sample a trajectory $\tau_1$ from $p(\tau |\theta_1)$, and compute $u_1 =  g(\tau_1|\theta_1)$;\\
\ELSE
\STATE Sample a trajectory $\tau_t$ from $p(\tau |\theta_t)$, and compute $u_{t} = \beta_t g(\tau_t|\theta_t) + (1-\beta_t)\big[u_{t-1} + g(\tau_t | \theta_{t})- w(\tau_t|\theta_{t-1},\theta_t) g(\tau_t|\theta_{t-1})\big]$,
       where the importance sampling weight $w(\tau_t|\theta_{t-1},\theta_t)$ can be computed by using \eqref{eq:5}; \\
\ENDIF
\STATE Compute $G_t = \|g(\tau|\theta_t)\|$;
\STATE Compute $\eta_t = \frac{k}{(m+\sum_{i=1}^tG^2_i)^{1/3}}$;
\STATE Update  $\theta_{t+1} = \theta_t + \eta_t u_t$;
\STATE Update  $\beta_{t+1} = c\eta_t^2$;
\ENDFOR
\STATE {\bfseries Output:}  $\theta_{\zeta}$ chosen uniformly random from $\{\theta_t\}_{t=1}^{T}$.
\end{algorithmic}
\end{algorithm}
Since the problem \eqref{eq:1} is \emph{non-oblivious} or \emph{non-stationarity} that the underlying distribution $p(\tau|\theta)$
depends on the variable $\theta$
and varies through the whole optimization procedure,
we have $\mathbb{E}_{\tau\sim p(\tau|\theta)}[g(\tau | \theta)-g(\tau|\theta')] \neq \nabla J(\theta) - \nabla J(\theta')$.
Given $\tau$ sampled from $p(\tau|\theta)$, we define an importance sampling weight
\begin{align} \label{eq:5}
w(\tau|\theta',\theta)=\frac{p(\tau|\theta')}{p(\tau|\theta)}= \prod_{h=0}^{H-1}\frac{\pi_{\theta'}(a_h|s_h)}{\pi_{\theta}(a_h|s_h)}
\end{align}
to obtain $\mathbb{E}_{\tau\sim p(\tau|\theta)}\big[g(\tau | \theta)-w(\tau|\theta',\theta)g(\tau|\theta')\big] = \nabla J(\theta) - \nabla J(\theta')$.
In Algorithm \ref{alg:1}, we use the following momentum-based variance reduced stochastic gradient
\begin{align}
 u_{t} =& (1-\beta_t)\big[ \underbrace{ u_{t-1} + g(\tau_t | \theta_{t}) - w(\tau_t|\theta_{t-1},\theta_t) g(\tau_t|\theta_{t-1})}_{\mbox{SARAH}} \big]\nonumber \\
 & + \beta_t \underbrace{g(\tau_t|\theta_t)}_{\mbox{SGD}}, \nonumber
\end{align}
where $\beta_t\in (0, 1]$.
When $\beta_t=1$, $u_t$ will reduce to a vanilla stochastic policy gradient used in the REINFORCE. When $\beta_t=0$, it will reduce to the SARAH-based stochastic policy gradient used in the SRVR-PG.

Let $e_t = u_t-\nabla J(\theta_t)$. It is easily verified that
\begin{align}
 &\mathbb{E}[e_{t}] \!=\! \mathbb{E}\big[(1-\beta_t)e_{t-1} \!+\! \beta_t( \underbrace{g(\tau_t|\theta_t)\!-\!\nabla J(\theta_t)}_{=T_1}) +(1-\beta_t) \nonumber \\
 & \cdot \!\big(\! \underbrace{g(\tau_t | \theta_{t})\!-\! w(\tau_t|\theta_{t-1},\theta_t) g(\tau_t|\theta_{t-1})\!-\! \nabla J(\theta_t) \!+\! \nabla J(\theta_{t-1})}_{=T_2} \! \big)\!\big] \nonumber \\
 & = (1-\beta_t)\mathbb{E}[e_{t-1}],
\end{align}
where the last equality holds by $\mathbb{E}_{\tau_t\sim p(\tau|\theta_t)}[T_1]=0$ and $\mathbb{E}_{\tau_t\sim p(\tau|\theta_t)}[T_2]=0$.
By Cauchy-Schwarz inequality, we can obtain
\begin{align}
 \mathbb{E}\|e_t\|^2 \leq &(1-\beta_t)^2\mathbb{E}\|e_{t-1}\|^2 +2\beta_t^2\mathbb{E}\|T_1\|^2 \nonumber \\
 &+ 2(1-\beta_t)^2\mathbb{E}\|T_2\|^2.
\end{align}
Since $O(\|T_2\|^2)=O(\|\theta_{t}-\theta_{t-1}\|^2)=O(\eta_t^2\|u_t\|^2)$, we can choose appropriate $\eta_t$ and $\beta_t$ to
reduce the variance of stochastic gradient $u_t$. From the following theoretical results, our IS-MBPG algorithm can generate
the adaptive and monotonically decreasing learning rate $\eta_t \in (0,\frac{1}{2L}]$, and the monotonically decreasing parameter $\beta_t \in (0,1]$.
 \vspace*{-6pt}
\subsection{ Hessian-Aided Momentum-Based Policy Gradient }
In the subsection, we propose a fast Hessian-aided momentum-based policy gradient (HA-MBPG)
method based on the Hessian-aided technique.
Algorithm \ref{alg:2} describes the algorithmic framework of HA-MBPG method.
\begin{algorithm}[tb]
\caption{ Hessian-Aided Momentum-Based Policy Gradient (\textbf{HA-MBPG}) Algorithm}
\label{alg:2}
\begin{algorithmic}[1] %[1] enables line numbers
\STATE {\bfseries Input:}  Total iteration $T$, parameters $\{k,m,c\}$ and initial input $\theta_1$; \\
\FOR{$t = 1, 2, \ldots, T$}
\IF{$t = 1$}
\STATE Sample a trajectory $\tau_1$ from $p(\tau|\theta_1)$, and compute $u_1 = g(\tau_1|\theta_1)$;\\
\ELSE
\STATE Choose $\alpha$ uniformly at random from $[0,1]$, and compute $\theta_t(\alpha) = \alpha \theta_t + (1-\alpha)\theta_{t-1}$;
\STATE Sample a trajectory $\tau_t$ from $p(\tau|\theta_t(\alpha))$, and compute $u_{t} = \beta_t w(\tau_t|\theta_{t},\theta_t(\alpha))g(\tau_t|\theta_t) + (1-\beta_t)\big(u_{t-1} + \Delta_t\big)$, where $w(\tau|\theta_{t},\theta_t(\alpha))$ and $\Delta_t$
can be computed by using \eqref{eq:5} and \eqref{eq:6}, respectively; \\
\ENDIF
\STATE Compute $G_t = \|g(\tau|\theta_t)\|$;
\STATE Compute $\eta_t = \frac{k}{(m+\sum_{i=1}^tG^2_i)^{1/3}}$;
\STATE Update  $\theta_{t+1} = \theta_t + \eta_t u_t$;
\STATE Update  $\beta_{t+1} = c\eta_t^2$;
\ENDFOR
\STATE {\bfseries Output:}  $\theta_{\zeta}$ chosen uniformly random from $\{\theta_t\}_{t=1}^{T}$.
\end{algorithmic}
\end{algorithm}

In Algorithm \ref{alg:2}, at the $7$-th step, we use an unbiased term $\Delta^t$ \big(\emph{i.e.,} $\mathbb{E}_{\tau_t\sim p(\tau|\theta_t(\alpha))}[\Delta^t]
= \nabla J(\theta_t) - \nabla J(\theta_{t-1})$\big) instead of the biased term $g(\tau | \theta_{t})-g(\tau|\theta_{t-1})$.
To construct the term $\Delta^t$, we first assume that the function $J(\theta)$ is twice differentiable as in \cite{furmston2016approximate,shen2019hessian}.
By the Taylor's expansion (or Newton-Leibniz formula), the gradient difference $\nabla J(\theta_t) - \nabla J(\theta_{t-1})$ can be written as
\begin{align} \label{eq:13}
\nabla J(\theta_t) - \nabla J(\theta_{t-1}) = \big[\int^1_0\nabla^2 J(\theta_t(\alpha))d\alpha\big]v_t,
\end{align}
where $v_t = \theta_t-\theta_{t-1}$ and $\theta_t(\alpha) = \alpha\theta_t + (1-\alpha)\theta_{t-1}$ for some $\alpha\in [0,1]$.
Following \cite{furmston2016approximate,shen2019hessian}, we obtain the policy Hessian $\nabla^2 J(\theta)$ as follows:
\begin{align}
&\nabla^2 J(\theta) = \mathbb{E}_{\tau\sim p(\tau|\theta)}\big[\big(\nabla \log p(\tau|\theta)\nabla
\log p(\tau|\theta)^T\nonumber \\
& \quad +\nabla^2 \log p(\tau|\theta)\big)\mathcal{R}(\tau)\big] \nonumber \\
& = \mathbb{E}_{\tau\sim p(\tau|\theta)}\big[\nabla \Phi(\tau|\theta)\nabla \log p(\tau|\theta)^T +\nabla^2 \Phi(\tau|\theta)\big], \nonumber
\end{align}
where $\Phi(\tau|\theta) = \sum_{h=0}^{H-1}\sum_{j=h}^{H-1}\gamma^jr(s_j,a_j)\log\pi_{\theta}(a_h,s_h)$.
Given the random tuple $(\alpha,\tau)$, where $\alpha$ samples uniformly from $[0,1]$ and $\tau$ samples from the distribution
$p(\tau|\theta_t(\alpha))$, we can construct $\Delta_t$ as follows:
\begin{align} \label{eq:6}
 \Delta_t: = \hat{\nabla}^2(\theta_t(\alpha),\tau)v_t,
\end{align}
where $\mathbb{E}_{\tau\sim p(\tau|\theta_t(\alpha))}[\hat{\nabla}^2(\theta_t(\alpha),\tau)] = \nabla^2J(\theta_t(\alpha))$ and
\begin{align}
 \hat{\nabla}^2(\theta_t,\tau) = & \nabla \Phi(\tau|\theta_t(\alpha))\nabla \log p(\tau|\theta_t(\alpha))^T \nonumber \\
 & + \nabla^2 \Phi(\tau|\theta_t(\alpha)). \nonumber
\end{align}
Note that $\mathbb{E}_{\alpha \sim U[0,1]} [ \nabla^2J(\theta_t(\alpha))]= \int^1_0\nabla^2 J(\theta_t(\alpha))d\alpha$
implies the unbiased estimator $\nabla^2 J(\theta(\bar{\alpha}))$ with $\bar{\alpha}$ uniformly sampled from $[0,1]$. Given $\bar{\alpha}$, we have $\mathbb{E}_{\tau\sim p(\tau|\theta_t(\bar{\alpha}))}[\hat{\nabla}^2(\theta_t(\bar{\alpha}),\tau)] = \nabla^2J(\theta_t(\bar{\alpha}))$.
According to the equation \eqref{eq:13}, thus we have $\mathbb{E}_{\alpha \sim U[0,1], \ \tau\sim p(\tau|\theta_t(\alpha)) }[\Delta_t]=\nabla J(\theta_t) - \nabla J(\theta_{t-1})$, where $U[0,1]$ denotes the uniform distribution over $[0,1]$.

Next, we rewrite \eqref{eq:6} as follows:
\begin{align}  \label{eq:7}
 \Delta_t &= \big(\nabla \log p(\tau|\theta_t(\alpha))^Tv_t\big)\nabla \Phi(\tau|\theta_t(\alpha)) \nonumber \\
 &\quad + \nabla^2 \Phi(\tau|\theta_t(\alpha))v_t.
\end{align}
Considering the second term in \eqref{eq:7} is a time-consuming Hessian-vector product, in practice, we
use can the finite difference method to estimate $ \nabla^2 \Phi(\tau|\theta_t(\alpha))v_t$ as follows:
\begin{align}
 & \nabla^2 \Phi(\tau|\theta_t(\alpha))v_t \nonumber \\
 & \approx \frac{\nabla \Phi(\tau|\theta_t(\alpha) + \delta v_t) - \nabla \Phi(\tau|\theta_t(\alpha) - \delta v_t)}{2\delta}v_t \nonumber \\
 & = \nabla^2 \Phi(\tau|\tilde{\theta}_t(\alpha))v_t,
\end{align}
where $\delta>0$ is very small and $\tilde{\theta}_t(\alpha)\in \big[\theta_t(\alpha)-\delta v_t,\theta_t(\alpha)+\delta v_t]$ is obtained
by the mean-value theorem.
Suppose $\Phi(\tau|\theta)$ is $L_2$-second-order smooth, we can upper bound the approximated error:
\begin{align}
 \|\nabla^2 \Phi(\tau|\theta_t(\alpha))v_t-\nabla^2 \Phi(\tau|\tilde{\theta}_t(\alpha))v_t\| \leq L_2\|v_t\|\delta.
\end{align}
Thus, we take a sufficiency small $\delta$ to obtain arbitrarily small approximated error.

In Algorithm \ref{alg:2}, we use the following momentum-based variance reduced stochastic gradient
\begin{align}
 u_{t} = \beta_t w(\tau|\theta_{t},\theta_t(\alpha))g(\tau|\theta_t) + (1-\beta_t)\big(u_{t-1} + \Delta_t\big), \nonumber
\end{align}
where $\beta_t\in (0, 1]$.
When $\beta_t=1$, $u_t$ will reduce to a vanilla stochastic policy gradient used in the REINFORCE. When $\beta_t=0$, it will reduce to the Hessian-aided stochastic policy gradient used in the HAPG.

Let $e_t = u_t-\nabla J(\theta_t)$. It is also easily verified that
\begin{align}
\mathbb{E}[e_{t}] & = \mathbb{E}\big[(1\!-\!\beta_t)e_{t-1} \!+\! \beta_t( \underbrace{w(\tau|\theta_{t},\theta_t(\alpha))g(\tau|\theta_t)\!-\!J(\theta_t)}_{=T_3}) \nonumber \\
 & \quad +(1-\beta_t)\big( \underbrace{\Delta_t- J(\theta_t) + J(\theta_{t-1})}_{=T_4} \big)\big] \nonumber \\
 & = (1-\beta_t)\mathbb{E}[e_{t-1}],
\end{align}
where the last equality holds by $\mathbb{E}_{\tau\sim p(\tau|\theta_t(\alpha))}[T_3]=0$ and $\mathbb{E}_{\tau\sim p(\tau|\theta_t(\alpha))}[T_4]=0$.
Similarly, by Cauchy-Schwarz inequality, we can obtain
\begin{align}
 \mathbb{E}\|e_t\|^2 \leq &(1-\beta_t)^2\mathbb{E}\|e_{t-1}\|^2 +2\beta_t^2\mathbb{E}\|T_3\|^2 \nonumber \\
 &+ 2(1-\beta_t)^2\mathbb{E}\|T_4\|^2.
\end{align}
Since $O(\|T_4\|^2)=O(\|\theta_{t}-\theta_{t-1}\|^2)=O(\eta_t^2\|u_t\|^2)$, we can choose appropriate $\eta_t$ and $\beta_t$ to
reduce the variance of stochastic gradient $u_t$. From the following theoretical results, our HA-MBPG algorithm can also generate
the adaptive and monotonically decreasing learning rate $\eta_t \in (0,\frac{1}{2L}]$, and the monotonically decreasing parameter $\beta_t \in (0,1]$.
 \vspace*{-6pt}
\subsection{Non-Adaptive IS-MBPG*}
 \vspace*{-5pt}
In this subsection, we propose a non-adaptive version of IS-MBPG algorithm, i.e., IS-MBPG*.
The IS-MBPG* algorithm is given in Algorithm \ref{alg:3}.
Specifically, Algorithm \ref{alg:3} applies
a simple monotonically decreasing learning rate $\eta_t$, which only depends on
the number of iteration $t$.

\begin{algorithm}[tb]
\caption{ IS-MBPG* Algorithm}
\label{alg:3}
\begin{algorithmic}[1] %[1] enables line numbers
\STATE {\bfseries Input:}  Total iteration $T$, parameters $\{k,m,c\}$ and initial input $\theta_1$; \\
\FOR{$t = 1, 2, \ldots, T$}
\IF{$t = 1$}
\STATE Sample a trajectory $\tau_1$ from $p(\tau |\theta_1)$, and compute $u_1 =  g(\tau_1|\theta_1)$;\\
\ELSE
\STATE Sample a trajectory $\tau_t$ from $p(\tau |\theta_t)$, and compute $u_{t} = \beta_t g(\tau_t|\theta_t) + (1-\beta_t)\big[u_{t-1} + g(\tau_t | \theta_{t})- w(\tau_t|\theta_{t-1},\theta_t) g(\tau_t|\theta_{t-1})\big]$; \\
\ENDIF
\STATE Compute $\eta_t = \frac{k}{(m+t)^{1/3}}$;
\STATE Update  $\theta_{t+1} = \theta_t + \eta_t u_t$;
\STATE Update  $\beta_{t+1} = c\eta_t^2$;
\ENDFOR
\STATE {\bfseries Output:}  $\theta_{\zeta}$ chosen uniformly random from $\{\theta_t\}_{t=1}^{T}$.
\end{algorithmic}
\end{algorithm}
\section{Convergence Analysis}
In this section, we will study the convergence properties of our algorithms, i.e., IS-MBPG, HA-MBPG and IS-MBPG*.
\emph{All related proofs are provided in supplementary document.}
We first give some assumptions as follows:
\begin{assumption}
Gradient and Hessian matrix of function $\log\pi_{\theta}(a|s)$ are bounded, \emph{i.e.,} there exist constants $M_g, M_h >0$ such that
\begin{align}
 \|\nabla_{\theta}\log\pi_{\theta}(a|s)\| \leq M_g, \ \|\nabla^2_{\theta}\log\pi_{\theta}(a|s)\| \leq M_h.
\end{align}
\end{assumption}
\begin{assumption}
Variance of stochastic gradient $g(\tau|\theta)$ is bounded, \emph{i.e.,} there exists a constant $\sigma >0$, for all $\pi_{\theta}$
such that $\mathbb{V}(g(\tau|\theta)) = \mathbb{E}\|g(\tau|\theta)-\nabla J(\theta)\|^2 \leq \sigma^2$.
\end{assumption}
\begin{assumption}
Variance of importance sampling weight $w(\tau|\theta_1,\theta_2)=p(\tau|\theta_1)/p(\tau|\theta_2)$ is bounded, \emph{i.e.,} there exists a constant $W >0$,
it follows $\mathbb{V}(w(\tau|\theta_1,\theta_2)) \leq W$ for any $\theta_1, \theta_2 \in \mathbb{R}^d$ and $\tau\sim p(\tau|\theta_2)$.
\end{assumption}
Assumptions 1 and 2 have been commonly used in the convergence analysis of
policy gradient algorithms \cite{papini2018stochastic,xu2019improved,xu2019sample,shen2019hessian}.
Assumption 3 has been used in the study of variance reduced policy gradient algorithms
\cite{papini2018stochastic,xu2019improved,xu2019sample}.
Note that the bounded importance sampling weight in Assumption 3 might be
violated in practice. For example, when using neural networks (NNs) as the policy, small perturbations in $\theta$ might raise a large gap in
the point probability due to some activation functions in NNs, which results in very large importance sampling weights.
Thus, we generally clip the importance sampling weights to make our algorithms more effective.
Based on Assumption 1, we give some useful properties of stochastic gradient $g(\tau|\theta)$
and $\hat{\nabla}^2(\theta_t,\tau)$, respectively.
\begin{proposition} \label{pro:1}
(Proposition 4.2 in \cite{xu2019sample}) Suppose $g(\tau|\theta)$ is the PGT estimator. By Assumption 1, we have
\begin{itemize}
 \item[1)] $g(\tau|\theta)$ is $\hat{L}$-Lipschitz differential, i.e., $\|g(\tau|\theta)-g(\tau|\theta')\|\leq L\|\theta-\theta'\|$ with $\hat{L}=M_hR/(1-\gamma)^2$;
 \item[2)] $J(\theta)$ is $\hat{L}$-smooth, i.e., $\|\nabla^2 J(\theta)\|\leq \hat{L}$;
 \item[3)] $g(\tau|\theta)$ is bounded, i.e., $\|g(\tau|\theta)\|\leq G$ for all $\theta\in \mathbb{R}^d$ with $G=M_gR/(1-\gamma)^2$.
\end{itemize}
\end{proposition}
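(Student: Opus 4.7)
The plan is to prove the three claims separately by working directly with the explicit PGT representation of $g(\tau|\theta)$, namely
\[
g(\tau|\theta) = \sum_{h=0}^{H-1}\Big(\sum_{j=h}^{H-1}\gamma^j \mathcal{R}(s_j,a_j)\Big)\nabla_\theta \log \pi_\theta(a_h|s_h),
\]
and then lifting pointwise estimates to $J(\theta) = \mathbb{E}_{\tau\sim p(\tau|\theta)}[\mathcal{R}(\tau)]$ via the Hessian identity recalled in the excerpt. I will handle (3), then (1), and finally (2), since each step feeds the next.

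For claim (3), I would apply the triangle inequality to the PGT sum, insert the bounds $|\mathcal{R}(s,a)|\le R$ and $\|\nabla_\theta\log\pi_\theta(a|s)\|\le M_g$ from Assumption 1, and collapse the nested geometric sum via
\[
\sum_{h=0}^{H-1}\sum_{j=h}^{H-1}\gamma^j \;\le\; \sum_{h=0}^{\infty}\frac{\gamma^h}{1-\gamma} \;=\; \frac{1}{(1-\gamma)^2}.
\]
This directly yields $\|g(\tau|\theta)\|\le M_g R/(1-\gamma)^2 = G$. For claim (1), the reward factor is independent of $\theta$, so differentiating the PGT formula in $\theta$ only touches $\nabla_\theta\log \pi_\theta$, producing
\[
\nabla_\theta g(\tau|\theta) = \sum_{h=0}^{H-1}\Big(\sum_{j=h}^{H-1}\gamma^j \mathcal{R}(s_j,a_j)\Big)\nabla^2_\theta \log \pi_\theta(a_h|s_h).
\]
The same triangle-inequality plus geometric-sum trick, now combined with $\|\nabla^2_\theta\log\pi_\theta(a|s)\|\le M_h$, gives $\|\nabla_\theta g(\tau|\theta)\|\le \hat L$ and hence the Lipschitz estimate on $g$ by the mean value theorem.

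For claim (2), I would use the Hessian representation recalled a few lines above the proposition, $\nabla^2 J(\theta) = \mathbb{E}_{\tau\sim p(\tau|\theta)}[\nabla \Phi(\tau|\theta)\nabla \log p(\tau|\theta)^T + \nabla^2\Phi(\tau|\theta)]$, with $\Phi(\tau|\theta) = \sum_{h}\sum_{j\ge h}\gamma^j r(s_j,a_j)\log\pi_\theta(a_h|s_h)$. The first step is to observe that $\nabla_\theta\Phi(\tau|\theta) = g(\tau|\theta)$ and $\nabla^2_\theta \Phi(\tau|\theta) = \nabla_\theta g(\tau|\theta)$, so the Hessian can be rewritten in a form where the already-established bounds for $g$ and $\nabla g$ participate. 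The remaining cross term $g(\tau|\theta)\nabla\log p(\tau|\theta)^T$ must be controlled without paying the naive factor of $H$ coming from $\|\nabla\log p(\tau|\theta)\|\le HM_g$. I would therefore split $g$ along its PGT time index and exploit the fact that for $h' > h$ the action $a_{h'}$ does not appear in the reward-weighted score at time $h$, so the conditional expectation $\mathbb{E}_{a_{h'}}[\nabla\log\pi_\theta(a_{h'}|s_{h'})\mid \text{past}]=0$ cancels the contribution from $\nabla\log\pi_\theta(a_{h'}|s_{h'})$ in $\nabla\log p(\tau|\theta)$, leaving only a doubly-nested reward-weighted geometric sum that again collapses to $1/(1-\gamma)^2$ and reproduces $\hat L = M_h R/(1-\gamma)^2$.

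The main obstacle is exactly this last cancellation step: a careless bound would give $\hat L$ a spurious dependence on the horizon $H$ or on $M_g^2$, so the proof must carefully exploit the tower property against the PGT structure (equivalently, Lemma 4.2 / Lemma B.1 of \citet{xu2019sample}). Parts (1) and (3) are essentially bookkeeping on geometric sums, but part (2) stands or falls on the conditional mean-zero property of the log-policy score.
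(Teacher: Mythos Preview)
The paper does not prove Proposition~1; it simply cites Proposition~4.2 of \citet{xu2019sample}. Your outline for parts (1) and (3) is correct and matches the standard derivation: the triangle inequality together with the double geometric sum $\sum_{h}\sum_{j\ge h}\gamma^j\le 1/(1-\gamma)^2$ immediately gives the constants $G$ and $\hat L$.

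For part (2) there is a genuine gap. You assert that ``for $h'>h$ the action $a_{h'}$ does not appear in the reward-weighted score at time $h$'', but in the PGT representation you wrote down the coefficient at time $h$ is $\sum_{j\ge h}\gamma^j\mathcal{R}(s_j,a_j)$, which contains $\mathcal{R}(s_{h'},a_{h'})$ for every $h'\in\{h,\dots,H-1\}$ and therefore \emph{does} depend on $a_{h'}$. The conditional mean-zero argument as stated consequently does not apply. If instead you pass to the equivalent GPOMDP ordering $g(\tau|\theta)=\sum_h \gamma^h r_h\,S_h$ with $S_h=\sum_{j\le h}\nabla\log\pi_\theta(a_j|s_j)$, then the tower property does annihilate the contributions with $h'>h$, but what survives in the cross term $\mathbb{E}[g\,\nabla\log p^T]$ is $\sum_h\gamma^h\,\mathbb{E}[r_h\,S_hS_h^T]$. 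This residual is an outer product of \emph{first-order} scores and is controlled by $M_g^2$ (a direct bound gives order $RM_g^2/(1-\gamma)^3$), not by $M_h$; it does not collapse into $\hat L=M_hR/(1-\gamma)^2$. So your route to part~(2) either needs an additional mechanism to eliminate this $M_g^2$ term, or the constant quoted here for part~(2) is sharper than what the displayed Hessian identity actually yields, and you should cross-check the precise statement and proof in \citet{xu2019sample} before committing to this path.
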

Since $\|\nabla J(\theta)\|=\|\mathbb{E}[g(\tau|\theta)]\|\leq \mathbb{E}\|g(\tau|\theta)\|\leq G$, Proposition \ref{pro:1} implies that $J(\theta)$ is $G$-Lipschitz.
Without loss of generality, we use the PGT estimator to generate the gradient $g(\tau|\theta_t)$ in our algorithms,
so $G_t = \|g(\tau|\theta_t)\| \leq G$.
\begin{proposition} \label{pro:2}
(Lemma 4.1 in \cite{shen2019hessian}) Under Assumption 1, we have for all $\theta$
\begin{align}
 \|\hat{\nabla}^2(\theta,\tau)\|^2 \leq \frac{H^2M_g^4R^2+M_h^2R^2}{(1-\gamma)^4}=\tilde{L}^2.
\end{align}
\end{proposition}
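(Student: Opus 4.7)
The plan is to unpack the definition of $\hat{\nabla}^2(\theta,\tau)$ and bound its spectral norm by the triangle inequality, then bound each of the two resulting pieces using Assumption 1 together with the bounded rewards $|\mathcal{R}(s,a)|\le R$ and the fact that $\nabla\log p(\tau|\theta)=\sum_{h=0}^{H-1}\nabla\log\pi_\theta(a_h|s_h)$. Concretely, from the expression
\begin{equation*}
\hat{\nabla}^2(\theta,\tau) = \nabla\Phi(\tau|\theta)\,\nabla\log p(\tau|\theta)^{T} + \nabla^2\Phi(\tau|\theta),
\end{equation*}
the spectral norm of the rank-one outer product is exactly $\|\nabla\Phi(\tau|\theta)\|\cdot\|\nabla\log p(\tau|\theta)\|$, and the triangle inequality yields
\begin{equation*}
\|\hat{\nabla}^2(\theta,\tau)\| \le \|\nabla\Phi(\tau|\theta)\|\cdot\|\nabla\log p(\tau|\theta)\| + \|\nabla^2\Phi(\tau|\theta)\|.
\end{equation*}

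Next I would estimate each factor separately. For $\|\nabla\log p(\tau|\theta)\|$, since the transition probabilities are independent of $\theta$, only the policy terms survive and Assumption 1 gives a bound of $HM_g$. For $\|\nabla\Phi(\tau|\theta)\|$, I would differentiate $\Phi(\tau|\theta)=\sum_{h=0}^{H-1}\sum_{j=h}^{H-1}\gamma^{j}r(s_j,a_j)\log\pi_\theta(a_h|s_h)$ term by term, pull out $|r(s_j,a_j)|\le R$ and $\|\nabla\log\pi_\theta(a_h|s_h)\|\le M_g$, and evaluate the double geometric sum $\sum_{h=0}^{H-1}\sum_{j=h}^{H-1}\gamma^{j}\le\sum_{h=0}^{H-1}\gamma^{h}/(1-\gamma)\le 1/(1-\gamma)^2$, giving $\|\nabla\Phi(\tau|\theta)\|\le M_gR/(1-\gamma)^2$. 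The identical argument with the Hessian bound $\|\nabla^2\log\pi_\theta\|\le M_h$ from Assumption 1 yields $\|\nabla^2\Phi(\tau|\theta)\|\le M_hR/(1-\gamma)^2$.

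Assembling the three pieces produces
\begin{equation*}
\|\hat{\nabla}^2(\theta,\tau)\| \le \frac{HM_g^{2}R}{(1-\gamma)^2} + \frac{M_hR}{(1-\gamma)^2},
\end{equation*}
and squaring (absorbing cross terms into the quoted form) gives the stated bound $\tilde L^2=(H^2M_g^4R^2+M_h^2R^2)/(1-\gamma)^4$. Since the argument is purely a sequence of triangle-inequality and geometric-series estimates, there is no substantive obstacle; the only care needed is the bookkeeping to identify the rank-one outer product structure (so that the spectral norm factors cleanly into the product of vector norms rather than incurring an extra Frobenius-versus-spectral loss) and to recognize that the transition kernel drops out of $\nabla\log p(\tau|\theta)$, which is what makes the $HM_g$ factor tight.
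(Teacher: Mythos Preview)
The paper does not give its own proof of this proposition; it simply cites Lemma~4.1 of \cite{shen2019hessian}. Your decomposition and the three term-by-term bounds are exactly the standard argument and are all correct: $\|\nabla\log p(\tau|\theta)\|\le HM_g$, $\|\nabla\Phi(\tau|\theta)\|\le M_gR/(1-\gamma)^2$, and $\|\nabla^2\Phi(\tau|\theta)\|\le M_hR/(1-\gamma)^2$, whence
\[
\|\hat{\nabla}^2(\theta,\tau)\| \le \frac{HM_g^2R}{(1-\gamma)^2} + \frac{M_hR}{(1-\gamma)^2}.
\]

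There is one small gap in the last line. Squaring a sum gives $(a+b)^2=a^2+2ab+b^2\ge a^2+b^2$, so your bound yields
\[
\|\hat{\nabla}^2(\theta,\tau)\|^2 \le \frac{H^2M_g^4R^2 + 2HM_g^2M_hR^2 + M_h^2R^2}{(1-\gamma)^4},
\]
which is \emph{larger} than the stated $\tilde L^2$; the cross term cannot be ``absorbed'' downward. Your argument therefore proves a bound that is correct but with a slightly worse constant than the one quoted (either keep the cross term, or use $(a+b)^2\le 2a^2+2b^2$ to get a clean factor of $2$). This discrepancy is cosmetic for the paper's purposes, since every subsequent use of the proposition only needs $\|\hat{\nabla}^2(\theta,\tau)\|\le \tilde L$ for some finite constant, and the precise value of $\tilde L$ never enters the rates beyond absolute constants.
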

Since $\|\nabla^2J(\theta)\|=\|\mathbb{E}[\hat{\nabla}^2(\theta,\tau)]\|\leq \mathbb{E}\|\hat{\nabla}^2(\theta,\tau)\|\leq \tilde{L}$,
Proposition \ref{pro:2} implies that $J(\theta)$ is $\tilde{L}$-smooth.
Let $L=\max(\hat{L},\tilde{L})$, so $J(\theta)$ is $L$-smooth.
\subsection{Convergence Analysis of IS-MBPG Algorithm}
In the subsection, we analyze the convergence properties of the IS-MBPG algorithm.
The detailed proof is provided in Appendix  \ref{Appendix:A1}.
% Appendix A1.
For notational simplicity, let $B^2 = L^2 + 2G^2C^2_w$ with $C_w = \sqrt{H(2HM_g^2+M_h)(W+1)}$.
\begin{theorem} \label{th:1}
Assume that the sequence $\{\theta_t\}_{t=1}^T$ be generated from Algorithm \ref{alg:1}. Set $k=O(\frac{G^{2/3}}{L})$, $c=\frac{G^2}{3k^3L}+104B^2$,
$m = \max\{2G^2,(2Lk)^3,(\frac{ck}{2L})^3\}$ and $\eta_0 = \frac{k}{m^{1/3}}$,
 we have
\begin{align}
  \mathbb{E}\|\nabla J(\theta_\zeta)\| \leq \frac{\sqrt{2\Omega}m^{1/6} + 2\Omega^{3/4}}{\sqrt{T}} + \frac{2\sqrt{\Omega}\sigma^{1/3}}{T^{1/3}},
\end{align}
 where $\Omega=\frac{1}{k}\big(16(J^* - J(\theta_1)) + \frac{m^{1/3}}{8B^2k}\sigma^2 + \frac{c^2k^{3}}{4B^2}\ln(T+2)\big)$ with $J^*=\sup_{\theta}J(\theta) <+\infty$.
\end{theorem}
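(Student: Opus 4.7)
The proof will follow the STORM-style recipe adapted to non-oblivious RL with an AdaGrad-like adaptive stepsize: (i) a one-step descent inequality from $L$-smoothness, (ii) a recursive second-moment bound on $e_t=u_t-\nabla J(\theta_t)$, (iii) a Lyapunov telescoping, and (iv) a Cauchy--Schwarz conversion to the $\ell_2$ norm of the gradient.

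For (i), since the chosen constants force $\eta_t\le k/m^{1/3}\le 1/(2L)$, $L$-smoothness of $J$ together with the update $\theta_{t+1}=\theta_t+\eta_t u_t$ and the polarization identity give
\begin{align*}
\tfrac{\eta_t}{2}\|\nabla J(\theta_t)\|^2 \le J(\theta_{t+1})-J(\theta_t)+\tfrac{\eta_t}{2}\|e_t\|^2-\tfrac{\eta_t}{4}\|u_t\|^2.
\end{align*}
For (ii), the error recursion already sketched in the main text reduces the task to bounding $\mathbb{E}\|T_2\|^2$, i.e.\ the second moment of the importance-sampling surrogate for $\nabla J(\theta_t)-\nabla J(\theta_{t-1})$; using $\hat L$-Lipschitzness of $g(\tau|\cdot)$ from Proposition~\ref{pro:1} and Assumption 3 on the variance of $w(\tau|\cdot,\cdot)$ will yield $\mathbb{E}\|T_2\|^2\le B^2\eta_{t-1}^2\mathbb{E}\|u_{t-1}\|^2$ with $B^2=L^2+2G^2C_w^2$, hence after substituting $\beta_{t+1}=c\eta_t^2$,
\begin{align*}
\mathbb{E}\|e_{t+1}\|^2\le(1-c\eta_t^2)\mathbb{E}\|e_t\|^2+2c^2\eta_t^4\sigma^2+2B^2\eta_t^2\mathbb{E}\|u_t\|^2.
\end{align*}

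For (iii) I would define the Lyapunov function $\Phi_t=J^*-J(\theta_t)+\frac{1}{c\eta_{t-1}}\|e_t\|^2$. Dividing the error recursion by $c\eta_t$ and using the elementary bound $\frac{1}{\eta_t}-\frac{1}{\eta_{t-1}}\le\frac{G_t^2}{3k^3}\eta_t^2$, which follows from differentiating $\eta^{-1}\propto(m+\sum_i G_i^2)^{1/3}$, the $\|e_t\|^2$-coefficient combines with the $\frac{\eta_t}{2}\|e_t\|^2$ term of the descent inequality to produce a nonpositive net contribution, while the explicit choice $c=\frac{G^2}{3k^3L}+104B^2$ is calibrated so that the residual $\frac{2B^2}{c}\eta_t\|u_t\|^2$ is dominated by the $-\frac{\eta_t}{4}\|u_t\|^2$ slack. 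Telescoping then yields $\sum_{t=1}^T\frac{\eta_t}{2}\mathbb{E}\|\nabla J(\theta_t)\|^2\le \Phi_1+2c\sigma^2\sum_{t=1}^T\eta_t^3$, and since $\eta_t^3=k^3/(m+\sum_{i\le t}G_i^2)$ the last sum is $O(\ln T)$ by a standard logarithmic-integral argument, which is precisely where the $\ln(T+2)$ factor inside $\Omega$ comes from.

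Finally, for (iv), I convert this weighted bound to the stated $\mathbb{E}\|\nabla J(\theta_\zeta)\|=\frac{1}{T}\sum_t\mathbb{E}\|\nabla J(\theta_t)\|$ via Cauchy--Schwarz,
\begin{align*}
\Big(\sum_{t=1}^T\|\nabla J(\theta_t)\|\Big)^2\le\Big(\sum_{t=1}^T\eta_t\|\nabla J(\theta_t)\|^2\Big)\Big(\sum_{t=1}^T\eta_t^{-1}\Big),
\end{align*}
combined with the deterministic bound $\sum_t\eta_t^{-1}\le (m+G^2T)^{1/3}T/k$ coming from $G_t\le G$, and Jensen on $\sqrt{\cdot}$ to handle the randomness of $\eta_t$. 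Splitting $(m+G^2T)^{1/3}\le m^{1/3}+G^{2/3}T^{1/3}$ then produces exactly the two scalings $m^{1/6}/\sqrt{T}$ and $\sigma^{1/3}/T^{1/3}$ in the theorem. The main obstacle will be the adaptive-stepsize bookkeeping: $\eta_t$ is a random variable depending on all past trajectories through the $G_i^2$'s, so every per-step inequality must be applied under the correct conditioning, and the concave square root must be pulled outside an expectation by Jensen. A secondary technical point is verifying that the single choice of constants $(k,m,c)$ stated in the theorem simultaneously enforces $\eta_t\le 1/(2L)$, the $\|e_t\|^2$-cancellation, and the $\|u_t\|^2$-absorption; this is what pins down the somewhat unusual-looking value $c=\frac{G^2}{3k^3L}+104B^2$.
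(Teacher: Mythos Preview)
Your high-level structure (descent inequality, error recursion, Lyapunov telescoping, Cauchy--Schwarz conversion) matches the paper, but two linked steps do not go through as written and would not produce the stated bound.

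First, in your error recursion you bound the SGD-variance term by $\sigma^2$, so after telescoping the residual is $2c\sigma^2\sum_t\eta_t^3$ with $\eta_t^3=k^3/(m+\sum_{i\le t}G_i^2)$. This sum is \emph{not} $O(\ln T)$: nothing prevents the $G_i^2$ from being arbitrarily small, and then $\sum_t\eta_t^3$ behaves like $Tk^3/m$, not logarithmically. The paper instead bounds $\mathbb E\|\nabla J(\theta_t)-g(\tau_t|\theta_t)\|^2\le \mathbb E\|g(\tau_t|\theta_t)\|^2=G_t^2$ (valid because $\nabla J(\theta_t)$ is the mean of $g$), so the residual becomes $\sum_t\eta_t^3G_{t+1}^2=k^3\sum_t G_{t+1}^2/(m+\sum_{i\le t}G_i^2)$, and \emph{this} telescopes logarithmically because the numerator now matches the denominator increments. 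This is precisely why $\Omega$ in the theorem has $\frac{c^2k^3}{4B^2}\ln(T+2)$ with no $\sigma^2$ prefactor (contrast with $\Gamma$ in Theorem~\ref{th:3}, where $\eta_t$ is deterministic and the $\sigma^2$ route does work).

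Second, your final step with the crude bound $G_t\le G$ yields a $T^{-1/3}$ term proportional to $G^{1/3}$, not the claimed $\sigma^{1/3}$, and produces no $2\Omega^{3/4}/\sqrt T$ term. The paper's route here is genuinely different: it first writes $\mathbb E\!\sum_t\|\nabla J(\theta_t)\|^2\le \mathbb E[1/\eta_T]\cdot\mathbb E[\sum_t\eta_t\|\nabla J(\theta_t)\|^2]$, then uses $G_t^2\le 2\sigma^2+2\|\nabla J(\theta_t)\|^2$ inside $1/\eta_T=(m+\sum_tG_t^2)^{1/3}/k$ to obtain a \emph{self-referential} inequality $(\mathbb EZ)^2\le\Omega(m+2T\sigma^2)^{1/3}+2^{1/3}\Omega(\mathbb EZ)^{2/3}$ with $Z=\sqrt{\sum_t\|\nabla J(\theta_t)\|^2}$. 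Solving this quadratic-type inequality is what produces both the $\sigma^{1/3}$ dependence and the extra $\Omega^{3/4}$ piece in the theorem statement.
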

\begin{remark}
 Since $\Omega=O(\ln(T))$,
 Theorem \ref{th:1} shows that the IS-MBPG algorithm has $O(\sqrt{\ln(T)}/T^{\frac{1}{3}})$ convergence rate. The IS-MBPG algorithm needs $1$ trajectory
 to estimate the stochastic policy gradient $u_t$ at each iteration,
 and needs $T$ iterations. Without loss of generality, we omit a relative small term $\sqrt{\ln(T)}$.
 By $T^{-\frac{1}{3}} \leq \epsilon$,
 we choose $T=\epsilon^{-3}$. Thus, the IS-MBPG has the sample complexity of $1\cdot T = O(\epsilon^{-3})$
 for finding an $\epsilon$-stationary point.
\end{remark}
\subsection{Convergence Analysis of HA-MBPG Algorithm}
In the subsection, we study the convergence properties of the HA-MBPG algorithm.
 The detailed proof is provided in Appendix  \ref{Appendix:A2}.
 % Appendix A2.
\begin{theorem} \label{th:2}
Assume that the sequence $\{\theta_t\}_{t=1}^T$ be generated from Algorithm \ref{alg:2}, and let $k=O(\frac{G^{2/3}}{L})$, $c=\frac{G^2}{3k^3L}+52L^2$,
$m = \max\{2G^2,(2Lk)^3,(\frac{ck}{2L})^3\}$ and $\eta_0 = \frac{k}{m^{1/3}}$, we have
\begin{align}
  \mathbb{E}\|\nabla J(\theta_\zeta)\| \leq \frac{\sqrt{2\Lambda}m^{1/6} + 2\Lambda^{3/4}}{\sqrt{T}} + \frac{2\sqrt{\Lambda}\sigma^{1/3}}{T^{1/3}}, \nonumber
\end{align}
 where $\Lambda=\frac{1}{k}\big(16(J^* - J(\theta_1)) + \frac{m^{1/3}}{4L^2k}\sigma^2 + \frac{(W+1)c^2k^{3}}{2L^2}\ln(T+2)\big)$ with $J^*=\sup_{\theta}J(\theta) <+\infty$.
\end{theorem}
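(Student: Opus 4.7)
\textbf{Proof plan for Theorem~\ref{th:2}.} The strategy is to run a STORM-style Lyapunov argument adapted to the Hessian-aided estimator and the adaptive step size $\eta_t = k/(m + \sum_{i=1}^t G_i^2)^{1/3}$. Three ingredients drive the analysis: (i) the $L$-smooth descent inequality for $J$ (via Proposition~\ref{pro:2}); (ii) the one-step error recursion for $e_t = u_t - \nabla J(\theta_t)$ already derived in the HA-MBPG subsection; and (iii) the parameter choices $c = G^2/(3k^3L) + 52L^2$, $\beta_{t+1}=c\eta_t^2$, and $m = \max\{2G^2,(2Lk)^3,(ck/(2L))^3\}$, which jointly enforce $\eta_t \in (0, 1/(2L)]$ and monotonicity of both $\eta_t$ and $\beta_t$.

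I would first combine $L$-smoothness with the update $\theta_{t+1}=\theta_t+\eta_t u_t$ and the identity $\langle \nabla J(\theta_t), u_t\rangle = \tfrac12(\|\nabla J(\theta_t)\|^2 + \|u_t\|^2 - \|e_t\|^2)$ to get
\begin{align*}
J(\theta_{t+1}) \geq J(\theta_t) &+ \tfrac{\eta_t}{2}\|\nabla J(\theta_t)\|^2 \\
&+ \tfrac{\eta_t(1-L\eta_t)}{2}\|u_t\|^2 - \tfrac{\eta_t}{2}\|e_t\|^2,
\end{align*}
and then drop the nonnegative $\|u_t\|^2$ term using $\eta_t \leq 1/(2L)$. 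For the error, I would sharpen the two terms in the sketched recursion: Assumption~3 together with $\mathbb{E}[w]=1$ and $\|g\|\leq G$ give $\mathbb{E}\|T_3\|^2 \leq (W+1)G^2$, while Proposition~\ref{pro:2} applied to $\Delta_t = \hat{\nabla}^2(\theta_t(\alpha),\tau_t) v_t$ with $v_t = \eta_{t-1}u_{t-1}$ yields $\mathbb{E}\|T_4\|^2 \leq L^2\eta_{t-1}^2 \|u_{t-1}\|^2$. This is precisely where the Hessian-aided construction removes the extra $\sigma^2$-type contribution present in Theorem~\ref{th:1}, explaining why $52L^2$ replaces $104B^2$ in $c$.

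Next I would introduce a Lyapunov potential of the form $\Phi_t = J^* - J(\theta_t) + \lambda \eta_{t-1}^{-1}\|e_t\|^2$ with $\lambda = \Theta(1/L^2)$. Substituting the descent inequality and the error recursion, the choice $\beta_{t+1}=c\eta_t^2$ with the specified $c$ is tuned so that the coefficient of $\|u_{t-1}\|^2$ in $\mathbb{E}[\Phi_{t+1}-\Phi_t]$ is nonpositive and can be discarded, leaving a drift bound of the shape
\[
\mathbb{E}[\Phi_{t+1}-\Phi_t] \leq -\tfrac{\eta_t}{4}\|\nabla J(\theta_t)\|^2 + \tfrac{(W+1)c^2}{2L^2}\eta_{t-1}^3.
\]
Telescoping $t=1,\ldots,T$, bounding the initial error by $\mathbb{E}\|e_1\|^2 \leq \sigma^2$ (Assumption~2), and using the almost-sure lower bound $\eta_t \geq k/(m+2G^2 t)^{1/3}$ to control $\sum_t \eta_t^3 \leq O(\ln(T+2))$, produces a bound on $\mathbb{E}[\sum_t \eta_t \|\nabla J(\theta_t)\|^2]$ of order $k\Lambda$. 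The conclusion follows by taking $\zeta$ uniform on $\{1,\ldots,T\}$ and applying Cauchy--Schwarz $\mathbb{E}\|\nabla J(\theta_\zeta)\| \leq \sqrt{\mathbb{E}[\eta_\zeta \|\nabla J(\theta_\zeta)\|^2]\cdot \mathbb{E}[\eta_\zeta^{-1}]}$, together with $\mathbb{E}[\eta_\zeta^{-1}] \leq (m^{1/3}+(2G^2 T)^{1/3})/k$, to peel off the $T^{-1/2}$ and $T^{-1/3}$ contributions in the stated bound.

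The main obstacle is the data dependence of $\eta_t$. Because $\eta_t$ is a random function of $G_1,\ldots,G_t$, conditional expectations must be taken in the right order so that $\beta_t = c\eta_{t-1}^2$ (which is $\mathcal{F}_{t-1}$-measurable) can be pulled out of $\mathbb{E}_{\tau_t}$ when applying the recursion, and the Lyapunov term $\eta_{t-1}^{-1}\|e_t\|^2$ does not factor as a product of independent expectations. I would handle this by a per-trajectory conditional analysis, then total expectation, combined with the almost-sure lower bound $\eta_t \geq k/(m+2G^2 t)^{1/3}$ and the integral comparison $\sum_{t=1}^T 1/(m+2G^2 t) \leq (2G^2)^{-1}\ln(T+2)$, which converts the data dependence into the deterministic $\ln(T+2)$ factor that appears in $\Lambda$.
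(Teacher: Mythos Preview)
Your Lyapunov architecture, descent inequality, and error recursion are essentially what the paper uses (the paper's Lemma~A3 and Lemma~B1 play exactly the roles you assign), so the skeleton is right. However, there is one genuine gap and one smaller mismatch with the stated bound.

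\textbf{The gap: the logarithmic control of the noise sum.} In your drift bound you replaced $\mathbb{E}\|T_3\|^2$ by the uniform constant $(W+1)G^2$, which leaves you needing $\sum_{t} \eta_t^3 = k^3\sum_{t} (m+\sum_{i\le t}G_i^2)^{-1} = O(\ln T)$. The ``almost-sure lower bound'' $\eta_t \ge k/(m+2G^2t)^{1/3}$ points the wrong way: it lower-bounds $\eta_t^3$, not upper-bounds it, and there is no assumed lower bound on $G_i^2$ that would force $\sum_{i\le t}G_i^2$ to grow linearly. If the $G_i$'s happen to be small, the sum $\sum_t\eta_t^3$ is of order $T$, not $\ln T$, and your telescoped bound blows up. The paper avoids this by \emph{retaining} the random factor $G_{t+1}^2$ in the $T_3$ contribution (it bounds $\mathbb{E}\|T_3\|^2 \le 2(W+1)G_t^2$, not $2(W+1)G^2$), so that after multiplying by $\beta_{t+1}^2\eta_t^{-1}=c^2\eta_t^3$ the noise sum is
\[
\sum_{t=1}^T \eta_t^3 G_{t+1}^2 \;=\; k^3\sum_{t=1}^T \frac{G_{t+1}^2}{m+\sum_{i=1}^t G_i^2}\;\le\; k^3\ln(T+2),
\]
via the standard telescoping-of-logs inequality $\tfrac{x}{a+x}\le \ln(a+x)-\ln a$ together with $m\ge 2G^2\ge G^2+G_{t+1}^2$. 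This self-normalizing structure is the whole point of the adaptive step size and cannot be recovered once you pass to the constant $G^2$.

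\textbf{The smaller mismatch: the $\sigma^{1/3}$ dependence.} Your final Cauchy--Schwarz step with the deterministic bound $\eta_\zeta^{-1}\le (m+G^2T)^{1/3}/k$ produces a leading term $2\sqrt{\Lambda}\,G^{1/3}/T^{1/3}$, not $2\sqrt{\Lambda}\,\sigma^{1/3}/T^{1/3}$ as stated. The paper gets the $\sigma$-dependence by a sharper route: it bounds $\sum_t\|\nabla J(\theta_t)\|^2 \le \Lambda\,(m+\sum_t G_t^2)^{1/3}$, substitutes $G_t^2\le 2\sigma^2+2\|\nabla J(\theta_t)\|^2$, and then solves the resulting self-referential inequality $(\mathbb{E}Z)^2 \le \Lambda(m+2T\sigma^2)^{1/3}+2^{1/3}\Lambda(\mathbb{E}Z)^{2/3}$ in $Z=\sqrt{\sum_t\|\nabla J(\theta_t)\|^2}$. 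Your simpler Cauchy--Schwarz gives the right $T^{-1/3}$ rate but not the exact constant in the theorem.
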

\begin{remark}
 Since $\Lambda=O(\ln(T))$,
 Theorem \ref{th:2} shows that the HA-MBPG algorithm has $O(\sqrt{\ln(T)}/T^{\frac{1}{3}})$ convergence rate. The HA-MBPG algorithm needs $1$ trajectory
 to estimate the stochastic policy gradient $u_t$ at each iteration,
 and needs $T$ iterations. Without loss of generality, we omit a relative small term $\sqrt{\ln(T)}$. By $T^{-\frac{1}{3}} \leq \epsilon$,
 we choose $T=\epsilon^{-3}$. Thus, the HA-MBPG has the sample complexity of $1\cdot T = O(\epsilon^{-3})$
 for finding an $\epsilon$-stationary point.
\end{remark}
\subsection{Convergence Analysis of IS-MBPG* Algorithm}
In the subsection, we give the convergence properties of the IS-MBPG* algorithm.
 The detailed proof is provided in Appendix  \ref{Appendix:A3}.
% Appendix A3.
\begin{theorem} \label{th:3}
Assume that the sequence $\{\theta_t\}_{t=1}^T$ be generated from Algorithm \ref{alg:3},
and let $B^2 = L^2 + 2G^2C^2_w$, $k> 0$ $c=\frac{1}{3k^3L}+104B^2$,
$m = \max\{2,(2Lk)^3,(\frac{ck}{2L})^3\}$ and $\eta_0 = \frac{k}{m^{1/3}}$, we have
\begin{align}
  \mathbb{E}\|\nabla J(\theta_\zeta)\|=\frac{1}{T}\sum_{t=1}^T\mathbb{E}\|\nabla J(\theta_t)\| \leq \frac{\sqrt{\Gamma}m^{1/6}}{\sqrt{T}} + \frac{\sqrt{\Gamma}}{T^{1/3}}, \nonumber
\end{align}
 where $\Gamma=\frac{1}{k}\big(16(J^* - J(\theta_1)) + \frac{m^{1/3}}{8B^2k}\sigma^2 + \frac{c^2k^{3}\sigma^2}{4B^2}\ln(T+2)\big)$ with $J^*=\sup_{\theta}J(\theta) <+\infty$.
\end{theorem}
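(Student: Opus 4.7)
The plan is to adapt the proof of Theorem~\ref{th:1} while exploiting the fact that in IS-MBPG* the schedule $\eta_t = k/(m+t)^{1/3}$ and the momentum coefficient $\beta_{t+1} = c\eta_t^2$ are both deterministic, which bypasses the technical difficulties of handling a data-dependent learning rate. The argument rests on three ingredients already developed in the paper: the $L$-smoothness ascent inequality for the performance function $J$, where $L$ comes from Propositions~\ref{pro:1}--\ref{pro:2}; the variance-reduced error recursion for $e_t = u_t - \nabla J(\theta_t)$,
\[
\mathbb{E}\|e_t\|^2 \leq (1-\beta_t)\mathbb{E}\|e_{t-1}\|^2 + 2\beta_t^2\sigma^2 + 2B^2\eta_{t-1}^2\mathbb{E}\|u_{t-1}\|^2,
\]
which follows from Assumptions 1--3 together with the definition $B^2 = L^2 + 2G^2 C_w^2$; and a Lyapunov telescoping argument tailored to the power-law schedule for $\eta_t$.

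The execution proceeds as follows. The choice $m \geq (2Lk)^3$ forces $\eta_t \leq \eta_0 \leq 1/(2L)$, so applying $L$-smoothness to the ascent step $\theta_{t+1} = \theta_t + \eta_t u_t$ together with the identity $\langle \nabla J(\theta_t), u_t\rangle = \tfrac{1}{2}\|\nabla J(\theta_t)\|^2 + \tfrac{1}{2}\|u_t\|^2 - \tfrac{1}{2}\|e_t\|^2$ yields
\[
J^* - J(\theta_{t+1}) \leq J^* - J(\theta_t) - \tfrac{\eta_t}{2}\|\nabla J(\theta_t)\|^2 - \tfrac{\eta_t}{4}\|u_t\|^2 + \tfrac{\eta_t}{2}\|e_t\|^2.
\]
I would then introduce the Lyapunov potential $\Phi_t = (J^* - J(\theta_t)) + \frac{1}{8B^2\eta_{t-1}}\mathbb{E}\|e_t\|^2$. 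Substituting the error recursion into $\Phi_{t+1}-\Phi_t$ makes the $\mathbb{E}\|u_t\|^2$ coefficient vanish, since the $\eta_t/4$ loss from the smoothness inequality exactly cancels the $2B^2\eta_t/(8B^2)$ contribution from the recursion, leaving a genuine progress term $-\tfrac{\eta_t}{2}\mathbb{E}\|\nabla J(\theta_t)\|^2$, a noise contribution proportional to $\beta_{t+1}^2\sigma^2/\eta_t = c^2\sigma^2 \eta_t^3$, and a residual coefficient of $\mathbb{E}\|e_t\|^2$ that must be rendered non-positive.

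Certifying that last non-positivity is the main technical obstacle, and it is the only place where the specific form of the schedule matters. By the mean-value theorem, $\tfrac{1}{\eta_t}-\tfrac{1}{\eta_{t-1}} \leq \tfrac{1}{3k(m+t-1)^{2/3}} = O(\eta_t^2/k^3)$, so the required inequality collapses to $c \geq O(B^2) + O(1/(k^3 L))$, which is precisely the form $c = \tfrac{1}{3k^3 L} + 104B^2$ prescribed in the theorem; the first summand absorbs the schedule derivative and the second absorbs the Lyapunov coupling. Telescoping then gives $\sum_{t=1}^T \eta_t \mathbb{E}\|\nabla J(\theta_t)\|^2$ bounded by $\Phi_1 - \Phi_{T+1}$ plus noise, with $\Phi_1 \leq (J^* - J(\theta_1)) + \sigma^2/(8B^2\eta_0)$ via $\mathbb{E}\|e_1\|^2 \leq \sigma^2$, and with the noise bounded by $O(c^2\sigma^2)\sum_{t=1}^T \eta_t^3 = O(c^2\sigma^2 k^3 \ln(T+2))$ since $\eta_t^3 = k^3/(m+t)$ sums to a harmonic tail. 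Finally, to convert this weighted gradient-squared sum into $\mathbb{E}\|\nabla J(\theta_\zeta)\|$, I would lower-bound $\eta_t \geq \eta_T = k/(m+T)^{1/3}$ to pull the weight out, apply Cauchy--Schwarz to pass from $\sum \mathbb{E}\|\nabla J(\theta_t)\|^2$ to $\sum \mathbb{E}\|\nabla J(\theta_t)\|$, and use $(m+T)^{1/3} \leq m^{1/3}+T^{1/3}$ together with $\sqrt{a+b}\leq\sqrt{a}+\sqrt{b}$ to isolate the two claimed summands $\sqrt{\Gamma}\,m^{1/6}/\sqrt{T}$ and $\sqrt{\Gamma}/T^{1/3}$.
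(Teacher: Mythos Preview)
Your proposal is correct and follows essentially the same Lyapunov-telescoping route as the paper's proof: the same ascent inequality from $L$-smoothness, the same error recursion for $e_t$, the same bound $\eta_t^{-1}-\eta_{t-1}^{-1}\le\tfrac{1}{3k^3L}\eta_t$ on the schedule increment, and the same Jensen/$(a+b)^{1/6}\le a^{1/6}+b^{1/6}$ finish. The only cosmetic difference is that you keep $\|u_t\|^2$ intact and cancel it directly against the $-\tfrac{\eta_t}{4}\|u_t\|^2$ term from your sharper ascent inequality, whereas the paper splits $\|u_{t-1}\|^2\le 2\|e_{t-1}\|^2+2\|\nabla J(\theta_{t-1})\|^2$ inside the recursion and carries the Lyapunov weight $\tfrac{1}{128B^2}$ instead of your $\tfrac{1}{8B^2}$; note that the recursion coefficient on $\|u_{t-1}\|^2$ is actually $4B^2\eta_{t-1}^2$ once the two Young-inequality factors are tracked, so your Lyapunov weight should be $\tfrac{1}{16B^2}$ for exact cancellation, but the prescribed $c=\tfrac{1}{3k^3L}+104B^2$ absorbs the residual either way.
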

\begin{remark}
 Since $\Gamma=O(\ln(T))$,
 Theorem \ref{th:3} shows that the IS-MBPG* algorithm has $O(\sqrt{\ln(T)}/T^{\frac{1}{3}})$ convergence rate.
 The IS-MBPG* algorithm needs $1$ trajectory
 to estimate the stochastic policy gradient $u_t$ at each iteration,
 and needs $T$ iterations. Without loss of generality, we omit a relative small term $\sqrt{\ln(T)}$.
 By $T^{-\frac{1}{3}} \leq \epsilon$,
 we choose $T=\epsilon^{-3}$. Thus, the IS-MBPG* also has the sample complexity of $1\cdot T = O(\epsilon^{-3})$
 for finding an $\epsilon$-stationary point.
\end{remark}
\begin{figure}[tb]
\centering
\subfloat[CartPole]{%
  \includegraphics[clip,width=0.42\columnwidth,height=0.3\columnwidth]{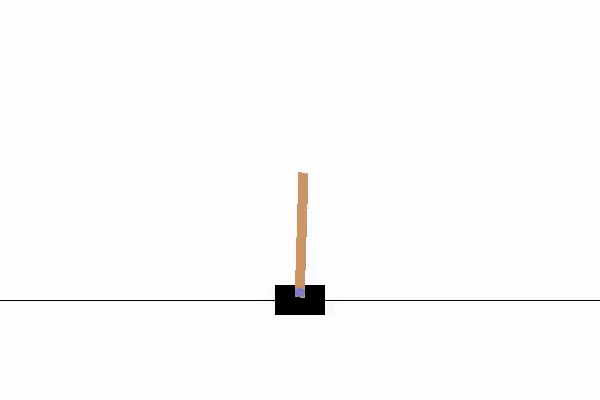}%
}
\hfil
\subfloat[Walker]{%
  \includegraphics[clip,width=0.42\columnwidth,height=0.3\columnwidth]{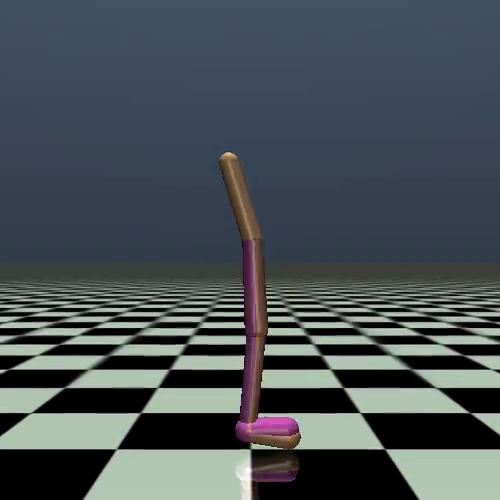}%
}
\\
\subfloat[Hopper]{%
  \includegraphics[clip,width=0.42\columnwidth, height=0.3\columnwidth]{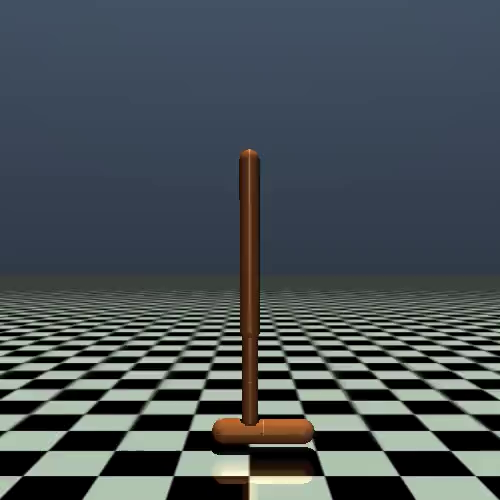}%
}
\hfil
\subfloat[HalfCheetah]{%
  \includegraphics[clip,width=0.42\columnwidth,height=0.3\columnwidth]{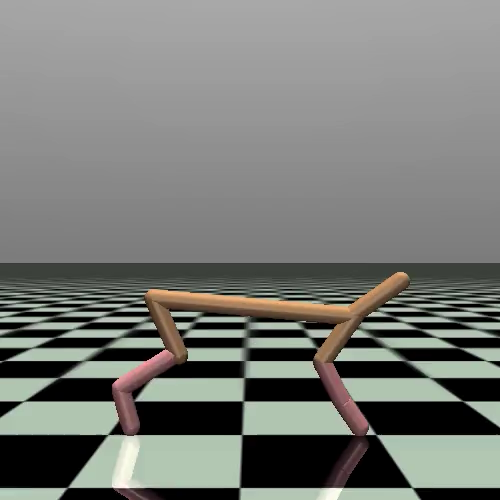}%
}
\caption{Four environments we used. (a) Cartpole: balance a pole on a cart;
(b) Walker: make a 2D robot walk;
(c) Hopper: make a 2D robot hop;
(d) HalfCheetah: make a 2D cheetah robot run.}
\label{fig:1}
 \vspace*{-10pt}
\end{figure}

\begin{figure*}[tb]
\centering
\subfloat[CartPole]{%
  \includegraphics[clip,width=0.38\textwidth]{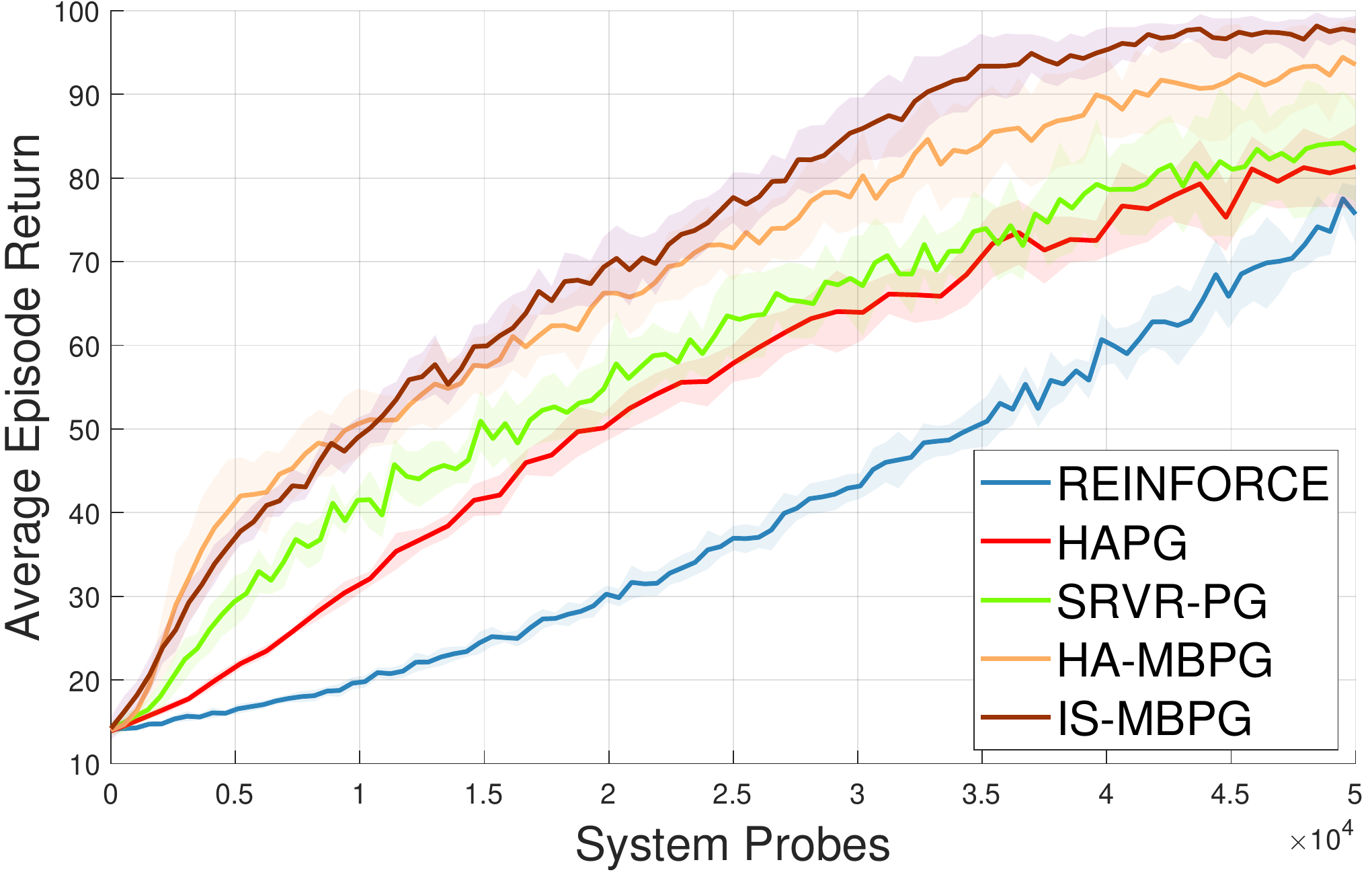}%
}
\hfil
\subfloat[Walker]{%
  \includegraphics[clip,width=0.38\textwidth]{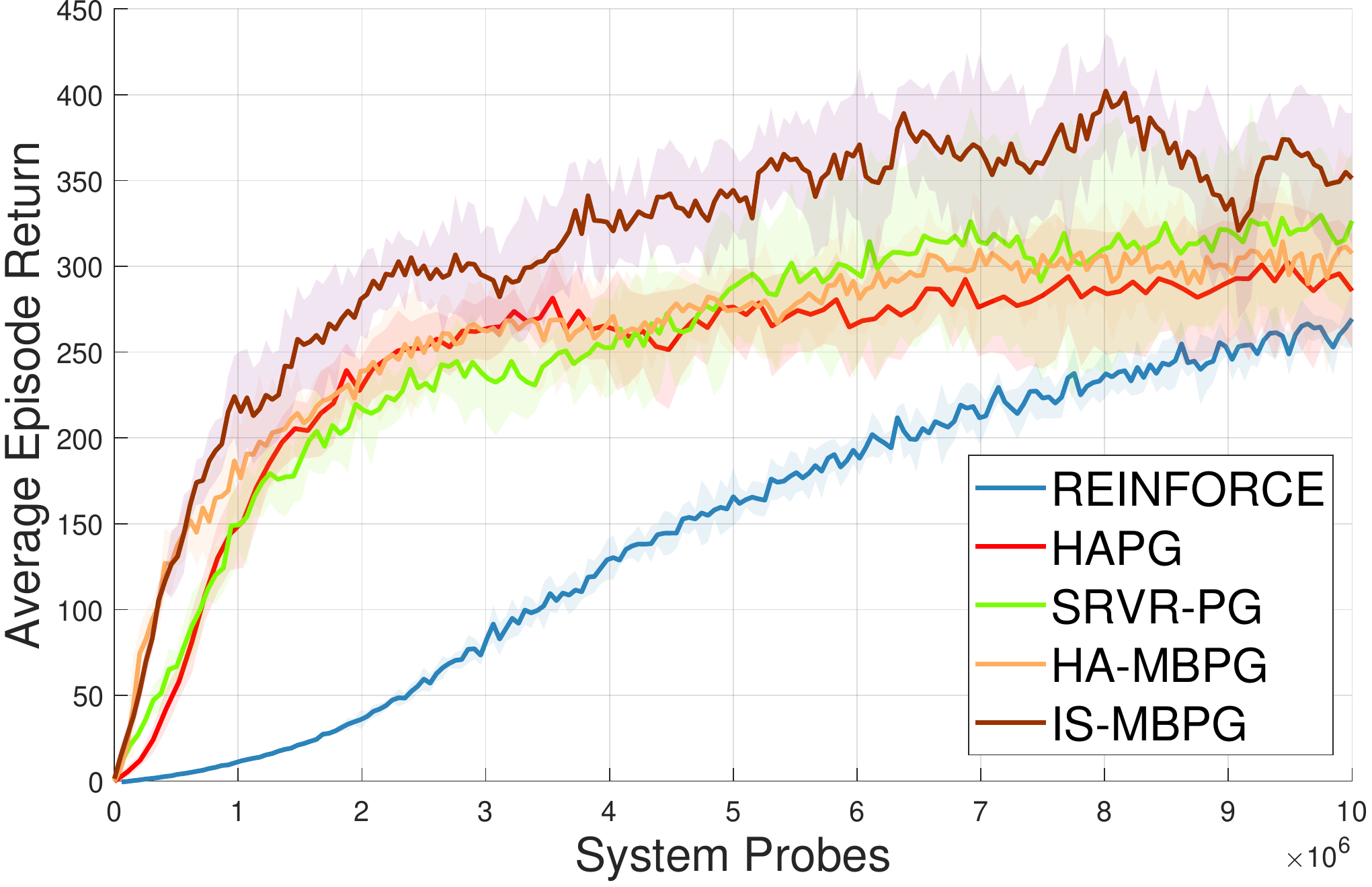}%
}
\\
\subfloat[Hopper]{%
  \includegraphics[clip,width=0.38\textwidth]{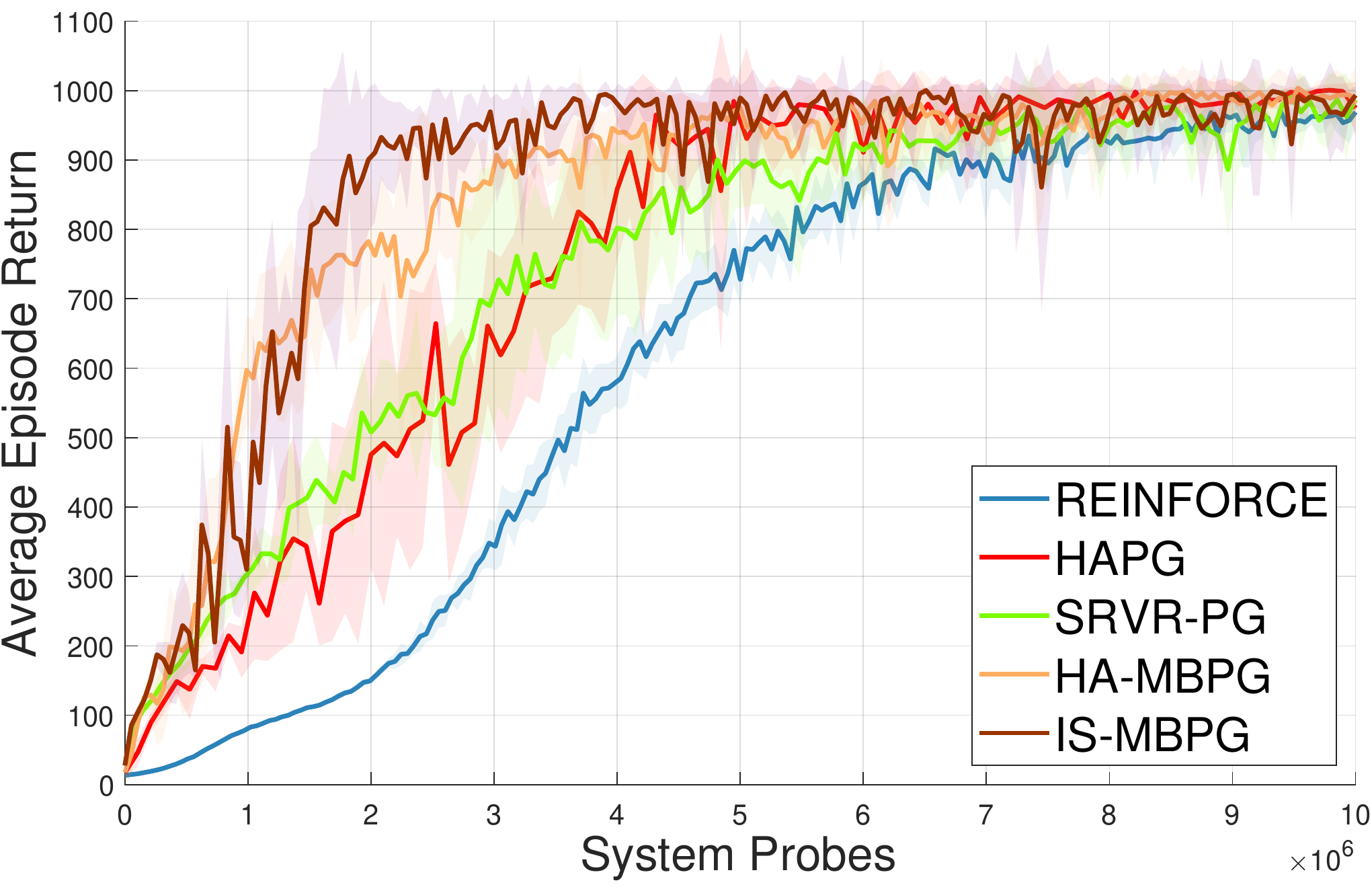}%
}
\hfil
\subfloat[HalfCheetah]{%
  \includegraphics[clip,width=0.38\textwidth]{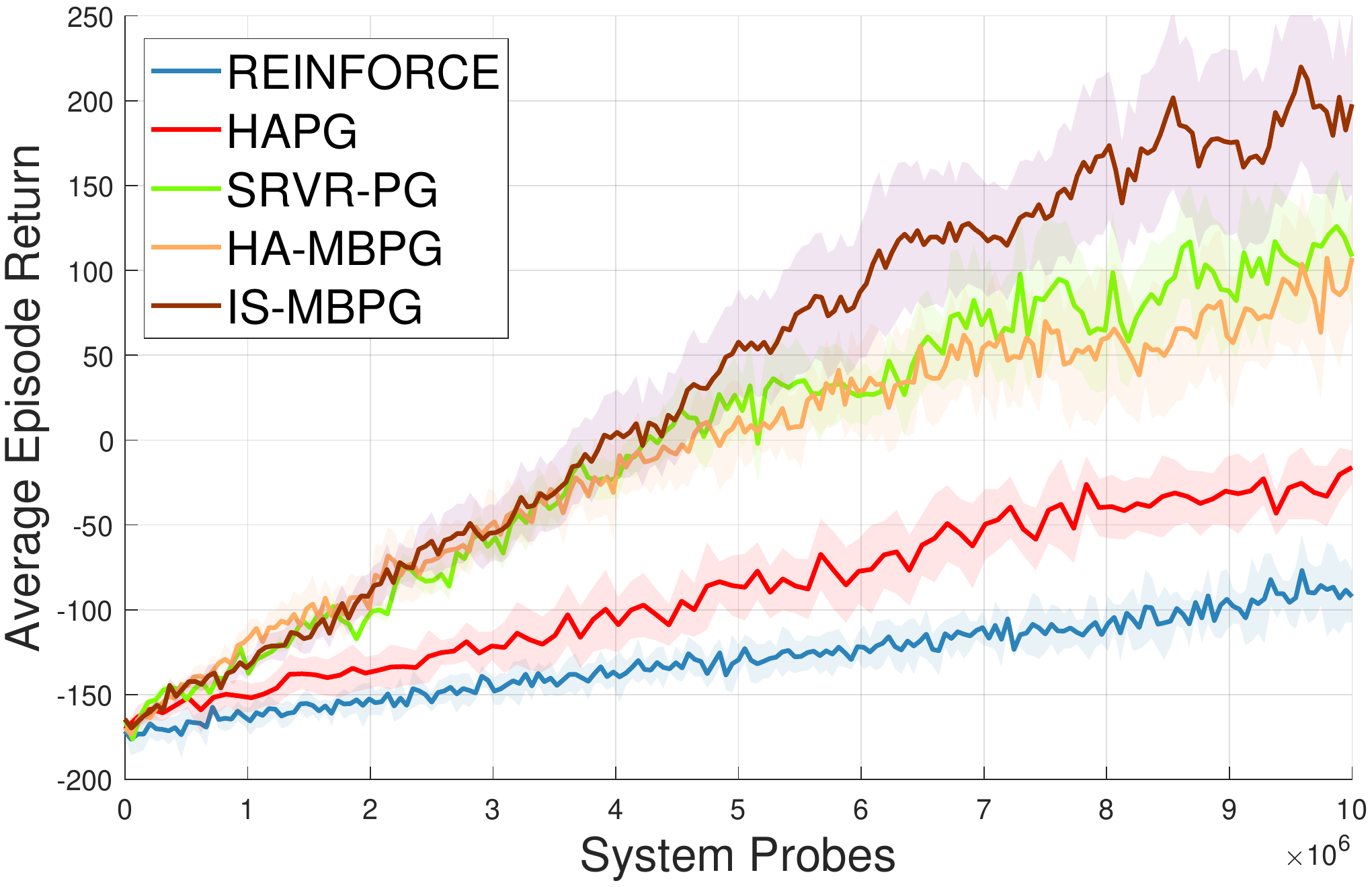}%
}
\caption{Experimental results of our algorithms (IS-MBPG and HA-MBPG) and baseline algorithms at four environments.}
\label{fig:2}
 \vspace*{-10pt}
\end{figure*}

\begin{figure*}[tb]
\centering
\subfloat[IS-MBPG]{%
  \includegraphics[clip,width=0.36\textwidth]{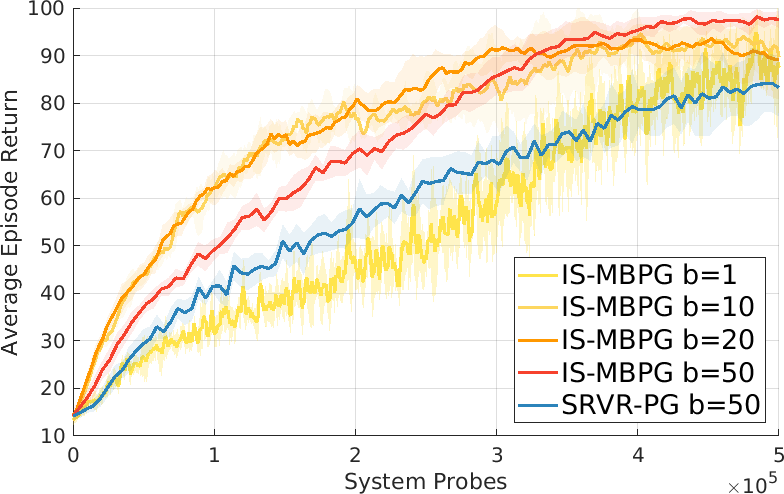}%
}
\qquad
\qquad
\subfloat[HA-MBPG]{%
  \includegraphics[clip,width=0.36\textwidth]{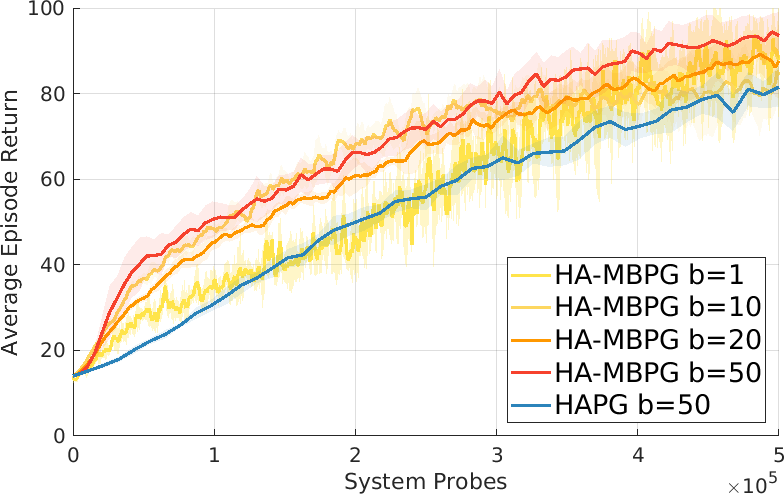}%
}
\caption{Different batch sizes for our algorithms (IS-MBPG and HA-MBPG) at CartPole environment.}
\label{fig:3}
 \vspace*{-6pt}
\end{figure*}

\begin{figure*}[tb]
\centering
\subfloat[CartPole]{%
  \includegraphics[clip,width=0.36\textwidth]{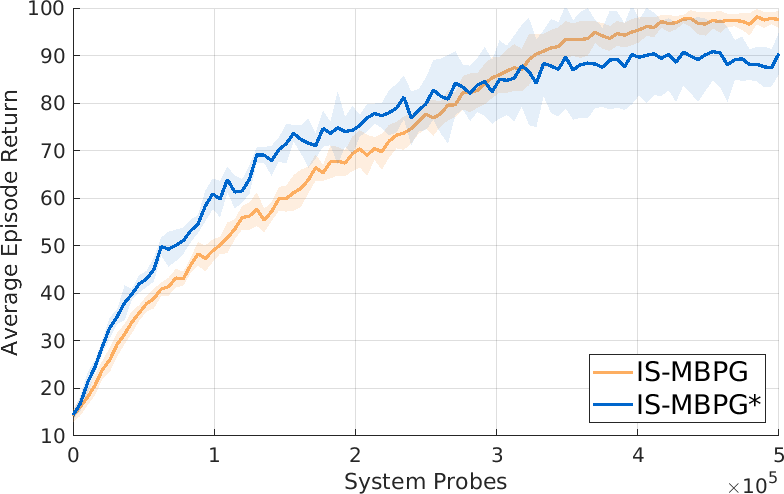}%
}
\qquad
\qquad
\subfloat[Walker]{%
  \includegraphics[clip,width=0.36\textwidth]{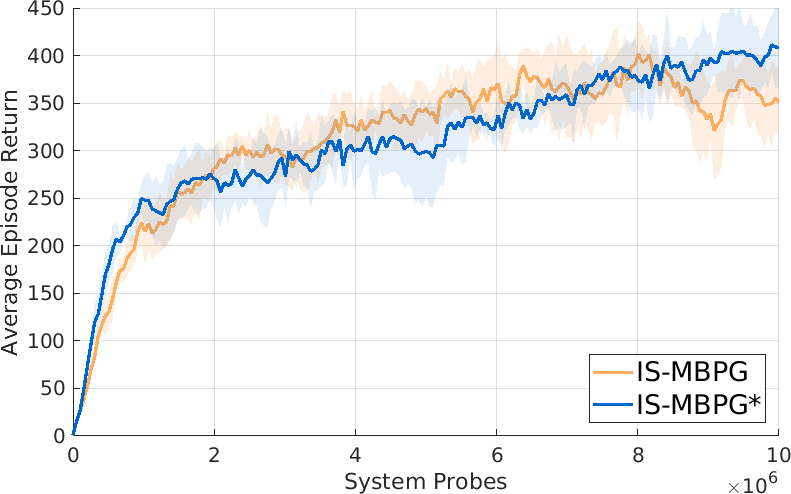}%
}
\caption{Results of IS-MBPG and IS-MBPG* algorithms at CartPole and Walker environments.}
\label{fig:4}
 \vspace*{-6pt}
\end{figure*}
\section{ Experiments }
In this section, we demonstrate the performance of our algorithms on four standard reinforcement learning tasks, 
which are CartPole, Walker, HalfCheetah and Hopper.
The first one is a discrete task from classic control, and the later three tasks are continuous RL task, which are popular MuJoCo environments~\cite{todorov2012mujoco}.
Detailed description of these environments is shown in Fig.~\ref{fig:1}. 
Our code is publicly available on https://github.com/gaosh/MBPG.
\subsection{ Experimental Setup }
In the experiment, we use Categorical Policy for CartPole, and Gaussian Policy for all the other environments.
All Policies are parameterized by the fully connected neural network.
The detail of network architecture and activation function used are shown in the Appendix A.
The network settings are similar to the HAPG~\cite{shen2019hessian} algorithm.
We implement our algorithms by using garage~\cite{garage} and pytorch~\cite{paszke2019pytorch}.
Note that Previous works mostly use environments implemented by old versions of garage,
while latest version of garage directly use environments from gym~\cite{gym}.
As a result, there might be an inconsistency of the reward calculation between this paper and previous works due to the difference of environment implementation.

In the experiments, we compare our algorithm with the existing two best algorithms: Hessian Aided Policy Gradient (HAPG)~\cite{shen2019hessian},
Stochastic Recursive Variance Reduced Policy Gradient (SRVR-PG)~\cite{xu2019sample} and a baseline algorithm: REINFORCE~\cite{sutton2000policy}.
For a fair comparison, the policies of all methods use the same initialization, which ensures that they have similar start point.
Moreover, to ease the impact of randomness, we run each method 10 times, and plot mean as well as variance interval for each of them.

In addition, for the purpose of fair comparison, we use the same batch size $\abs{\mathcal{B}}$ for all algorithms, though our algorithms do not have a requirement on it.
HAPG and SRVR-PG have sub-iterations (or inner loop), and requires additional hyper-parameters.
The inner batch size for HAPG and SRVR-PG is also set to be the same value.
For all the other hyper-parameters, we try to make them be analogous to the settings in their original paper.
One may argue that our algorithms need three hyper-parameters $k$, $m$ and $c$ to control the evolution of learning rate
while for other algorithms one hyper parameter is enough to control the learning rate.
However, it should be noticed that our algorithms do not involve any sub-iterations unlike HAPG and SRVR-PG.
Introducing sub-iterations itself naturally bring more hyper-parameters such as the number of sub-iteration and the inner batch size.
From this perspective, the hyper-parameter complexity of our algorithms resembles HAPG and SRVR-PG.
The more details of hyper-parameter selection are shown in Appendix \ref{Appendix:A}.
% Appendix A.

Similar to the HAPG algorithm, we use the \textbf{system probes} (i.e., the number of state transitions)
as the measurement of sample complexity instead of number of trajectories.
The reason of doing so is because each trajectory may have different length of states
due to a failure flag returned from the environment (often happens at the beginning of training).
Besides this reason, if using the number of trajectories as complexity measurement and the environment can return a failure flag,
a faster algorithm may have a lot more system probes given the same number of trajectories.
We also use average episode return as used in HAPG~\cite{shen2019hessian}.
\subsection{ Experimental Results }
The results of experiments are presented in Fig.~\ref{fig:2}. In the CartPole environment, our IS-MBPG and HA-MBPG algorithms have
 better performances than the other methods.
In the Walker environment, our algorithms start to have more advantages.
Specifically, the average return of IS-MBPG and HA-MBPG grows rapidly at the beginning of training.
Moreover, our IS-MBPG algorithm achieves the best final performance with a obvious margin.
HA-MBPG performs similar compared to SRVR-PG and HAPG, though it has an advantage at the beginning.
In Hopper environment, our IS-MBPG and HA-MBPG algorithms are significantly faster compared to all other methods,
while the final average reward are similar for different algorithms.
In HalfCheetah environment, IS-MBPG, HA-MBPG and SRVR-PG performs similarly at the beginning.
In the end of training, IS-MBPG can achieve the best performance.
We note that HAPG performs poorly on this task, which is probably because of the normalized gradient and fixed learning rate in their algorithm.
For all tasks, HA-MBPG are always inferior to the IS-MBPG.
One possible reason for this observation is that we use the estimated Hessian vector product instead of the exact Hessian vector product in HA-MBPG algorithm,
which brings additional estimation error to the algorithm.

In Fig.~\ref{fig:3}, we plot the average reward when changing batch size in CartPole environment.
From Fig.~\ref{fig:3}, we find that when $20\%$ of the original batch size, our HA-MBPG and IS-MBPG algorithms still outperform
the HAPG and SRVR-PG algorithms, respectively. When the batch size is 1, our HA-MBPG and IS-MBPG algorithms still
reach a good performance.
These results demonstrate that our HA-MBPG and IS-MBPG algorithms are not sensitive to the selection of batch size.
Fig.~\ref{fig:4} shows that the non-adaptive IS-MBPG* algorithm also has similar performances as the adaptive IS-MBPG algorithm.
\section{Conclusion}
 \vspace*{-6pt}
In the paper, we proposed a class of efficient momentum-based policy gradient methods (i.e., IS-MBPG and HA-MBPG),
which use adaptive learning rates and do not require any large batches.
Moreover, we proved that both IS-MBPG and HA-MBPG methods reach
the best known sample complexity of $O(\epsilon^{-3})$, which only require one trajectory at each iteration.
In particular, we also presented a non-adaptive version of IS-MBPG method (i.e., IS-MBPG*),
which has a simple monotonically decreasing learning rate.
We proved that the IS-MBPG* also reaches the best known sample complexity of $O(\epsilon^{-3})$ only required one trajectory at each iteration.

% Acknowledgements should only appear in the accepted version.
\section*{Acknowledgements}
We thank the anonymous reviewers for their helpful comments. We also thank the IT Help Desk at University of Pittsburgh. This work was partially supported by U.S. NSF
IIS 1836945, IIS 1836938, IIS 1845666, IIS
1852606, IIS 1838627, IIS 1837956.

%If a paper is accepted, the final camera-ready version can (and
%probably should) include acknowledgements. In this case, please
%place such acknowledgements in an unnumbered section at the
%end of the paper. Typically, this will include thanks to reviewers
%who gave useful comments, to colleagues who contributed to the ideas,
%and to funding agencies and corporate sponsors that provided financial
%support.

% In the unusual situation where you want a paper to appear in the
% references without citing it in the main text, use \nocite
\nocite{langley00}

\bibliography{MBPG}
\bibliographystyle{icml2020}

%%%%%%%%%%%%%%%%%%%%%%%%%%%%%%%%%%%%%%%%%%%%%%%%%%%%%%%%%%%%%%%%%%%%%%%%%%%%%%%
%%%%%%%%%%%%%%%%%%%%%%%%%%%%%%%%%%%%%%%%%%%%%%%%%%%%%%%%%%%%%%%%%%%%%%%%%%%%%%%
% DELETE THIS PART. DO NOT PLACE CONTENT AFTER THE REFERENCES!
%%%%%%%%%%%%%%%%%%%%%%%%%%%%%%%%%%%%%%%%%%%%%%%%%%%%%%%%%%%%%%%%%%%%%%%%%%%%%%%
%%%%%%%%%%%%%%%%%%%%%%%%%%%%%%%%%%%%%%%%%%%%%%%%%%%%%%%%%%%%%%%%%%%%%%%%%%%%%%%
%\appendix
%\section{Do \emph{not} have an appendix here}
%
%\textbf{\emph{Do not put content after the references.}}
%%
%Put anything that you might normally include after the references in a separate
%supplementary file.
%
%We recommend that you build supplementary material in a separate document.
%If you must create one PDF and cut it up, please be careful to use a tool that
%doesn't alter the margins, and that doesn't aggressively rewrite the PDF file.
%pdftk usually works fine.
%
%\textbf{Please do not use Apple's preview to cut off supplementary material.} In
%previous years it has altered margins, and created headaches at the camera-ready
%stage.

%%%%%%%%%%%%%%%%%%%%%%%%%%%%%%%%%%%%%%%%%%%%%%%%%%%%%%%%%%%%%%%%%%%%%%%%%%%%%%%
%%%%%%%%%%%%%%%%%%%%%%%%%%%%%%%%%%%%%%%%%%%%%%%%%%%%%%%%%%%%%%%%%%%%%%%%%%%%%%%

\begin{onecolumn}
%\appendices

\begin{appendices}
\begin{table*}[]
    \centering
    \begin{tabular}{c|c|c|c|c}
    \hline
         Environments& CartPole & Walker & Hopper & HalfCheetah\\
         \hline
         \hline
         Horizon & 100 & 500  & 1000 & 500\\
         Baseline & None & Linear & Linear & Linear\\
         Neural Network (NN) sizes & $8\times8$ & $64\times64$  & $64\times64 $ &  $64\times64$\\
         NN activation function & Tanh & Tanh  & Tanh &  Tanh \\
         Number of timesteps & $5\times10^5$ &$1\times10^7$  & $1\times10^7$ &$1\times10^7$ \\
         Batch size $\abs{\mathcal{B}}$ & 50 & 100 & 50 & 100\\
         HAPG $\abs{\mathcal{B}_{\text{sub}}}$ & 10 & 10 & 10 & 10\\
         SRVR-PG $\abs{\mathcal{B}_{\text{sub}}}$ & 10 & 10 & 10 & 10\\
         HAPG $n_{\text{sub}}$ & 5 &10 & 10 &10\\
         SRVR-PG $n_{\text{sub}}$ & 3 & 2 & 2 & 2\\
         IS-MBPG/HA-MBPG $k$ & 0.75 & 0.75 & 0.75 & 0.75 \\
         IS-MBPG/HA-MBPG $c$ & 2 & 2 & 1 & 1\\
         IS-MBPG/HA-MBPG $m$ & 2 & 12 & 3 & 3\\
         IS-MBPG* $k$ & 0.9 & 0.9 &- &- \\
         IS-MBPG* $c$ & 2 & 2 &- & -\\
         IS-MBPG* $m$ & 2 & 12 &- & -\\
         REINFORCE learning rate $\eta$  & 0.01 & 0.01 & 0.01 & 0.01\\
         HAPG learning rate $\eta$ & 0.01 & 0.01 & 0.01 & 0.01\\
         SRVR-PG learning rate $\eta$ & 0.1 & 0.1 & 0.1 & 0.1\\
         \hline
    \end{tabular}
    \caption{Hyper-parameter Details. $\abs{\mathcal{B}_{\text{sub}}}$ represents sub-iteration (inner-loop) batch size. $n_{\text{sub}}$ represents number of sub-iterations, which is called $p$ in the original paper of HAPG
    and $m$ in the oringinal paper of SRVR-PG. Although the learning rate $\eta$ of HAPG is given a small value $\eta=\epsilon$ in the theoretical analysis,
    we choose learning rate $\eta=0.01$ as given in the experiments of~\cite{shen2019hessian}. }
    \label{tab:2}
\end{table*}
\section{ Supplementary Materials for ``Momentum-Based Policy Gradient Methods" }
\label{Appendix:A}
In this section, we first provide the details of hyper-parameter selection for the algorithms in Table \ref{tab:2}.
Table \ref{tab:2} also shows that the detail of network architecture and activation function used in the experiments.
Next, we study the convergence properties of our algorithms.
We begin with giving some useful lemmas.
\begin{lemma} \label{lem:A1}
 (Lemma 1 in \cite{cortes2010learning}) Let $w(x)=P(x)/Q(x)$ be the importance weight for distributions $P$ and $Q$.
 The following identities hold for the expectation, second moment, and variance of $w(x)$
 \begin{align}
  \mathbb{E}[w(x)] = 1, \ \mathbb{E}[w^2(x)]= d_2(P||Q), \ \mathbb{V}[w(x)] = d_2(P||Q)-1,
 \end{align}
 where $d_2(P||Q)=2^{D(P||Q)}$, and $D(P||Q)$ is the $R\acute{e}nyi$ divergence between distributions $P$ and $Q$.
\end{lemma}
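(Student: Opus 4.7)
My plan is to derive all three identities by direct calculation from the definitions, taking expectations under the sampling distribution $Q$ (which is the convention when $w(x) = P(x)/Q(x)$ is used as an importance-sampling weight to reweight samples drawn from $Q$ into estimates under $P$).

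First I would establish $\mathbb{E}[w(x)] = 1$. Writing the expectation explicitly as an integral (or sum, in the discrete case) against $Q$, I have
\[
\mathbb{E}_{x \sim Q}[w(x)] \;=\; \int \frac{P(x)}{Q(x)} \, Q(x)\, dx \;=\; \int P(x)\, dx \;=\; 1,
\]
since $P$ is a probability density. This step is essentially the definition of importance sampling and requires only that $Q$ dominates $P$, which is implicit in the assumption that $w(x) = P(x)/Q(x)$ is well-defined.

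Next I would compute the second moment. By the same substitution,
\[
\mathbb{E}_{x \sim Q}[w(x)^2] \;=\; \int \left(\frac{P(x)}{Q(x)}\right)^{\!2} Q(x)\, dx \;=\; \int \frac{P(x)^2}{Q(x)}\, dx.
\]
The key identification is with the exponentiated Rényi divergence of order $2$: by definition, $D_2(P\,\|\,Q) = \log_2 \int P(x)^2/Q(x)\, dx$, so $d_2(P\,\|\,Q) := 2^{D_2(P\,\|\,Q)} = \int P(x)^2/Q(x)\, dx$, and this matches the integral above exactly. The only subtlety is making sure the conventions for the base of the logarithm in the definition of $D_2$ are consistent, which is a bookkeeping check rather than a genuine obstacle.

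Finally, the variance identity follows immediately from the standard decomposition:
\[
\mathbb{V}[w(x)] \;=\; \mathbb{E}[w(x)^2] - \bigl(\mathbb{E}[w(x)]\bigr)^{\!2} \;=\; d_2(P\,\|\,Q) - 1,
\]
using the two results just established. There is no real obstacle here; the lemma is essentially a reformulation of the Rényi-2 divergence in terms of importance weights, and all three claims reduce to a single change-of-measure integral plus the definition of $d_2$.
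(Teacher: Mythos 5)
Your proof is correct: the paper does not prove this lemma itself but imports it directly from \citet{cortes2010learning}, and your direct change-of-measure computation (expectation under $Q$, identification of $\mathbb{E}_Q[w^2]$ with $2^{D_2(P\|Q)}$ via the order-$2$ R\'enyi divergence, and the standard variance decomposition) is exactly the argument given in that reference. One minor caveat worth stating explicitly is that the expectations are taken with respect to $Q$ (the sampling distribution), which you correctly assume and which matches the paper's later usage with $\tau\sim p(\tau|\theta_2)$ in Assumption 3.
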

\begin{lemma} \label{lem:A2}
 Under Assumptions 1 and 3, let $w(\tau|\theta_{t-1},\theta_t)= g(\tau|\theta_{t-1})/g(\tau | \theta_{t})$, we have
 \begin{align}
  \mathbb{V}[w(\tau|\theta_{t-1},\theta_t)] \leq C^2_w\|\theta_t-\theta_{t-1}\|^2,
 \end{align}
 where $C_w = \sqrt{H(2HM_g^2+M_h)(W+1)}$.
\end{lemma}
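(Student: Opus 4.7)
The plan is to bound the variance via the chi-square representation and a second-order Taylor expansion. Since $w(\tau|\theta_{t-1},\theta_t)=p(\tau|\theta_{t-1})/p(\tau|\theta_t)$ has expectation $1$ under $\tau\sim p(\tau|\theta_t)$ (Lemma A1), we have $\mathbb{V}[w]=\mathbb{E}[w^2]-1$. I will treat $\mathbb{E}[w^2]$ as a function of its ``numerator'' argument and Taylor-expand it around $\theta_t$, where it attains its minimum of $1$.

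Concretely, define
\[
\phi(\theta)\;=\;\mathbb{E}_{\tau\sim p(\tau|\theta_t)}\!\bigl[w^2(\tau|\theta,\theta_t)\bigr]\;=\;\int\frac{p(\tau|\theta)^2}{p(\tau|\theta_t)}\,d\tau .
\]
Two immediate observations: $\phi(\theta_t)=\int p(\tau|\theta_t)d\tau=1$, and by exchanging $\nabla_\theta$ and the integral, $\nabla\phi(\theta_t)=2\int\nabla p(\tau|\theta_t)\,d\tau=0$. Taylor's theorem with Lagrange remainder then gives some $\xi$ on the segment $[\theta_{t-1},\theta_t]$ with
\[
\mathbb{V}[w]\;=\;\phi(\theta_{t-1})-1\;=\;\tfrac12(\theta_{t-1}-\theta_t)^{\!\top}\nabla^2\phi(\xi)(\theta_{t-1}-\theta_t)\;\le\;\tfrac12\|\nabla^2\phi(\xi)\|\,\|\theta_t-\theta_{t-1}\|^2 .
\]
So everything reduces to a uniform bound on $\|\nabla^2\phi\|$.

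Differentiating once more using $\nabla p=p\nabla\log p$, one gets
\[
\nabla^2\phi(\theta)\;=\;2\!\int\!\frac{p(\tau|\theta)^2}{p(\tau|\theta_t)}\!\Bigl[2\,\nabla\!\log p(\tau|\theta)\nabla\!\log p(\tau|\theta)^{\!\top}+\nabla^2\!\log p(\tau|\theta)\Bigr]d\tau .
\]
From \eqref{eq:0}, $\nabla\log p(\tau|\theta)=\sum_{h=0}^{H-1}\nabla\log\pi_\theta(a_h|s_h)$ (the transition density is $\theta$-free), so Assumption 1 yields $\|\nabla\log p(\tau|\theta)\|\le HM_g$ and $\|\nabla^2\log p(\tau|\theta)\|\le HM_h$. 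Pulling these bounds outside the integral and rewriting the resulting expectation back as $\mathbb{E}_{\tau\sim p(\tau|\theta_t)}[w^2(\tau|\theta,\theta_t)]$, which is bounded by $W+1$ via Assumption 3 and Lemma A1, gives
\[
\|\nabla^2\phi(\theta)\|\;\le\;2\,(2H^2M_g^2+HM_h)(W+1)\;=\;2H(2HM_g^2+M_h)(W+1) .
\]
Plugging this into the Taylor bound yields $\mathbb{V}[w]\le H(2HM_g^2+M_h)(W+1)\,\|\theta_t-\theta_{t-1}\|^2=C_w^2\|\theta_t-\theta_{t-1}\|^2$, as required.

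The only subtlety, and the step most likely to be hand-wavy, is the interchange of differentiation and integration (twice) together with the application of the Lagrange form of Taylor's theorem in multiple dimensions. These are standard once one invokes the boundedness supplied by Assumptions 1 and 3 along the whole segment $[\theta_{t-1},\theta_t]$, so the bound $\mathbb{E}[w^2(\tau|\xi,\theta_t)]\le W+1$ is valid at the intermediate point $\xi$. Everything else is routine algebra and the matching of constants to $C_w^2=H(2HM_g^2+M_h)(W+1)$.
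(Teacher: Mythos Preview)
Your proof is correct. The paper itself does not supply an argument---it simply cites Lemma~6.1 of \cite{xu2019improved}---so there is nothing to compare against in the text, but the Taylor-expansion route you take (write $\mathbb{V}[w]=\phi(\theta_{t-1})-\phi(\theta_t)$ with $\phi(\theta)=\mathbb{E}_{\tau\sim p(\cdot|\theta_t)}[w^2(\tau|\theta,\theta_t)]$, note that $\phi(\theta_t)=1$ and $\nabla\phi(\theta_t)=0$, and bound the Hessian uniformly using Assumption~1 for the $\log\pi$ derivatives and Assumption~3 for the residual $\mathbb{E}[w^2]\le W+1$) is exactly the standard proof and reproduces the stated constant $C_w^2=H(2HM_g^2+M_h)(W+1)$.

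One small technical remark: the multivariate Lagrange remainder you invoke does not literally give a single point $\xi$ for the full Hessian. The clean way to phrase your step is to restrict to the scalar function $s\mapsto\phi(\theta_t+s(\theta_{t-1}-\theta_t))$ on $[0,1]$ and apply the one-dimensional mean-value form of the remainder; this yields exactly the quadratic-form bound you wrote with $\xi$ on the segment. You already flagged this as the only ``hand-wavy'' step, and with this minor rewording it is fully rigorous.
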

 \vspace*{-6pt}
\begin{proof}
 This proof can easy follow the proof of Lemma 6.1 in \cite{xu2019improved}.
\end{proof}

\begin{lemma} \label{lem:A3}
Under Assumption 1, let $e_t = \nabla J(\theta_t)-u_t$. Given $0< \eta_t \leq \frac{1}{2L}$ for all $t\geq 1$,
we have
\begin{align}
\mathbb{E} [J(\theta_{t+1})] \geq \mathbb{E} [J(\theta_{t}) - \frac{3\eta_t}{4}\|e_t\|^2 + \frac{\eta_t}{8}\|\nabla J(\theta_t)\|^2 ].
\end{align}
\end{lemma}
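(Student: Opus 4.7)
The plan is to start from the $L$-smoothness of $J$ (which is guaranteed by Propositions~\ref{pro:1} and~\ref{pro:2}) applied along the update direction $\theta_{t+1}-\theta_t = \eta_t u_t$. Since we are maximizing, I would use the smoothness in its lower-bound form
\begin{equation*}
J(\theta_{t+1}) \;\geq\; J(\theta_t) + \langle \nabla J(\theta_t),\,\eta_t u_t\rangle - \frac{L\eta_t^2}{2}\|u_t\|^2 .
\end{equation*}
All the work then reduces to rewriting the inner product in terms of $\nabla J(\theta_t)$ and the error $e_t = \nabla J(\theta_t) - u_t$, and bounding $\|u_t\|^2$ by the same two quantities.

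First I would rewrite $u_t = \nabla J(\theta_t) - e_t$ and use the polarization identity
\begin{equation*}
\langle \nabla J(\theta_t), u_t\rangle = \tfrac{1}{2}\|\nabla J(\theta_t)\|^2 + \tfrac{1}{2}\|u_t\|^2 - \tfrac{1}{2}\|e_t\|^2 .
\end{equation*}
This turns the smoothness bound into
\begin{equation*}
J(\theta_{t+1}) \;\geq\; J(\theta_t) + \tfrac{\eta_t}{2}\|\nabla J(\theta_t)\|^2 + \tfrac{\eta_t}{2}(1-L\eta_t)\|u_t\|^2 - \tfrac{\eta_t}{2}\|e_t\|^2 .
\end{equation*}
The step-size constraint $\eta_t \leq 1/(2L)$ ensures $1-L\eta_t \geq 1/2$, so the $\|u_t\|^2$ term is nonnegative and can simply be dropped; alternatively, I could keep it and invoke Young's inequality $\|u_t\|^2 \leq 2\|\nabla J(\theta_t)\|^2 + 2\|e_t\|^2$ (or its reverse form) if a tighter tracking of $\|u_t\|^2$ were needed later. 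Taking expectation of both sides then delivers
\begin{equation*}
\mathbb{E}[J(\theta_{t+1})] \;\geq\; \mathbb{E}\bigl[J(\theta_t) + \tfrac{\eta_t}{2}\|\nabla J(\theta_t)\|^2 - \tfrac{\eta_t}{2}\|e_t\|^2\bigr],
\end{equation*}
which already implies the claimed inequality with a margin (since $\tfrac{\eta_t}{2}\geq\tfrac{\eta_t}{8}$ and $-\tfrac{\eta_t}{2}\geq -\tfrac{3\eta_t}{4}$).

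Essentially no obstacle arises because $J$'s $L$-smoothness has already been established in the text, and the inequality in the lemma statement is deliberately loose compared with what polarization + step-size control yields. The only subtlety is conceptual: one must remember that the inequality must be stated in the ``ascent'' form appropriate for maximization (hence the sign flip versus the usual descent lemma) and that $e_t$ is a \emph{deterministic} function of $u_t$ and $\theta_t$, so no conditional-expectation argument is needed at this stage. The expectation on both sides is carried along only because later lemmas will plug in randomness coming from the trajectory sampling used to form $u_t$.
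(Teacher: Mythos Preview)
Your proof is correct. Both your argument and the paper's start from the $L$-smoothness lower bound and arrive at the same intermediate expression $J(\theta_t) - \tfrac{\eta_t}{2}\|e_t\|^2 + \tfrac{\eta_t}{2}(1-L\eta_t)\|u_t\|^2$ (plus, in your case, an extra $\tfrac{\eta_t}{2}\|\nabla J(\theta_t)\|^2$). The difference is purely algebraic: the paper writes $\langle\nabla J(\theta_t),u_t\rangle = \langle e_t,u_t\rangle + \|u_t\|^2$ and applies Young's inequality to $\langle e_t,u_t\rangle$, thereby losing the gradient-norm term; it then recovers $\|\nabla J(\theta_t)\|^2$ from $\|u_t\|^2$ via $\|\nabla J(\theta_t)\|^2 \leq 2\|u_t\|^2 + 2\|e_t\|^2$, which is what produces the specific constants $3/4$ and $1/8$. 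Your use of the polarization identity keeps $\tfrac12\|\nabla J(\theta_t)\|^2$ from the start, so you can simply drop the nonnegative $\|u_t\|^2$ term and obtain the sharper bound $\tfrac{\eta_t}{2}\|\nabla J(\theta_t)\|^2 - \tfrac{\eta_t}{2}\|e_t\|^2$, which dominates the lemma's claim. Either route suffices; yours is slightly cleaner and yields better constants, while the paper's recovers the stated constants exactly.
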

 \vspace*{-6pt}
\begin{proof}
Let $e_t = \nabla J(\theta_t)-u_t$. By using $J(\theta)$ is $L$-smooth, we have
\begin{align}
 \mathbb{E} [J(\theta_{t+1})] & \geq  \mathbb{E} [J(\theta_{t}) + \nabla J(\theta_{t})^T(\theta_{t+1}-\theta_{t}) - \frac{L}{2}\|\theta_{t+1}-\theta_{t}\|^2] = \mathbb{E} [J(\theta_{t}) + \eta_t\nabla J(\theta_{t})^Tu_t - \frac{L\eta_t^2}{2}\|u_{t}\|^2] \nonumber \\
 & = \mathbb{E} [J(\theta_{t}) + \eta_t(\nabla J(\theta_{t})-u_t)^Tu_t + \eta_t\|u_t\|^2 - \frac{L\eta_t^2}{2}\|u_{t}\|^2] \nonumber \\
 & \geq  \mathbb{E} [J(\theta_{t}) - \frac{\eta_t}{2}\|\nabla J(\theta_{t})-u_t\|^2 + \frac{\eta_t}{2}(1-L\eta_t)\|u_t\|^2] \nonumber \\
 & \geq \mathbb{E} [J(\theta_{t}) - \frac{\eta_t}{2}\|\nabla J(\theta_{t})-u_t\|^2 + \frac{\eta_t}{4}\|u_t\|^2] \nonumber \\
 & \geq \mathbb{E} [J(\theta_{t}) - \frac{3\eta_t}{4}\|e_t\|^2 + \frac{\eta_t}{8}\|\nabla J(\theta_t)\|^2 ],
\end{align}
where the second inequality holds by Young's inequality, and the third inequality holds by $0< \eta_t \leq \frac{1}{2L}$, and
the last inequality follows by $\|\nabla J(\theta_t)\|^2 \leq 2\|u_t\|^2 + 2\|\nabla J(\theta_t)-u_t\|^2 = 2\|u_t\|^2 + 2\|e_t\|^2$.
\end{proof}
 \vspace*{-6pt}
\subsection{ Convergence Analysis of IS-MBPG Algorithm }
\label{Appendix:A1}
In this subsection, we analyze the convergence properties of IS-MBPG algorithm.
For notational simplicity, let $\nabla J(\theta)$ denote $\nabla_{\theta} J(\theta)$.

\begin{lemma} \label{lem:A4}
 Assume that the stochastic policy gradient $u_t$ be generated from Algorithm \ref{alg:1}, and let $e_t = u_t-\nabla J(\theta_t)$,  we have
 \begin{align}
\mathbb{E}\big[ \eta^{-1}_{t-1}\|e_t\|^2 \big]
\leq 2\beta^2_t\eta^{-1}_{t-1}G_t^2 + \eta^{-1}_{t-1}(1-\beta_t)^2\big(1+8\eta^2_{t-1}B^2\big)\mathbb{E} \|e_{t-1}\|^2 + 8(1-\beta_t)^2B^2\eta_{t-1}\|\nabla J(\theta_{t-1})\|^2, \nonumber
 \end{align}
 where $B^2 = L^2 + 2G^2C^2_w$ with $C_w = \sqrt{H(2HM_g^2+M_h)(W+1)}$.
\end{lemma}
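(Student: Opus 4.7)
The plan is to derive a one-step recursion for $\|e_t\|^2$, paralleling the informal calculation sketched in the paper just before Algorithm 1. Starting from the update rule for $u_t$, I would decompose
\[
e_t = (1-\beta_t)\,e_{t-1} + \beta_t\, T_1 + (1-\beta_t)\, T_2,
\]
where $T_1 = g(\tau_t|\theta_t) - \nabla J(\theta_t)$ and $T_2 = [g(\tau_t|\theta_t) - w(\tau_t|\theta_{t-1},\theta_t)\,g(\tau_t|\theta_{t-1})] - [\nabla J(\theta_t) - \nabla J(\theta_{t-1})]$. By the importance-sampling identity ($\mathbb{E}[w]=1$ from Lemma \ref{lem:A1}), both $T_1$ and $T_2$ have zero mean when conditioning on $\tau_1,\ldots,\tau_{t-1}$, while $e_{t-1}$ is deterministic under that conditioning.

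Taking conditional expectation, the cross term between $(1-\beta_t)\,e_{t-1}$ and $\beta_t T_1 + (1-\beta_t) T_2$ vanishes, leaving
\[
\mathbb{E}\|e_t\|^2 \le (1-\beta_t)^2\,\mathbb{E}\|e_{t-1}\|^2 + 2\beta_t^2\,\mathbb{E}\|T_1\|^2 + 2(1-\beta_t)^2\,\mathbb{E}\|T_2\|^2,
\]
after applying $\|a+b\|^2 \le 2\|a\|^2 + 2\|b\|^2$ to the mean-zero part. The $T_1$ term is controlled by the standard ``variance bounded by second moment'' estimate $\mathbb{E}\|T_1\|^2 \le \mathbb{E}\|g(\tau_t|\theta_t)\|^2$, which together with the notation $G_t = \|g(\tau_t|\theta_t)\|$ in Algorithm \ref{alg:1} already matches the first term in the target bound.

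The main work is on $T_2$. Using the same variance-versus-second-moment trick, $\mathbb{E}\|T_2\|^2 \le \mathbb{E}\|g(\tau_t|\theta_t) - w\,g(\tau_t|\theta_{t-1})\|^2$, and I would then split $g(\tau_t|\theta_t) - w\,g(\tau_t|\theta_{t-1}) = [g(\tau_t|\theta_t) - g(\tau_t|\theta_{t-1})] + (1-w)\,g(\tau_t|\theta_{t-1})$. Another application of $\|a+b\|^2 \le 2\|a\|^2 + 2\|b\|^2$ separates the smoothness-in-$\theta$ effect from the importance-weight deviation. The first piece is controlled by the $L$-Lipschitz estimate on $g(\tau|\cdot)$ from Proposition \ref{pro:1}, giving $2L^2\|\theta_t-\theta_{t-1}\|^2$, while the second piece is controlled by Lemma \ref{lem:A2} together with $\mathbb{E}[w]=1$, giving $2G^2\,\mathbb{E}(1-w)^2 \le 2G^2 C_w^2\|\theta_t-\theta_{t-1}\|^2$. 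Substituting $\|\theta_t-\theta_{t-1}\|=\eta_{t-1}\|u_{t-1}\|$ then produces a bound of the form $\mathbb{E}\|T_2\|^2 \lesssim (L^2 + G^2 C_w^2)\,\eta_{t-1}^2\,\|u_{t-1}\|^2$.

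To finish, I would use $\|u_{t-1}\|^2 \le 2\|e_{t-1}\|^2 + 2\|\nabla J(\theta_{t-1})\|^2$ to route the $\|u_{t-1}\|^2$ contribution into the two required places in the conclusion, then multiply through by $\eta_{t-1}^{-1}$. The main obstacle is bookkeeping rather than any single deep step: several factors of $2$ appear (two variance-$\le$-second-moment steps, two $\|a+b\|^2$ splits, and the $u_{t-1}$-into-$e_{t-1}$-plus-$\nabla J$ split), and getting the coefficient $B^2 = L^2 + 2G^2 C_w^2$ in both places requires choosing carefully where to absorb each factor. The genuinely nontrivial ingredient, beyond these manipulations, is Lemma \ref{lem:A2}, which guarantees that the importance-weight variance shrinks with $\|\theta_t-\theta_{t-1}\|^2$ and hence with $\eta_{t-1}^2$; without it, the $T_2$ contribution would not be a contraction in $\eta_{t-1}$ and the overall recursion would not close.
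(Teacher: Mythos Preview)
Your proposal is correct and follows essentially the same route as the paper's proof: the same decomposition of $e_t$ into $(1-\beta_t)e_{t-1}$ plus the mean-zero pieces $T_1,T_2$, the same variance-$\le$-second-moment reductions, the same add-and-subtract split of $g(\tau_t|\theta_t)-w\,g(\tau_t|\theta_{t-1})$ handled via Proposition~\ref{pro:1} and Lemma~\ref{lem:A2}, and the same final $\|u_{t-1}\|^2\le 2\|e_{t-1}\|^2+2\|\nabla J(\theta_{t-1})\|^2$ step. The only cosmetic difference is that your bound on $\mathbb{E}\|T_2\|^2$ is actually $2(L^2+G^2C_w^2)\|\theta_t-\theta_{t-1}\|^2$, slightly tighter than the paper's $2(L^2+2G^2C_w^2)=2B^2$, but of course still dominated by $2B^2$, so the stated coefficients go through unchanged.
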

\begin{proof}
 By the definition of $u_t$ in Algorithm \ref{alg:1}, we have
  \begin{align}
  u_t - u_{t-1} = -\beta_tu_{t-1}+ \beta_t g(\tau_t|\theta_t) + (1-\beta_t)\big( g(\tau_t | \theta_{t})- w(\tau_t|\theta_{t-1},\theta_t) g(\tau_t|\theta_{t-1})\big).
 \end{align}
 Then we have
 \begin{align} \label{eq:A1}
 \mathbb{E}\big[ \eta^{-1}_{t-1} \|e_t\|^2\big]& = \mathbb{E}\big[\eta^{-1}_{t-1} \|\nabla J(\theta_{t-1}) - u_{t-1} + \nabla J(\theta_t)-\nabla J(\theta_{t-1}) -(u_t-u_{t-1})\|^2\big] \\
  & =  \mathbb{E}\big[ \eta^{-1}_{t-1} \|\nabla J(\theta_{t-1}) - u_{t-1} + \nabla J(\theta_t)-\nabla J(\theta_{t-1}) + \beta_tu_{t-1}- \beta_t g(\tau_t|\theta_t)\nonumber \\
  & \quad - (1-\beta_t)\big( g(\tau_t | \theta_{t})- w(\tau_t|\theta_{t-1},\theta_t) g(\tau_t|\theta_{t-1})\big)\|^2\big] \nonumber \\
  & = \mathbb{E}\big[ \eta^{-1}_{t-1} \|(1-\beta_t)(\nabla J(\theta_{t-1}) - u_{t-1}) + \beta_t(\nabla J(\theta_t)- g(\tau_t|\theta_t))\nonumber \\
  & \quad - (1-\beta_t)\big( g(\tau_t | \theta_{t})- w(\tau_t|\theta_{t-1},\theta_t) g(\tau_t|\theta_{t-1})-(\nabla J(\theta_t)-\nabla J(\theta_{t-1}))\big)\|^2\big] \nonumber \\
  & = \eta^{-1}_{t-1}(1-\beta_t)^2\mathbb{E} \|\nabla J(\theta_{t-1}) - u_{t-1}\|^2 + \eta^{-1}_{t-1}\mathbb{E}\big[\|\beta_t(\nabla J(\theta_t)- g(\tau_t|\theta_t))\nonumber \\
  & \quad - (1-\beta_t)\big( g(\tau_t | \theta_{t})- w(\tau_t|\theta_{t-1},\theta_t) g(\tau_t|\theta_{t-1})-(\nabla J(\theta_t)-\nabla J(\theta_{t-1}))\big)\|^2\big] \nonumber \\
  & \leq \eta^{-1}_{t-1}(1-\beta_t)^2\mathbb{E} \|\nabla J(\theta_{t-1}) - u_{t-1}\|^2 + 2\beta^2_t\eta^{-1}_{t-1}\mathbb{E}\|\nabla J(\theta_t)- g(\tau_t|\theta_t)\|^2\nonumber \\
  & \quad +2(1-\beta_t)^2 \mathbb{E}\| g(\tau_t | \theta_{t})- w(\tau_t|\theta_{t-1},\theta_t) g(\tau_t|\theta_{t-1})-(\nabla J(\theta_t)-\nabla J(\theta_{t-1}))\|^2 \nonumber \\
  & \leq \eta^{-1}_{t-1}(1-\beta_t)^2\mathbb{E} \|e_{t-1}\|^2 + 2\beta^2_t\eta^{-1}_{t-1}\mathbb{E}\|g(\tau_t|\theta_t)\|^2
  + 2(1-\beta_t)^2\eta^{-1}_{t-1} \mathbb{E}\| g(\tau_t | \theta_{t})- w(\tau_t|\theta_{t-1},\theta_t) g(\tau_t|\theta_{t-1})\|^2 \nonumber \\
  & = \eta^{-1}_{t-1}(1-\beta_t)^2\mathbb{E} \|e_{t-1}\|^2 + 2\beta^2_t\eta^{-1}_{t-1}G_t^2 + 2(1-\beta_t)^2\eta^{-1}_{t-1}
  \underbrace{ \mathbb{E}\| g(\tau_t | \theta_{t})- w(\tau_t|\theta_{t-1},\theta_t) g(\tau_t|\theta_{t-1})\|^2 }_{=T_1}, \nonumber
 \end{align}
 where the forth equality holds by $\mathbb{E}_{\tau_t\sim p(\tau|\theta_t)}[g(\tau_t|\theta_t)]=\nabla J(\theta_t)$ and $\mathbb{E}_{\tau_t\sim p(\tau|\theta_t)}[g(\tau_t|\theta_t)- w(\tau_t|\theta_{t-1},\theta_t) g(\tau_t|\theta_{t-1})]=\nabla J(\theta_t)-\nabla J(\theta_{t-1})$; the first inequality follows by Young's inequality; and the last inequality holds by $\mathbb{E}\|\zeta-\mathbb{E}[\zeta]\|^2=\mathbb{E}\|\zeta\|^2-(\mathbb{E}[\zeta])^2 \leq \mathbb{E}\|\zeta\|^2$.

 Next, we give an upper bound of the term $T_1$ as follows:
 \begin{align} \label{eq:A2}
  T_1 & = \mathbb{E}\| g(\tau_t | \theta_{t})- w(\tau_t|\theta_{t-1},\theta_t) g(\tau_t|\theta_{t-1})\|^2 \nonumber \\
  & = \mathbb{E}\| g(\tau_t | \theta_{t})-g(\tau_t|\theta_{t-1}) + g(\tau_t|\theta_{t-1})- w(\tau_t|\theta_{t-1},\theta_t) g(\tau_t|\theta_{t-1})\|^2 \nonumber \\
  & \leq 2\mathbb{E}\| g(\tau_t | \theta_{t})-g(\tau_t|\theta_{t-1})\|^2 + 2\mathbb{E}\|(1- w(\tau_t|\theta_{t-1},\theta_t)) g(\tau_t|\theta_{t-1})\|^2 \nonumber \\
  & \leq 2L^2\|\theta_t - \theta_{t-1}\|^2 + 2G^2\mathbb{E}\|1- w(\tau_t|\theta_{t-1},\theta_t)\|^2 \nonumber \\
  & = 2L^2\|\theta_t - \theta_{t-1}\|^2 + 2G^2\mathbb{V}\big(w(\tau_t|\theta_{t-1},\theta_t)\big) \nonumber \\
  & \leq  2(L^2 + 2G^2C^2_w)\|\theta_t-\theta_{t-1}\|^2,
 \end{align}
 where the second inequality holds by Proposition \ref{pro:1}, and the third equality holds by Lemma \ref{lem:A1},
 and the last inequality follows by Lemma \ref{lem:A2}.

 Combining the inequalities \eqref{eq:A1} with \eqref{eq:A2}, let $B^2 = L^2 + 2G^2C^2_w$, we have
 \begin{align}
  \mathbb{E}\big[ \eta^{-1}_{t-1} \|e_t\|^2\big]& \leq \eta^{-1}_{t-1}(1-\beta_t)^2\mathbb{E} \|e_{t-1}\|^2 + 2\beta^2_t\eta^{-1}_{t-1}G_t^2+ 4(1-\beta_t)^2\eta^{-1}_{t-1}B^2\|\theta_t-\theta_{t-1}\|^2 \nonumber \\
  & = \eta^{-1}_{t-1}(1-\beta_t)^2\mathbb{E} \|e_{t-1}\|^2 + 2\beta^2_t\eta^{-1}_{t-1}G_t^2 + 4(1-\beta_t)^2B^2\eta_{t-1}\|u_{t-1}\|^2 \nonumber \\
  & = \eta^{-1}_{t-1}(1-\beta_t)^2\mathbb{E} \|e_{t-1}\|^2 + 2\beta^2_t\eta^{-1}_{t-1}G_t^2 + 4(1-\beta_t)^2B^2\eta_{t-1}\|e_{t-1} + \nabla J(\theta_{t-1})\|^2 \nonumber \\
  & \leq 2\beta^2_t\eta^{-1}_{t-1}G_t^2 + \eta^{-1}_{t-1}(1-\beta_t)^2\big(1+8\eta^2_{t-1}B^2\big)\mathbb{E} \|e_{t-1}\|^2 + 8(1-\beta_t)^2B^2\eta_{t-1}\|\nabla J(\theta_{t-1})\|^2.
 \end{align}

\end{proof}

\begin{theorem} \label{th:A1}
Assume that the sequence $\{\theta_t\}_{t=1}^T$ be generated from Algorithm \ref{alg:1}, and let $B^2 = L^2 + 2G^2C^2_w$, $k=O(\frac{G^{2/3}}{L})$ $c=\frac{G^2}{3k^3L}+104B^2$,
$m = \max\{2G^2,(2Lk)^3,(\frac{ck}{2L})^3\}$ and $\eta_0 = \frac{k}{m^{1/3}}$, we have
\begin{align}
  \mathbb{E}\|\nabla J(\theta_\zeta)\|=\frac{1}{T}\sum_{t=1}^T\mathbb{E}\|\nabla J(\theta_t)\| \leq \frac{\sqrt{2\Omega}m^{1/6} + 2\Omega^{3/4}}{\sqrt{T}} + \frac{2\sqrt{\Omega}\sigma^{1/3}}{T^{1/3}}, \nonumber
\end{align}
 where $\Omega=\frac{1}{k}\big(16(J^* - J(\theta_1)) + \frac{m^{1/3}}{8B^2k}\sigma^2 + \frac{c^2k^{3}}{4B^2}\ln(T+2)\big)$ with $J^*=\sup_{\theta}J(\theta) <+\infty$.
\end{theorem}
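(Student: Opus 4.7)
The plan is to combine the ascent guarantee from Lemma~\ref{lem:A3} with the error recursion from Lemma~\ref{lem:A4} via a Lyapunov (potential) function
\[
\Phi_t \;:=\; J^\ast-\mathbb{E}[J(\theta_t)] \;+\; \frac{\lambda}{B^2}\,\mathbb{E}\bigl[\eta_{t-1}^{-1}\|e_t\|^2\bigr]
\]
for an appropriate universal constant $\lambda>0$. Tracking $\Phi_{t+1}-\Phi_t$: Lemma~\ref{lem:A3} contributes $\tfrac{3\eta_t}{4}\|e_t\|^2-\tfrac{\eta_t}{8}\|\nabla J(\theta_t)\|^2$ to the change, while applying Lemma~\ref{lem:A4} with index $t+1$ controls $\eta_t^{-1}\|e_{t+1}\|^2-\eta_{t-1}^{-1}\|e_t\|^2$. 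The factor $\lambda/B^2$ is tuned so that the $8(1-\beta_{t+1})^2 B^2\eta_t\|\nabla J(\theta_t)\|^2$ term from Lemma~\ref{lem:A4} is dominated by the descent term of Lemma~\ref{lem:A3}, and the requirement $m\geq (2Lk)^3$ ensures $\eta_t\leq 1/(2L)$, which is what Lemma~\ref{lem:A3} needs.

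The central technical step is to control the ``lag'' coefficient of $\|e_t\|^2$ that arises because $\eta_t^{-1}\neq \eta_{t-1}^{-1}$. Using concavity of $x\mapsto x^{1/3}$ on $\eta_t^{-1}=(m+\sum_{i=1}^tG_i^2)^{1/3}/k$ yields
\[
\eta_t^{-1}-\eta_{t-1}^{-1}\;\leq\;\tfrac{G_t^2}{3k^3}\,\eta_{t-1}^{2}\;\leq\;\tfrac{G^2}{3k^3}\,\eta_{t-1}^{2},
\]
and $m\geq 2G^2$ keeps $\eta_{t-1}/\eta_t\leq (3/2)^{1/3}$, so this bound re-expresses in terms of $\eta_t$. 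With $\beta_{t+1}=c\eta_t^2$ and $(1-\beta_{t+1})^2\leq 1-\beta_{t+1}$, the choice $c=G^2/(3k^3L)+104B^2$ is exactly what makes the coefficient of $\|e_t\|^2$ in $\Phi_{t+1}-\Phi_t$ non-positive after absorbing $\tfrac{3\eta_t}{4}\|e_t\|^2$; the remaining coefficient of $\|\nabla J(\theta_t)\|^2$ stays uniformly $\leq -\eta_t/16$, yielding a one-step inequality of the form
\[
\Phi_{t+1}-\Phi_t\;\leq\;-\tfrac{\eta_t}{16}\,\mathbb{E}\|\nabla J(\theta_t)\|^2+C\,\beta_{t+1}^2\eta_t^{-1}G^2
\]
with $C$ an explicit constant.

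Telescoping from $1$ to $T$, bounding the initial $\Phi_1$ via $\mathbb{E}\|e_1\|^2\leq \sigma^2$ (Assumption~2) and $\eta_0^{-1}=m^{1/3}/k$, and using $\sum_{t=1}^T\eta_t^3\leq (k^3/G^2)\ln(T+2)$ (integral comparison with $\eta_t^3=k^3/(m+\sum_{i\leq t}G_i^2)$ and $G_t\leq G$), one arrives at
\[
\sum_{t=1}^T\eta_t\,\mathbb{E}\|\nabla J(\theta_t)\|^2\;\leq\; k\Omega.
\]
Finally, by Cauchy--Schwarz, $\bigl(\sum_t \|\nabla J(\theta_t)\|\bigr)^2\leq \bigl(\sum_t \eta_t^{-1}\bigr)\bigl(\sum_t\eta_t\|\nabla J(\theta_t)\|^2\bigr)$, and $\sum_t\eta_t^{-1}\leq Tm^{1/3}/k+T^{4/3}G^{2/3}/k$ (again by concavity). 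Dividing by $T$, expanding $\sqrt{a+b}\leq \sqrt{a}+\sqrt{b}$, and using a self-consistent bootstrap on $\sum_t G_t^2$ (via $\mathbb{E}[G_t^2]\leq \|\nabla J(\theta_t)\|^2+\sigma^2$ from Assumption~2 together with the telescoped bound itself) to replace the outer $G$ by $\sigma$ produces the claimed decomposition $O(m^{1/6}/\sqrt{T})+O(\sigma^{1/3}/T^{1/3})$, with the $\Omega^{3/4}/\sqrt{T}$ correction coming from the self-consistent solution of that bootstrap.

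The main obstacle I anticipate is the careful book-keeping around the \emph{data-dependent} learning rate $\eta_t$: it depends on the random samples $G_1,\dots,G_t$, so one must take conditional expectations in the right order for Lemma~\ref{lem:A4} (derived conditional on $\tau_t$) to remain valid after multiplying through by the random quantity $\eta_{t-1}^{-1}$, and the bootstrap replacement of $G$ by $\sigma$ in the final rate has to be performed without circular reasoning. A secondary subtlety is the constant tuning: the conditions $c=G^2/(3k^3L)+104B^2$, $k=O(G^{2/3}/L)$, and $m=\max\{2G^2,(2Lk)^3,(ck/(2L))^3\}$ each address one specific location in the argument (respectively: making the $\|e_t\|^2$ coefficient non-positive after absorbing the lag term, keeping the Lyapunov balance, and enforcing both $\eta_t\leq 1/(2L)$ and $\beta_{t+1}\leq 1/2$), and none of them is negotiable for the telescoping to close.
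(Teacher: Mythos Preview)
Your overall architecture matches the paper's: the same Lyapunov function (with $\lambda=1/128$), the same use of Lemmas~\ref{lem:A3} and~\ref{lem:A4}, and the same concavity bound on $\eta_t^{-1}-\eta_{t-1}^{-1}$. But two steps, as you have written them, do not close.

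\textbf{The logarithmic sum.} In your one-step inequality you replaced $G_{t+1}^2$ by the uniform bound $G^2$, arriving at an error term $Cc^2\eta_t^3 G^2$, and then claimed $\sum_{t=1}^T\eta_t^3\leq (k^3/G^2)\ln(T+2)$. This inequality is false in general: if the realized $G_i$ are tiny (nothing forbids $G_i\to 0$), then $\eta_t^3=k^3/(m+\sum_{i\leq t}G_i^2)\approx k^3/m$ for all $t$, and the sum is linear in $T$, not logarithmic. The fix is simply \emph{not} to upper-bound $G_{t+1}^2$ at that stage. Keeping it, the relevant sum is
\[
\sum_{t=1}^T \eta_t^3 G_{t+1}^2 \;=\; k^3\sum_{t=1}^T\frac{G_{t+1}^2}{m+\sum_{i\leq t}G_i^2}
\;\leq\; k^3\sum_{t=1}^T\Bigl(\ln\bigl(G^2+\textstyle\sum_{i\leq t+1}G_i^2\bigr)-\ln\bigl(G^2+\textstyle\sum_{i\leq t}G_i^2\bigr)\Bigr)
\;\leq\; k^3\ln(T+2),
\]
using $m\geq 2G^2$ and $G_i\leq G$; this telescoping of logarithms is exactly what the paper does, and it is the step where the data-dependence of $\eta_t$ is essential rather than a nuisance.

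\textbf{The final bootstrap.} Your closing Cauchy--Schwarz $(\sum_t\|\nabla J(\theta_t)\|)^2\leq(\sum_t\eta_t^{-1})(\sum_t\eta_t\|\nabla J(\theta_t)\|^2)$ yields a bound with $G^{1/3}/T^{1/3}$, and the ``bootstrap'' you sketch to trade $G$ for $\sigma$ does not close: it would require an upper bound on $\sum_t\|\nabla J(\theta_t)\|^2$ in terms of $\bigl(\sum_t\|\nabla J(\theta_t)\|\bigr)^2$, which goes the wrong way. The paper instead works with $S:=\sum_t\|\nabla J(\theta_t)\|^2$ directly, bounding $\mathbb{E}[S]\leq \mathbb{E}[\eta_T^{-1}]\cdot\mathbb{E}\bigl[\sum_t\eta_t\|\nabla J(\theta_t)\|^2\bigr]\leq \Omega\,\mathbb{E}\bigl[(m+\sum_t G_t^2)^{1/3}\bigr]$, then uses $G_t^2\leq 2\sigma^2+2\|\nabla J(\theta_t)\|^2$ and subadditivity of $x^{1/3}$ to obtain the self-consistent inequality $Z^2\leq \Omega(m+2T\sigma^2)^{1/3}+2^{1/3}\Omega\,Z^{2/3}$ with $Z=\mathbb{E}[\sqrt{S}]$. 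Solving this (either $Z^2\leq 2\cdot$first term or $Z^{4/3}\leq 2\cdot 2^{1/3}\Omega$) is precisely what produces the $2\Omega^{3/4}/\sqrt{T}$ correction and the $\sigma^{1/3}/T^{1/3}$ rate in the statement.
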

\begin{proof}
Due to $m\geq (2Lk)^3$, we have $\eta_t\leq \frac{k}{m^{1/3}}\leq \frac{1}{2L}$. Since $\eta_t\leq \frac{1}{2L}$ and
$m\geq (\frac{ck}{2L})^3$, we have $\beta_{t+1}=c\eta^2_t\leq \frac{c\eta_t}{2L}\leq \frac{ck}{2Lm^{1/3}}\leq 1$.
By Lemma \ref{lem:A4}, we have
 \begin{align} \label{eq:A4}
  \mathbb{E}[\eta^{-1}_{t}\|e_{t+1}\|^2-\eta^{-1}_{t-1}\|e_t\|^2] & \leq \mathbb{E}\big[ 2\beta^2_{t+1}\eta^{-1}_{t}G_{t+1}^2 + \eta^{-1}_{t}(1-\beta_{t+1})^2\big(1+8\eta^2_{t}B^2\big) \|e_{t}\|^2 \nonumber \\
  & \quad + 8(1-\beta_{t+1})^2B^2\eta_{t}\|\nabla J(\theta_{t})\|^2 -\eta^{-1}_{t-1}\|e_t\|^2 \big] \nonumber \\
  & \leq  \mathbb{E}\big[ 2c^2\eta^{3}_{t}G_{t+1}^2 + \underbrace{ \big( \eta^{-1}_{t}(1-\beta_{t+1})(1+8\eta^2_{t}B^2) -\eta^{-1}_{t-1} \big) \|e_{t}\|^2 }_{=T_2}
  + 8B^2\eta_{t}\|\nabla J(\theta_{t})\|^2 \big],
 \end{align}
where the last inequality holds by $0< \beta_{t+1} \leq 1$.
Since the function $x^{1/3}$ is cancave, we have $(x+y)^{1/3}\leq x^{1/3} + yx^{-2/3}/3$.
Then we have
 \begin{align}
  \eta^{-1}_{t} - \eta^{-1}_{t-1}  &= \frac{1}{k}\bigg( \big(m+\sum_{i=1}^tG^2_i\big)^{1/3} - \big(m+\sum_{i=1}^{t-1}G^2_i\big)^{1/3} \bigg) \leq \frac{G^2_t}{3k(m+\sum_{i=1}^{t-1}G^2_i)^{2/3}} \nonumber \\
  & \leq \frac{G^2_t}{3k(m - G^2+\sum_{i=1}^{t}G^2_i)^{2/3}} \leq \frac{G^2_t}{3k(m/2 +\sum_{i=1}^{t}G^2_i)^{2/3}} \leq \frac{2^{2/3}G^2_t}{3k(m +\sum_{i=1}^{t}G^2_i)^{2/3}} \nonumber \\
  & \leq \frac{2^{2/3}G^2}{3k^3}\eta_t^2 \leq \frac{2^{2/3}G^2}{6k^3L}\eta_t\leq \frac{G^2}{3k^3L}\eta_t,
 \end{align}
where the third inequality holds by $m\geq 2G^2$, and the sixth inequality holds by $0<\eta\leq \frac{1}{2L}$.

Next, considering the upper bound of the term $T_2$, we have
 \begin{align} \label{eq:A5}
  T_2 &= \big( \eta^{-1}_{t}(1-\beta_{t+1})(1+8\eta^2_{t}B^2) -\eta^{-1}_{t-1} \big) \|e_{t}\|^2 \nonumber \\
  & = \big( \eta^{-1}_{t} - \eta^{-1}_{t-1} + 8B^2\eta_t - \beta_{t+1}\eta^{-1}_{t} - 8\eta_t\beta_{t+1}B^2 \big) \|e_{t}\|^2 \nonumber \\
  & \leq \big( \eta^{-1}_{t} - \eta^{-1}_{t-1} + 8B^2\eta_t - \beta_{t+1}\eta^{-1}_{t} \big) \|e_{t}\|^2 \nonumber \\
  & \leq  \big( \frac{G^2}{3k^3L}\eta_t + 8B^2\eta_t - c\eta_{t} \big) \|e_{t}\|^2= -96B^2\eta_t\|e_{t}\|^2,
 \end{align}
 where the last equality holds by $c=\frac{G^2}{3k^3L}+104B^2$. Combining the inequalities \eqref{eq:A4} with \eqref{eq:A5}, we have
 \begin{align} \label{eq:A6}
  \mathbb{E}[\eta^{-1}_{t}\|e_{t+1}\|^2-\eta^{-1}_{t-1}\|e_t\|^2] \leq  \mathbb{E}\big[ 2c^2\eta^{3}_{t}G_{t+1}^2 -96B^2\eta_t\|e_{t}\|^2
  + 8B^2\eta_{t}\|\nabla J(\theta_{t})\|^2 \big].
 \end{align}

We define a \emph{Lyapunov} function $\Phi_t = J(\theta_t) - \frac{1}{128B^2\eta_{t-1}}\|e_t\|^2$ for any $t\geq 1$.
Then we have
 \begin{align} \label{eq:A7}
  \mathbb{E}[\Phi_{t+1} - \Phi_t] & = \mathbb{E}\big[ J(\theta_{t+1}) - J(\theta_t) - \frac{1}{128B^2\eta_{t}} \|e_{t+1}\|^2 + \frac{1}{128B^2\eta_{t-1}}\|e_t\|^2 \big] \nonumber \\
  & \geq \mathbb{E}\big[- \frac{3\eta_t}{4}\|e_t\|^2 + \frac{\eta_t}{8}\|\nabla J(\theta_t)\|^2 - \frac{1}{128B^2}(\eta^{-1}_{t}\|e_{t+1}\|^2 - \eta^{-1}_{t-1}\|e_t\|^2)\big] \nonumber \\
  & \geq -\frac{c^2\eta^{3}_{t}G_{t+1}^2 }{64B^2}+ \frac{\eta_t}{16}\mathbb{E}\|\nabla J(\theta_t)\|^2,
 \end{align}
where the first inequality holds by the Lemma \ref{lem:A3}, and the second inequality follows by the above inequality \eqref{eq:A6}.
Summing the above inequality \eqref{eq:A7} over $t$ from $1$ to $T$, we obtain
\begin{align} \label{eq:A8}
 \sum_{t=1}^T\mathbb{E}[\eta_t\|\nabla J(\theta_t)\|^2] & \leq \mathbb{E}[16(\Phi_{T+1} - \Phi_1)] + \sum_{t=1}^T\frac{c^2\eta^{3}_{t}G_{t+1}^2 }{4B^2} \nonumber \\
 & \leq \mathbb{E}[16(J^* - J(\theta_1))] + \frac{1}{8B^2\eta_0}\mathbb{E}\|e_1\|^2 + \frac{c^2k^{3}}{4B^2}\sum_{t=1}^T\frac{G_{t+1}^2 }{m+\sum_{i=1}^tG_i^2} \nonumber \\
 & \leq \mathbb{E}[16(J^* - J(\theta_1))] + \frac{1}{8B^2\eta_0}\mathbb{E}\|e_1\|^2 + \frac{c^2k^{3}}{4B^2}\sum_{t=1}^T\frac{G_{t+1}^2 }{G^2+\sum_{i=1}^{t+1}G_i^2}  \nonumber \\
 & \leq \mathbb{E}[16(J^* - J(\theta_1))] + \frac{1}{8B^2\eta_0}\mathbb{E}\|e_1\|^2 + \frac{c^2k^{3}}{4B^2}\sum_{t=1}^T\big(\ln(G^2+ \sum_{i=1}^{t+1}G_i^2) - \ln(G^2+ \sum_{i=1}^{t}G_i^2)\big)  \nonumber \\
 & \leq \mathbb{E}[16(J^* - J(\theta_1))] + \frac{1}{8B^2\eta_0}\mathbb{E}\|e_1\|^2 + \frac{c^2k^{3}}{4B^2}(\ln(G^2+ \sum_{i=1}^{T+1}G_i^2) - \ln(G^2)) \nonumber \\
 & \leq \mathbb{E}[16(J^* - J(\theta_1))] + \frac{m^{1/3}}{8B^2k}\sigma^2 + \frac{c^2k^{3}}{4B^2}\ln(1+ \sum_{i=1}^{T+1}\frac{G_i^2}{G^2}) \nonumber \\
 & \leq \mathbb{E}[16(J^* - J(\theta_1))] + \frac{m^{1/3}}{8B^2k}\sigma^2 + \frac{c^2k^{3}}{4B^2}\ln(T+2),
\end{align}
where $J^*=\sup_{\theta}J(\theta) <+\infty$, and the fourth inequality holds by the concavity of the function $\ln(x)$, and the sixth inequality holds by the definition of $e_1$ and $\eta_0$.

By Cauchy-Schwarz inequality, we have $\mathbb{E}[XY]^2 \leq\mathbb{E}[X^2]\mathbb{E}[Y^2]$. Let $X=\sqrt{\eta_T\sum_{t=1}^{T-1}\|\nabla J(\theta_t)\|^2}$ and $Y=\sqrt{1/\eta_T}$,
we have
\begin{align}
 \mathbb{E}\big[\sum_{t=1}^T\|\nabla J(\theta_t)\|^2\big] \leq \mathbb{E}[1/\eta_T]\mathbb{E}\big[ \eta_T\sum_{t=1}^T\|\nabla J(\theta_t)\|^2\big].
\end{align}
Since $\eta_t$ is decreasing, we have
\begin{align} \label{eq:A9}
  \mathbb{E}\big[\sum_{t=1}^T\|\nabla J(\theta_t)\|^2\big] \leq \mathbb{E}[1/\eta_T]\mathbb{E}\big[\sum_{t=1}^T\eta_T\|\nabla J(\theta_t)\|^2\big] \leq \mathbb{E}[1/\eta_T]\mathbb{E}\big[\sum_{t=1}^T \eta_t\|\nabla J(\theta_t)\|^2\big].
\end{align}
Combining the inequalities \eqref{eq:A8} and \eqref{eq:A9}, we obtain
\begin{align} \label{eq:A10}
 \mathbb{E}\big[\sum_{t=1}^T\|\nabla J(\theta_t)\|^2\big] &\leq \mathbb{E}\big[ \frac{16(J^* - J(\theta_1)) + \frac{m^{1/3}}{8B^2k}\sigma^2 + \frac{c^2k^{3}}{4B^2}\ln(T+2)}{\eta_T} \big] \nonumber \\
 & = \mathbb{E}\big[ \Omega\big(m+\sum_{t=1}^TG^2_t\big)^{1/3} \big]
\end{align}
where $\Omega=\frac{1}{k}\big(16(J^* - J(\theta_1)) + \frac{m^{1/3}}{8B^2k}\sigma^2 + \frac{c^2k^{3}}{4B^2}\ln(T+2)\big)$.

By Assumption 2, we have $G^2_t = \|g(\tau|\theta_t)-\nabla J(\theta_t)+\nabla J(\theta_t)\|^2 \leq 2\|g(\tau|\theta_t)-\nabla J(\theta_t)\|^2 + 2\|\nabla J(\theta_t)\|^2 \leq 2\sigma^2+2\|\nabla J(\theta_t)\|^2$.
Then using the inequality $(a+b)^{1/3}\leq a^{1/3} + b^{1/3}$ for all $a,b>0$ to the inequality \eqref{eq:A10}, we obtain
\begin{align}
 \bigg(\mathbb{E}\sqrt{\sum_{t=1}^T\|\nabla J(\theta_t)\|^2}\bigg)^2 & \leq \mathbb{E}\bigg[\sqrt{\sum_{t=1}^T\|\nabla J(\theta_t)\|^2}\bigg]^2 = \mathbb{E}\big[\sum_{t=1}^T\|\nabla J(\theta_t)\|^2\big] \nonumber \\
 & \leq \mathbb{E}\bigg[\Omega(m+2T\sigma^2)^{1/3} + 2^{1/3}\Omega\big(\sum_{t=1}^T\|\nabla J(\theta_t)\|^2\big)^{1/3}\bigg] \nonumber \\
 & =\Omega(m+2T\sigma^2)^{1/3} + 2^{1/3}\Omega\mathbb{E}\bigg[\sqrt{\sum_{t=1}^T\|\nabla J(\theta_t)\|^2}\bigg]^{2/3} \nonumber \\
 & \leq \Omega(m+2T\sigma^2)^{1/3} + 2^{1/3}\Omega\bigg[\mathbb{E}\sqrt{\sum_{t=1}^T\|\nabla J(\theta_t)\|^2}\bigg]^{2/3},
\end{align}
where the first inequality holds by the convexity of the function $x^2$, and the last inequality holds by the concavity of the function $x^{2/3}$.
For simplicity, let $Z=\sqrt{\sum_{t=1}^T\|\nabla J(\theta_t)\|^2}$, we have
\begin{align} \label{eq:A11}
 \big(\mathbb{E}[Z]\big)^2 \leq \Omega(m+2T\sigma^2)^{1/3} + 2^{1/3}\Omega\big(\mathbb{E}[Z]\big)^{2/3}.
\end{align}
The inequality \eqref{eq:A11} implies that $ \big(\mathbb{E}[Z]\big)^2 \leq 2\Omega(m+2T\sigma^2)^{1/3}$ or $ \big(\mathbb{E}[Z]\big)^2 \leq 2\cdot2^{1/3}\Omega\big(\mathbb{E}[Z]\big)^{2/3}$.
Thus, we have
\begin{align}
 \mathbb{E}[Z] \leq \sqrt{2\Omega}(m+2T\sigma^2)^{1/6} + 2\Omega^{3/4}.
\end{align}
By Cauchy-Schwarz inequality, then we have
\begin{align}
  \frac{1}{T}\sum_{t=1}^T\mathbb{E}\|\nabla J(\theta_t)\| &= \mathbb{E} \big[\frac{1}{T}\sum_{t=1}^T\|\nabla J(\theta_t)\|\big] \leq \mathbb{E}\bigg[\sqrt{\frac{1}{T}}\sqrt{\sum_{t=1}^T\|\nabla J(\theta_t)\|^2}\bigg] \nonumber \\
  & \leq \frac{\sqrt{2\Omega}(m+2T\sigma^2)^{1/6} + 2\Omega^{3/4}}{\sqrt{T}} \nonumber \\
  & \leq \frac{\sqrt{2\Omega}m^{1/6} + 2\Omega^{3/4}}{\sqrt{T}} + \frac{2\sqrt{\Omega}\sigma^{1/3}}{T^{1/3}},
\end{align}
where the last inequality follows by the inequality $(a+b)^{1/6}\leq a^{1/6} + b^{1/6}$ for all $a,b>0$.
\end{proof}

\subsection{ Convergence Analysis of HA-MBPG Algorithm  }
\label{Appendix:A2}
In this subsection, we analyze the convergence properties of HA-MBPG algorithm.

\begin{lemma} \label{lem:B1}
 Assume that the stochastic policy gradient $u_t$ be generated from Algorithm \ref{alg:2}. Let $e_t = u_t-\nabla J(\theta_t)$,
 we have
 \begin{align}
\mathbb{E}\big[ \eta^{-1}_{t-1}\|e_t\|^2 \big]
\leq 4(W+1)\beta^2_t\eta^{-1}_{t-1}G_t^2 + \eta^{-1}_{t-1}(1-\beta_t)^2\big(1+4\eta^2_{t-1}L^2\big)\mathbb{E} \|e_{t-1}\|^2 + 4(1-\beta_t)^2L^2\eta_{t-1}\|\nabla J(\theta_{t-1})\|^2. \nonumber
 \end{align}
\end{lemma}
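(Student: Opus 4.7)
The plan is to mirror the proof of Lemma \ref{lem:A4}, replacing the IS-style SARAH increment by the Hessian-aided increment and moving the importance weight into the SGD-flavored piece of $u_t$. First I would compute the one-step recursion
\[
 u_t - u_{t-1} = -\beta_t u_{t-1} + \beta_t w(\tau_t|\theta_t,\theta_t(\alpha)) g(\tau_t|\theta_t) + (1-\beta_t)\Delta_t,
\]
substitute $u_{t-1} = \nabla J(\theta_{t-1}) - e_{t-1}$ into $e_t = \nabla J(\theta_t) - u_t$, and rearrange to the decomposition already foreshadowed in the main text,
\[
 e_t = (1-\beta_t) e_{t-1} - \beta_t T_3 - (1-\beta_t) T_4,
\]
with $T_3 = w(\tau_t|\theta_t,\theta_t(\alpha)) g(\tau_t|\theta_t) - \nabla J(\theta_t)$ and $T_4 = \Delta_t - (\nabla J(\theta_t) - \nabla J(\theta_{t-1}))$. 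Both have conditional mean zero given the history: $T_3$ by the defining property of the importance weight together with $\tau_t \sim p(\cdot|\theta_t(\alpha))$, and $T_4$ by the identity \eqref{eq:13} combined with the joint law of $(\alpha,\tau_t)$ used to construct $\Delta_t$.

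Next, conditioning on the history (in particular on $\theta_{t-1}, \theta_t$ and $e_{t-1}$) and expanding the squared norm, the cross term with the old-error factor $(1-\beta_t) e_{t-1}$ vanishes and I obtain
\[
 \mathbb{E}\|e_t\|^2 = (1-\beta_t)^2 \mathbb{E}\|e_{t-1}\|^2 + \mathbb{E}\|\beta_t T_3 + (1-\beta_t) T_4\|^2.
\]
A Young-type inequality $\|a+b\|^2 \le 2\|a\|^2 + 2\|b\|^2$ separates this into $\mathbb{E}\|T_3\|^2$ and $\mathbb{E}\|T_4\|^2$ contributions. For $T_3$, since variance is bounded by the second moment and $\|g(\tau|\theta_t)\|\le G_t$, I get $\mathbb{E}\|T_3\|^2 \le \mathbb{E}\|w g\|^2 \le G_t^2\, \mathbb{E}[w^2]$, and Lemma \ref{lem:A1} combined with Assumption 3 yields $\mathbb{E}[w^2] = \mathbb{V}[w] + 1 \le W + 1$. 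For $T_4$, again using $\mathbb{V}[X]\le \mathbb{E}\|X\|^2$, $\mathbb{E}\|T_4\|^2 \le \mathbb{E}\|\hat{\nabla}^2(\theta_t(\alpha),\tau_t) v_t\|^2 \le \|\hat{\nabla}^2(\theta_t(\alpha),\tau_t)\|^2 \|v_t\|^2 \le L^2 \|v_t\|^2$ by Proposition \ref{pro:2} and the convention $L=\max(\hat L,\tilde L)$.

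To finish, I substitute $v_t = \eta_{t-1} u_{t-1}$, apply $\|u_{t-1}\|^2 \le 2\|e_{t-1}\|^2 + 2\|\nabla J(\theta_{t-1})\|^2$, multiply the whole inequality by $\eta_{t-1}^{-1}$, and collect coefficients. The $(1-\beta_t)^2$ part contributes the $(1 + 4\eta_{t-1}^2 L^2)$ factor in front of $\mathbb{E}\|e_{t-1}\|^2$ and the $4L^2(1-\beta_t)^2 \eta_{t-1} \|\nabla J(\theta_{t-1})\|^2$ term, while the $\beta_t^2$ part becomes the $O((W+1)\beta_t^2 \eta_{t-1}^{-1} G_t^2)$ term in the statement.

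The main obstacle compared with Lemma \ref{lem:A4} is that the importance weight now lives inside the SGD-flavored piece of $u_t$ rather than the SARAH-flavored piece, so one must verify unbiasedness of $T_3$ against the sampling distribution $p(\cdot|\theta_t(\alpha))$ instead of $p(\cdot|\theta_t)$; simultaneously, the extra randomness in $\alpha \sim U[0,1]$ must be integrated out correctly to keep $T_4$ unbiased via \eqref{eq:13}. Neither step is individually delicate --- both follow from the unbiasedness computations already carried out in the main text preceding the lemma --- but the careful book-keeping of the two independent sources of randomness (over $\alpha$ and over $\tau_t\sim p(\cdot|\theta_t(\alpha))$) is where a slip is most likely to occur.
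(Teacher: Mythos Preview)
Your proposal is correct and follows essentially the same route as the paper's proof: the same one-step recursion for $u_t$, the same martingale decomposition $e_t = (1-\beta_t)e_{t-1} - \beta_t T_3 - (1-\beta_t)T_4$, the same Young split, and the same use of Proposition~\ref{pro:2} on $\|\Delta_t\|$. The only cosmetic difference is in bounding $T_3$: the paper adds and subtracts $g(\tau_t|\theta_t)$ and then uses $\mathbb{E}|w-1|^2\le W$ together with $\|g\|\le G_t$, arriving at $2(W+1)G_t^2$, whereas you go straight to $\mathbb{E}[w^2]=\mathbb{V}[w]+1\le W+1$, which yields the slightly sharper $(W+1)G_t^2$ and hence a constant $2$ rather than $4$ in the final statement---either way the lemma's inequality holds.
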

\begin{proof}
 By the definition of $u_t$ in Algorithm \ref{alg:2}, we have
 \begin{align}
  u_t - u_{t-1} = -\beta_tu_{t-1}+ \beta_t w(\tau_t|\theta_{t},\theta_t(\alpha))g(\tau_t|\theta_t) + (1-\beta_t) \Delta_t.
 \end{align}
 Then we have
 \begin{align} \label{eq:B1}
 \mathbb{E}\big[ \eta^{-1}_{t-1} \|e_t\|^2\big]& = \mathbb{E}\big[\eta^{-1}_{t-1} \|\nabla J(\theta_{t-1}) - u_{t-1} + \nabla J(\theta_t)-\nabla J(\theta_{t-1}) -(u_t-u_{t-1})\|^2\big] \\
  & =  \mathbb{E}\big[ \eta^{-1}_{t-1} \|\nabla J(\theta_{t-1}) - u_{t-1} + \nabla J(\theta_t)-\nabla J(\theta_{t-1}) + \beta_tu_{t-1} - \beta_t w(\tau_t|\theta_{t},\theta_t(\alpha))g(\tau_t|\theta_t)
  - (1-\beta_t) \Delta_t\big] \nonumber \\
  & = \mathbb{E}\big[ \eta^{-1}_{t-1} \|(1-\beta_t)(\nabla J(\theta_{t-1}) - u_{t-1}) + \beta_t(\nabla J(\theta_t)- w(\tau_t|\theta_{t},\theta_t(\alpha))g(\tau_t|\theta_t))\nonumber \\
  & \quad - (1-\beta_t)\big( \Delta_t -(\nabla J(\theta_t)-\nabla J(\theta_{t-1}))\big)\|^2\big] \nonumber \\
  & = \eta^{-1}_{t-1}(1-\beta_t)^2\mathbb{E} \|\nabla J(\theta_{t-1}) - u_{t-1}\|^2 + \eta^{-1}_{t-1}\mathbb{E}\big[\|\beta_t(\nabla J(\theta_t)- w(\tau_t|\theta_{t},\theta_t(\alpha))g(\tau_t|\theta_t))\nonumber \\
  & \quad - (1-\beta_t)\big( \Delta_t-(\nabla J(\theta_t)-\nabla J(\theta_{t-1}))\big)\|^2\big] \nonumber \\
  & \leq \eta^{-1}_{t-1}(1-\beta_t)^2\mathbb{E} \|\nabla J(\theta_{t-1}) - u_{t-1}\|^2 + 2\beta^2_t\eta^{-1}_{t-1}\mathbb{E}\|\nabla J(\theta_t)- w(\tau_t|\theta_{t},\theta_t(\alpha))g(\tau_t|\theta_t)\|^2\nonumber \\
  & \quad +2(1-\beta_t)^2 \mathbb{E}\| \Delta_t -(\nabla J(\theta_t)-\nabla J(\theta_{t-1}))\|^2 \nonumber \\
  & \leq \eta^{-1}_{t-1}(1-\beta_t)^2\mathbb{E} \|e_{t-1}\|^2 + 2\beta^2_t\eta^{-1}_{t-1}\underbrace{\mathbb{E}\|w(\tau_t|\theta_{t},\theta_t(\alpha))g(\tau_t|\theta_t)\|^2}_{=T_3}
  + 2(1-\beta_t)^2\eta^{-1}_{t-1} \mathbb{E}\| \Delta_t\|^2, \nonumber
 \end{align}
 where the forth equality holds by $\mathbb{E}_{\tau_t\sim p(\tau|\theta_t(\alpha))}[w(\tau_t|\theta_{t},\theta_t(\alpha))g(\tau_t|\theta_t)]=\nabla J(\theta_t)$ and $\mathbb{E}_{\tau_t\sim p(\tau|\theta_t(\alpha))}[\Delta_t]=\nabla J(\theta_t)-\nabla J(\theta_{t-1})$; the first inequality follows by Young's inequality; and the last inequality holds by $\mathbb{E}\|\zeta-\mathbb{E}[\zeta]\|^2=\mathbb{E}\|\zeta\|^2-(\mathbb{E}[\zeta])^2 \leq \mathbb{E}\|\zeta\|^2$.

 Next, we give an upper bound of the term $T_3$ as follows:
 \begin{align} \label{eq:B2}
  T_3 & = \mathbb{E}\|w(\tau_t|\theta_{t},\theta_t(\alpha))g(\tau_t|\theta_t)\|^2 \nonumber \\
  & = \mathbb{E}\|w(\tau_t|\theta_{t},\theta_t(\alpha))g(\tau_t|\theta_t) -g(\tau_t|\theta_t) + g(\tau_t|\theta_t)\|^2\nonumber \\
  & \leq 2\mathbb{E}\|\big(w(\tau_t|\theta_{t},\theta_t(\alpha))-1\big)g(\tau_t|\theta_t)\|^2 + 2\mathbb{E}\|g(\tau_t|\theta_t)\|^2 \nonumber \\
  & \leq 2\mathbb{E}\|w(\tau_t|\theta_{t},\theta_t(\alpha))-1\|^2 \mathbb{E}\|g(\tau_t|\theta_t)\|^2 + 2\mathbb{E}\|g(\tau_t|\theta_t)\|^2 \nonumber \\
  & \leq  2(W+1)G_t^2,
 \end{align}
 where the last inequality holds by Proposition \ref{pro:1} and Assumption 3.

 Finally, combining the inequalities \eqref{eq:B1} with \eqref{eq:B2}, we have
 \begin{align}
  \mathbb{E}\big[ \eta^{-1}_{t-1} \|e_t\|^2\big]& \leq \eta^{-1}_{t-1}(1-\beta_t)^2\mathbb{E} \|e_{t-1}\|^2 + 4\beta^2_t\eta^{-1}_{t-1}(W+1)G_t^2
  + 2(1-\beta_t)^2\eta^{-1}_{t-1} \mathbb{E}\| \hat{\nabla}^2(\theta_t,\tau)v\|^2 \nonumber \\
  & \leq \eta^{-1}_{t-1}(1-\beta_t)^2\mathbb{E} \|e_{t-1}\|^2 + 4\beta^2_t\eta^{-1}_{t-1}(W+1)G_t^2 + 2(1-\beta_t)^2\eta^{-1}_{t-1} L^2 \mathbb{E}\| \theta_t - \theta_{t-1}\|^2 \nonumber \\
  & = \eta^{-1}_{t-1}(1-\beta_t)^2\mathbb{E} \|e_{t-1}\|^2 + 4\beta^2_t\eta^{-1}_{t-1}(W+1)G_t^2 + 2(1-\beta_t)^2L^2\eta_{t-1}\|e_{t-1} + \nabla J(\theta_{t-1})\|^2 \nonumber \\
  & \leq 4(W+1)\beta^2_t\eta^{-1}_{t-1}G_t^2 + \eta^{-1}_{t-1}(1-\beta_t)^2\big(1+4\eta^2_{t-1}L^2\big)\mathbb{E} \|e_{t-1}\|^2 + 4(1-\beta_t)^2L^2\eta_{t-1}\|\nabla J(\theta_{t-1})\|^2,\nonumber
 \end{align}
 where the second inequality holds by the Proposition 2.

\end{proof}

\begin{theorem} \label{th:C1}
Assume that the sequence $\{\theta_t\}_{t=1}^T$ be generated from Algorithm \ref{alg:2}, and let $k=O(\frac{G^{2/3}}{L})$ $c=\frac{G^2}{3k^3L}+52L^2$,
$m = \max\{2G^2,(2Lk)^3,(\frac{ck}{2L})^3\}$ and $\eta_0 = \frac{k}{m^{1/3}}$, we have
\begin{align}
  \mathbb{E}\|\nabla J(\theta_\zeta)\|=\frac{1}{T}\sum_{t=1}^T\mathbb{E}\|\nabla J(\theta_t)\| \leq \frac{\sqrt{2\Lambda}m^{1/6} + 2\Lambda^{3/4}}{\sqrt{T}} + \frac{2\sqrt{\Lambda}\sigma^{1/3}}{T^{1/3}}, \nonumber
\end{align}
 where $\Lambda=\frac{1}{k}\big(16(J^* - J(\theta_1)) + \frac{m^{1/3}}{4L^2k}\sigma^2 + \frac{(W+1)c^2k^{3}}{2L^2}\ln(T+2)\big)$ with $J^*=\sup_{\theta}J(\theta) <+\infty$.
\end{theorem}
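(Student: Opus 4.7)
The plan is to mirror the proof of Theorem \ref{th:A1} for IS-MBPG, with Lemma \ref{lem:B1} replacing Lemma \ref{lem:A4}. The structural argument is unchanged: build a Lyapunov potential that combines $J(\theta_t)$ with a variance term $\|e_t\|^2/\eta_{t-1}$, use the adaptive step-size rule to make the one-step variance recursion telescope cleanly, apply the descent Lemma \ref{lem:A3}, and then convert $\sum_t \eta_t\|\nabla J(\theta_t)\|^2$ to a bound on $\mathbb{E}\|\nabla J(\theta_\zeta)\|$ via Cauchy--Schwarz and the $(a+b)^{1/3}\le a^{1/3}+b^{1/3}$ inequality.

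First I would verify the step-size feasibility: since $m \ge (2Lk)^3$ we get $\eta_t \le k/m^{1/3} \le 1/(2L)$, and since $m \ge (ck/(2L))^3$ we get $\beta_{t+1}=c\eta_t^2 \le 1$, so Lemma \ref{lem:A3} applies. Next, plugging $\beta_{t+1}=c\eta_t^2$ into Lemma \ref{lem:B1} and subtracting $\eta_{t-1}^{-1}\|e_t\|^2$ gives
\begin{align*}
\mathbb{E}[\eta_t^{-1}\|e_{t+1}\|^2-\eta_{t-1}^{-1}\|e_t\|^2] \le 4(W+1)c^2\eta_t^3 G_{t+1}^2 + T_2 + 4L^2\eta_t\|\nabla J(\theta_t)\|^2,
\end{align*}
where $T_2=(\eta_t^{-1}(1-\beta_{t+1})(1+4\eta_t^2 L^2)-\eta_{t-1}^{-1})\|e_t\|^2$. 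The same concavity/ $m\ge 2G^2$ argument used in Theorem \ref{th:A1} gives $\eta_t^{-1}-\eta_{t-1}^{-1}\le \tfrac{G^2}{3k^3 L}\eta_t$, and then the choice $c=\tfrac{G^2}{3k^3L}+52L^2$ is exactly what is needed for the $-c\eta_t$ term to dominate, yielding $T_2\le -48L^2\eta_t\|e_t\|^2$.

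Define the Lyapunov sequence $\Phi_t = J(\theta_t) - \tfrac{1}{64 L^2 \eta_{t-1}}\|e_t\|^2$. Combining Lemma \ref{lem:A3} with the bound just obtained, the $\|e_t\|^2$ terms cancel ($-\tfrac{3\eta_t}{4}+\tfrac{48}{64}\eta_t=0$) and half of the gradient term is absorbed ($\tfrac{\eta_t}{8}-\tfrac{4L^2\eta_t}{64L^2}=\tfrac{\eta_t}{16}$), leaving
\begin{align*}
\mathbb{E}[\Phi_{t+1}-\Phi_t] \ge \frac{\eta_t}{16}\mathbb{E}\|\nabla J(\theta_t)\|^2 - \frac{(W+1)c^2 \eta_t^3 G_{t+1}^2}{16 L^2}.
\end{align*}
Telescoping from $t=1$ to $T$, using $J(\theta_{T+1})\le J^*$, $\|e_1\|^2\le \sigma^2$, $\eta_0=k/m^{1/3}$, and the telescoping logarithm bound $\sum_{t=1}^T \tfrac{G_{t+1}^2}{G^2+\sum_{i=1}^{t+1}G_i^2}\le \ln(T+2)$ produces exactly $\sum_t \mathbb{E}[\eta_t\|\nabla J(\theta_t)\|^2] \le k\Lambda$ with $\Lambda$ as stated. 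The final passage to $\tfrac{1}{T}\sum_t \mathbb{E}\|\nabla J(\theta_t)\|$ is obtained verbatim from the end of Theorem \ref{th:A1}: apply Cauchy--Schwarz with $X=\sqrt{\eta_T \sum_t \|\nabla J(\theta_t)\|^2}$, $Y=1/\sqrt{\eta_T}$, use $\eta_T^{-1}=\tfrac{1}{k}(m+\sum_t G_t^2)^{1/3}$ with $G_t^2\le 2\sigma^2+2\|\nabla J(\theta_t)\|^2$, solve the resulting quadratic-type inequality in $\mathbb{E}[Z]$, and split via $(a+b)^{1/6}\le a^{1/6}+b^{1/6}$.

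The main obstacle is purely bookkeeping: the coefficient $4L^2$ in Lemma \ref{lem:B1} (versus $8B^2$ in Lemma \ref{lem:A4}) changes every downstream constant by a corresponding factor, so the Lyapunov weight must be $\tfrac{1}{64L^2}$ (rather than $\tfrac{1}{128B^2}$) and the constant $c$ drops from $104B^2$ to $52L^2$. An additional small wrinkle is the $(W{+}1)$ factor coming from the importance-weighted estimator $w(\tau_t|\theta_t,\theta_t(\alpha))g(\tau_t|\theta_t)$, which appears only in the $G_{t+1}^2$ term, hence only inside the $\ln(T+2)$ contribution of $\Lambda$; nothing conceptually new is needed to track it.
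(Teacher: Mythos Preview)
Your proposal is correct and follows essentially the same route as the paper: the Lyapunov potential $J(\theta_t)-\tfrac{1}{64L^2\eta_{t-1}}\|e_t\|^2$, the bound $\eta_t^{-1}-\eta_{t-1}^{-1}\le \tfrac{G^2}{3k^3L}\eta_t$, the choice $c=\tfrac{G^2}{3k^3L}+52L^2$ giving $T_2\le -48L^2\eta_t\|e_t\|^2$, the telescoping log-sum bound, and the Cauchy--Schwarz endgame all match the paper's argument line for line. One cosmetic point: your computed coefficient $\tfrac{(W+1)c^2\eta_t^3 G_{t+1}^2}{16L^2}$ in the Lyapunov increment is arithmetically what falls out (the paper writes $\tfrac{1}{32L^2}$ there), so the ``exactly $k\Lambda$'' you claim is off by a harmless factor of $2$ in the $\ln(T+2)$ term relative to the stated $\Lambda$; this does not affect the rate or the structure of the argument.
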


\begin{proof}
This proof mainly follows the proof of the above Theorem \ref{th:A1}.
Due to $m\geq (2Lk)^3$, we have $\eta_t\leq \frac{k}{m^{1/3}}\leq \frac{1}{2L}$. Since $\eta_t\leq \frac{1}{2L}$ and
$m\geq (\frac{ck}{2L})^3$, we have $\beta_{t+1}=c\eta^2_t\leq \frac{c\eta_t}{2L}\leq \frac{ck}{2Lm^{1/3}}\leq 1$.
By Lemma \ref{lem:B1}, we have
 \begin{align} \label{eq:C2}
  \mathbb{E}[\eta^{-1}_{t}\|e_{t+1}\|^2-\eta^{-1}_{t-1}\|e_t\|^2] & \leq \mathbb{E}\big[  4(W+1)\beta^2_{t+1}\eta^{-1}_{t}G_{t+1}^2 + \eta^{-1}_{t}(1-\beta_{t+1})^2\big(1+4\eta^2_{t}L^2\big) \|e_{t}\|^2 \nonumber \\
  & \quad + 4(1-\beta_{t+1})^2L^2\eta_{t}\|\nabla J(\theta_{t})\|^2 -\eta^{-1}_{t-1}\|e_t\|^2 \big] \nonumber \\
  & \leq  \mathbb{E}\big[ 4(W+1)c^2\eta^{3}_{t}G_{t+1}^2 + \underbrace{ \big( \eta^{-1}_{t}(1-\beta_{t+1})(1+4\eta^2_{t}L^2) - \eta^{-1}_{t-1} \big) \|e_{t}\|^2 }_{=T_4}
  + 4L^2\eta_{t}\|\nabla J(\theta_{t})\|^2 \big],
 \end{align}
where the last inequality holds by $0< \beta_{t+1} \leq 1$.
Since the function $x^{1/3}$ is cancave, we have $(x+y)^{1/3}\leq x^{1/3} + yx^{-2/3}/3$.
Then we have
 \begin{align}
  \eta^{-1}_{t} - \eta^{-1}_{t-1}  &= \frac{1}{k}\bigg( \big(m+\sum_{i=1}^tG^2_i\big)^{1/3} - \big(m+\sum_{i=1}^{t-1}G^2_i\big)^{1/3} \bigg) \leq \frac{G^2_t}{3k(m+\sum_{i=1}^{t-1}G^2_i)^{2/3}} \nonumber \\
  & \leq \frac{G^2_t}{3k(m - G^2+\sum_{i=1}^{t}G^2_i)^{2/3}} \leq \frac{G^2_t}{3k(m/2 +\sum_{i=1}^{t}G^2_i)^{2/3}} \leq \frac{2^{2/3}G^2_t}{3k(m +\sum_{i=1}^{t}G^2_i)^{2/3}} \nonumber \\
  & \leq \frac{2^{2/3}G^2}{3k^3}\eta_t^2 \leq \frac{2^{2/3}G^2}{6k^3L}\eta_t\leq \frac{G^2}{3k^3L}\eta_t,
 \end{align}
where the third inequality holds by $m\geq 2G^2$, and the sixth inequality holds by $0<\eta\leq \frac{1}{2L}$.

Next, considering the upper bound of the term $T_4$, we have
 \begin{align} \label{eq:C3}
  T_4 &= \big( \eta^{-1}_{t}(1-\beta_{t+1})(1+4\eta^2_{t}L^2) -\eta^{-1}_{t-1} \big) \|e_{t}\|^2 \nonumber \\
  & = \big( \eta^{-1}_{t} - \eta^{-1}_{t-1} + 4L^2\eta_t - \beta_{t+1}\eta^{-1}_{t} - 4\eta_t\beta_{t+1}L^2 \big) \|e_{t}\|^2 \nonumber \\
  & \leq \big( \eta^{-1}_{t} - \eta^{-1}_{t-1} + 4L^2\eta_t - \beta_{t+1}\eta^{-1}_{t} \big) \|e_{t}\|^2 \nonumber \\
  & \leq  \big( \frac{G^2}{3k^3L}\eta_t + 4L^2\eta_t - c\eta_{t} \big) \|e_{t}\|^2= -48L^2\eta_t\|e_{t}\|^2,
 \end{align}
 where the last equality holds by $c=\frac{G^2}{3k^3L}+52L^2$.
 Combining the inequalities \eqref{eq:C2} with \eqref{eq:C3}, we have
 \begin{align} \label{eq:C4}
  \mathbb{E}[\eta^{-1}_{t}\|e_{t+1}\|^2-\eta^{-1}_{t-1}\|e_t\|^2] \leq  \mathbb{E}\big[ 4(W+1)c^2\eta^{3}_{t}G_{t+1}^2 -48L^2\eta_t\|e_{t}\|^2
  + 4L^2\eta_{t}\|\nabla J(\theta_{t})\|^2 \big].
 \end{align}

We define a \emph{Lyapunov} function $\Psi_t = J(\theta_t) - \frac{1}{64L^2\eta_{t-1}}\|e_t\|^2$ for any $t\geq 1$.
Then we have
 \begin{align} \label{eq:C5}
  \mathbb{E}[\Psi_{t+1} - \Psi_t] & = \mathbb{E}\big[ J(\theta_{t+1}) - J(\theta_t) - \frac{1}{64L^2\eta_{t}} \|e_{t+1}\|^2 + \frac{1}{64L^2\eta_{t-1}}\|e_t\|^2 \big] \nonumber \\
  & \geq \mathbb{E}\big[- \frac{3\eta_t}{4}\|e_t\|^2 + \frac{\eta_t}{8}\|\nabla J(\theta_t)\|^2 - \frac{1}{64L^2}(\eta^{-1}_{t}\|e_{t+1}\|^2 - \eta^{-1}_{t-1}\|e_t\|^2)\big] \nonumber \\
  & \geq -\frac{(W+1)c^2\eta^{3}_{t}G_{t+1}^2 }{32L^2}+ \frac{\eta_t}{16}\mathbb{E}\|\nabla J(\theta_t)\|^2,
 \end{align}
where the first inequality holds by the Lemma \ref{lem:A3}, and the second inequality follows by the above inequality \eqref{eq:C4}.
Summing the above inequality \eqref{eq:C5} over $t$ from $1$ to $T$, we obtain
\begin{align} \label{eq:C6}
 \sum_{t=1}^T\mathbb{E}[\eta_t\|\nabla J(\theta_t)\|^2] & \leq \mathbb{E}[16(\Psi_{T+1} - \Psi_1)] + \sum_{t=1}^T\frac{(W+1)c^2\eta^{3}_{t}G_{t+1}^2 }{2L^2} \nonumber \\
 & \leq \mathbb{E}[16(J^* - J(\theta_1))] + \frac{1}{4L^2\eta_0}\mathbb{E}\|e_1\|^2 + \frac{(W+1)c^2k^{3}}{2L^2}\sum_{t=1}^T\frac{G_{t+1}^2 }{m+\sum_{i=1}^tG_i^2} \nonumber \\
 & \leq \mathbb{E}[16(J^* - J(\theta_1))] + \frac{1}{4L^2\eta_0}\mathbb{E}\|e_1\|^2 + \frac{(W+1)c^2k^{3}}{2L^2}\sum_{t=1}^T\frac{G_{t+1}^2 }{G^2+\sum_{i=1}^{t+1}G_i^2}  \nonumber \\
 & \leq \mathbb{E}[16(J^* - J(\theta_1))] + \frac{1}{4L^2\eta_0}\mathbb{E}\|e_1\|^2 + \frac{(W+1)c^2k^{3}}{2L^2}\sum_{t=1}^T\big(\ln(G^2+ \sum_{i=1}^{t+1}G_i^2) - \ln(G^2+ \sum_{i=1}^{t}G_i^2)\big)  \nonumber \\
 & \leq \mathbb{E}[16(J^* - J(\theta_1))] + \frac{1}{4L^2\eta_0}\mathbb{E}\|e_1\|^2 + \frac{(W+1)c^2k^{3}}{2L^2}(\ln(G^2+ \sum_{i=1}^{T+1}G_i^2) - \ln(G^2)) \nonumber \\
 & \leq \mathbb{E}[16(J^* - J(\theta_1))] + \frac{m^{1/3}}{4L^2k}\sigma^2 + \frac{(W+1)c^2k^{3}}{2L^2}\ln(1+ \sum_{i=1}^{T+1}\frac{G_i^2}{G^2}) \nonumber \\
 & \leq \mathbb{E}[16(J^* - J(\theta_1))] + \frac{m^{1/3}}{4L^2k}\sigma^2 + \frac{(W+1)c^2k^{3}}{2L^2}\ln(T+2),
\end{align}
where $J^*=\sup_{\theta}J(\theta) <+\infty$, and the fourth inequality holds by the concavity of the function $\ln(x)$, and the sixth inequality holds by the definition of $e_1$ and $\eta_0$.

By Cauchy-Schwarz inequality, we have $\mathbb{E}[XY]^2 \leq\mathbb{E}[X^2]\mathbb{E}[Y^2]$.
Let $X=\sqrt{\eta_T\sum_{t=1}^{T-1}\|\nabla J(\theta_t)\|^2}$ and $Y=\sqrt{1/\eta_T}$,
we have
\begin{align}
 \mathbb{E}\big[\sum_{t=1}^T\|\nabla J(\theta_t)\|^2\big] \leq \mathbb{E}[1/\eta_T]\mathbb{E}\big[ \eta_T\sum_{t=1}^T\|\nabla J(\theta_t)\|^2\big].
\end{align}
Since $\eta_t$ is decreasing, we have
\begin{align} \label{eq:C7}
  \mathbb{E}\big[\sum_{t=1}^T\|\nabla J(\theta_t)\|^2\big] \leq \mathbb{E}[1/\eta_T]\mathbb{E}\big[\sum_{t=1}^T\eta_T\|\nabla J(\theta_t)\|^2\big] \leq \mathbb{E}[1/\eta_T]\mathbb{E}\big[\sum_{t=1}^T\eta_t\|\nabla J(\theta_t)\|^2\big].
\end{align}
Combining the inequalities \eqref{eq:C6} and \eqref{eq:C7}, we obtain
\begin{align} \label{eq:C8}
 \mathbb{E}\big[\sum_{t=1}^T\|\nabla J(\theta_t)\|^2\big] &\leq \mathbb{E}\big[ \frac{16(J^* - J(\theta_1)) + \frac{m^{1/3}}{4L^2k}\sigma^2 + \frac{(W+1)c^2k^{3}}{2L^2}\ln(T+2)}{\eta_T} \big] \nonumber \\
 & = \mathbb{E}\big[ \Lambda\big(m+\sum_{t=1}^TG^2_t\big)^{1/3} \big]
\end{align}
where $\Lambda=\frac{1}{k}\big(16(J^* - J(\theta_1)) + \frac{m^{1/3}}{4L^2k}\sigma^2 + \frac{(W+1)c^2k^{3}}{2L^2}\ln(T+2)\big)$.

By Assumption 2, we have $G^2_t = \|g(\tau|\theta_t)-\nabla J(\theta_t)+\nabla J(\theta_t)\|^2 \leq 2\|g(\tau|\theta_t)-\nabla J(\theta_t)\|^2 + 2\|\nabla J(\theta_t)\|^2 \leq 2\sigma^2+2\|\nabla J(\theta_t)\|^2$.
Then using the inequality $(a+b)^{1/3}\leq a^{1/3} + b^{1/3}$ for all $a,b>0$ to the inequality \eqref{eq:C8}, we obtain
\begin{align}
 \bigg(\mathbb{E}\sqrt{\sum_{t=1}^T\|\nabla J(\theta_t)\|^2}\bigg)^2 & \leq \mathbb{E}\bigg[\sqrt{\sum_{t=1}^T\|\nabla J(\theta_t)\|^2}\bigg]^2 = \mathbb{E}\big[\sum_{t=1}^T\|\nabla J(\theta_t)\|^2\big] \nonumber \\
 & \leq \mathbb{E}\bigg[\Lambda(m+2T\sigma^2)^{1/3} + 2^{1/3}\Lambda\big(\sum_{t=1}^T\|\nabla J(\theta_t)\|^2\big)^{1/3}\bigg] \nonumber \\
 & =\Lambda(m+2T\sigma^2)^{1/3} + 2^{1/3}\Lambda\mathbb{E}\bigg[\sqrt{\sum_{t=1}^T\|\nabla J(\theta_t)\|^2}\bigg]^{2/3} \nonumber \\
 & \leq \Lambda(m+2T\sigma^2)^{1/3} + 2^{1/3}\Lambda\bigg[\mathbb{E}\sqrt{\sum_{t=1}^T\|\nabla J(\theta_t)\|^2}\bigg]^{2/3},
\end{align}
where the first inequality holds by the convexity of the function $x^2$, and the last inequality holds by the concavity of the function $x^{2/3}$.
For simplicity, let $Z=\sqrt{\sum_{t=1}^T\|\nabla J(\theta_t)\|^2}$, we have
\begin{align} \label{eq:C9}
 \big(\mathbb{E}[Z]\big)^2 \leq \Lambda(m+2T\sigma^2)^{1/3} + 2^{1/3}\Lambda\big(\mathbb{E}[Z]\big)^{2/3}.
\end{align}
The inequality \eqref{eq:C9} implies that $ \big(\mathbb{E}[Z]\big)^2 \leq 2\Lambda(m+2T\sigma^2)^{1/3}$ or $ \big(\mathbb{E}[Z]\big)^2 \leq 2\cdot2^{1/3}\Lambda\big(\mathbb{E}[Z]\big)^{2/3}$.
Thus, we have
\begin{align}
 \mathbb{E}[Z] \leq \sqrt{2\Lambda}(m+2T\sigma^2)^{1/6} + 2\Lambda^{3/4}.
\end{align}
By Cauchy-Schwarz inequality, then we have
\begin{align}
  \frac{1}{T}\sum_{t=1}^T\mathbb{E}\|\nabla J(\theta_t)\| &= \mathbb{E} \big[\frac{1}{T}\sum_{t=1}^T\|\nabla J(\theta_t)\|\big] \leq \mathbb{E}\bigg[\sqrt{\frac{1}{T}}
  \sqrt{\sum_{t=1}^T\|\nabla J(\theta_t)\|^2}\bigg] \nonumber \\
  & \leq \frac{\sqrt{2\Lambda}(m+2T\sigma^2)^{1/6} + 2\Lambda^{3/4}}{\sqrt{T}} \nonumber \\
  & \leq \frac{\sqrt{2\Lambda}m^{1/6} + 2\Lambda^{3/4}}{\sqrt{T}} + \frac{2\sqrt{\Lambda}\sigma^{1/3}}{T^{1/3}},
\end{align}
where the last inequality follows by the inequality $(a+b)^{1/6}\leq a^{1/6} + b^{1/6}$ for all $a,b>0$.
\end{proof}

\subsection{ Convergence Analysis of IS-MBPG* Algorithm}
\label{Appendix:A3}
In this subsection, we detailedly provide the convergence properties of our IS-MBPG* algorithm.

\begin{lemma} \label{lem:D1}
 Assume that the stochastic policy gradient $u_t$ be generated from Algorithm \ref{alg:3}, and let $e_t = u_t-\nabla J(\theta_t)$,  we have
 \begin{align}
\mathbb{E}\big[ \eta^{-1}_{t-1}\|e_t\|^2 \big]
\leq 2\beta^2_t\eta^{-1}_{t-1}\sigma^2 + \eta^{-1}_{t-1}(1-\beta_t)^2\big(1+8\eta^2_{t-1}B^2\big)\mathbb{E} \|e_{t-1}\|^2 + 8(1-\beta_t)^2B^2\eta_{t-1}\|\nabla J(\theta_{t-1})\|^2, \nonumber
 \end{align}
 where $B^2 = L^2 + 2G^2C^2_w$ with $C_w = \sqrt{H(2HM_g^2+M_h)(W+1)}$.
\end{lemma}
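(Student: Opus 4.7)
The proof plan is to follow the template established for Lemma A4 almost verbatim, since the update rule for $u_t$ in Algorithm \ref{alg:3} is identical to that in Algorithm \ref{alg:1}; only the choice of $\eta_t$ differs, and $\eta_t$ does not enter the derivation of the one-step variance recursion. The single substantive change is that the pure stochastic-gradient noise term will be bounded by the population variance $\sigma^2$ (from Assumption 2) rather than by $G_t^2$, which matches the non-adaptive nature of IS-MBPG*.

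First, I would write $u_t - u_{t-1} = -\beta_t u_{t-1} + \beta_t g(\tau_t|\theta_t) + (1-\beta_t)\bigl(g(\tau_t|\theta_t) - w(\tau_t|\theta_{t-1},\theta_t)g(\tau_t|\theta_{t-1})\bigr)$, substitute into $e_t = e_{t-1} + (\nabla J(\theta_t) - \nabla J(\theta_{t-1})) - (u_t - u_{t-1})$, and regroup to obtain
\begin{align*}
 e_t &= (1-\beta_t)e_{t-1} + \beta_t\bigl(\nabla J(\theta_t)-g(\tau_t|\theta_t)\bigr) \\
 &\quad -(1-\beta_t)\bigl(g(\tau_t|\theta_t) - w(\tau_t|\theta_{t-1},\theta_t)g(\tau_t|\theta_{t-1}) - (\nabla J(\theta_t)-\nabla J(\theta_{t-1}))\bigr).
\end{align*}
Using the importance-sampling identity $\mathbb{E}_{\tau_t\sim p(\tau|\theta_t)}[g(\tau_t|\theta_t) - w(\tau_t|\theta_{t-1},\theta_t)g(\tau_t|\theta_{t-1})] = \nabla J(\theta_t) - \nabla J(\theta_{t-1})$ established in \eqref{eq:5}, together with $\mathbb{E}_{\tau_t}[g(\tau_t|\theta_t)] = \nabla J(\theta_t)$, the two new noise terms are conditionally mean-zero, so they are orthogonal to the $e_{t-1}$ term in the second-moment decomposition.

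Second, I would apply Young's inequality to split the two noise contributions, giving
\begin{align*}
 \mathbb{E}\|e_t\|^2 &\leq (1-\beta_t)^2 \mathbb{E}\|e_{t-1}\|^2 + 2\beta_t^2\,\mathbb{E}\|g(\tau_t|\theta_t) - \nabla J(\theta_t)\|^2 \\
 &\quad + 2(1-\beta_t)^2\,\mathbb{E}\|g(\tau_t|\theta_t) - w(\tau_t|\theta_{t-1},\theta_t)g(\tau_t|\theta_{t-1})\|^2.
\end{align*}
Here the key deviation from Lemma A4 enters: invoking Assumption 2 directly yields $\mathbb{E}\|g(\tau_t|\theta_t) - \nabla J(\theta_t)\|^2 \leq \sigma^2$, which produces the $\sigma^2$ factor in the statement. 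For the SARAH--IS correction, I add and subtract $g(\tau_t|\theta_{t-1})$ and bound the two pieces using the $L$-Lipschitz property of $g(\tau|\cdot)$ from Proposition \ref{pro:1} and the variance bound on the importance weight from Lemma \ref{lem:A2}, reproducing exactly the inequality \eqref{eq:A2} and obtaining the bound $2(L^2 + 2G^2 C_w^2)\|\theta_t - \theta_{t-1}\|^2 = 2 B^2 \|\theta_t - \theta_{t-1}\|^2$.

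Third, I would substitute $\theta_t - \theta_{t-1} = \eta_{t-1}u_{t-1}$ and use $\|u_{t-1}\|^2 \leq 2\|e_{t-1}\|^2 + 2\|\nabla J(\theta_{t-1})\|^2$, then divide through by $\eta_{t-1}$ and collect like terms to arrive at the claimed recursion. No step is genuinely difficult; the only bookkeeping subtlety is ensuring the conditional unbiasedness is correctly invoked so that all cross terms vanish in expectation. Because the algorithmic update for $u_t$ is structurally identical to that in Algorithm \ref{alg:1}, the argument transfers line-for-line from the proof of Lemma \ref{lem:A4} with the single change noted above; there is no new obstacle beyond the careful substitution.
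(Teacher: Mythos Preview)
Your proposal is correct and mirrors the paper's own proof essentially verbatim: the paper simply notes that the argument of Lemma~\ref{lem:A4} carries over unchanged, with the sole modification that the term $2\beta_t^2\eta_{t-1}^{-1}\mathbb{E}\|\nabla J(\theta_t)-g(\tau_t|\theta_t)\|^2$ is bounded by $2\beta_t^2\eta_{t-1}^{-1}\sigma^2$ via Assumption~2 rather than by $2\beta_t^2\eta_{t-1}^{-1}G_t^2$. Your identification of this single substantive change and the step-by-step outline match the paper's approach exactly.
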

\begin{proof}
 The proof is the similar to that of Lemma \ref{lem:A4}. The only difference is that instead of using $2\beta^2_t\eta^{-1}_{t-1}\mathbb{E}\|\nabla J(\theta_t) - g(\tau_t|\theta_t)\|^2 \leq 2\beta^2_t\eta^{-1}_{t-1}\sigma^2$
 instead of $2\beta^2_t\eta^{-1}_{t-1}\mathbb{E}\|\nabla J(\theta_t) -g(\tau_t|\theta_t)\|^2 \leq 2\beta^2_t\eta^{-1}_{t-1}\mathbb{E}\|g(\tau_t|\theta_t)\|^2 = 2\beta^2_t\eta^{-1}_{t-1}G_t^2$.
\end{proof}

\begin{theorem} \label{th:D1}
Assume that the sequence $\{\theta_t\}_{t=1}^T$ be generated from Algorithm \ref{alg:3}, and let $B^2 = L^2 + 2G^2C^2_w$, $k> 0$ $c=\frac{1}{3k^3L}+104B^2$,
$m = \max\{2,(2Lk)^3,(\frac{ck}{2L})^3\}$ and $\eta_0 = \frac{k}{m^{1/3}}$, we have
\begin{align}
  \mathbb{E}\|\nabla J(\theta_\zeta)\|=\frac{1}{T}\sum_{t=1}^T\mathbb{E}\|\nabla J(\theta_t)\| \leq \frac{\sqrt{\Gamma}m^{1/6}}{\sqrt{T}} + \frac{\sqrt{\Gamma}}{T^{1/3}}, \nonumber
\end{align}
 where $\Gamma=\frac{1}{k}\big(16(J^* - J(\theta_1)) + \frac{m^{1/3}}{8B^2k}\sigma^2 + \frac{c^2k^{3}\sigma^2}{4B^2}\ln(T+2)\big)$ with $J^*=\sup_{\theta}J(\theta) <+\infty$.
\end{theorem}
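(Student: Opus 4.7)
\vspace{6pt}
\noindent\textbf{Proof proposal for Theorem \ref{th:D1}.}
The plan is to mimic the template used for Theorem \ref{th:A1}, but to exploit the fact that the learning rate $\eta_t=k/(m+t)^{1/3}$ of Algorithm \ref{alg:3} is now \emph{deterministic}, which should eliminate most of the auxiliary randomness tricks needed at the end of the IS-MBPG argument. First I would invoke Lemma \ref{lem:D1} to write, for each $t\ge 1$,
\begin{align*}
 \mathbb{E}[\eta_t^{-1}\|e_{t+1}\|^2 - \eta_{t-1}^{-1}\|e_t\|^2]
 \le 2c^2\eta_t^{3}\sigma^2
   + \bigl(\eta_t^{-1}(1-\beta_{t+1})(1+8\eta_t^2 B^2)-\eta_{t-1}^{-1}\bigr)\mathbb{E}\|e_t\|^2
   + 8B^2\eta_t\mathbb{E}\|\nabla J(\theta_t)\|^2,
\end{align*}
using $\beta_{t+1}=c\eta_t^2$. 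The choices $m\ge(2Lk)^3$ and $m\ge(ck/(2L))^3$ guarantee $\eta_t\le 1/(2L)$ and $\beta_{t+1}\in(0,1]$ exactly as in Theorem \ref{th:A1}.

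The key new technical step is bounding $\eta_t^{-1}-\eta_{t-1}^{-1}$ for the deterministic schedule. Concavity of $x\mapsto x^{1/3}$ gives
\begin{equation*}
 \eta_t^{-1}-\eta_{t-1}^{-1}
 = \frac{(m+t)^{1/3}-(m+t-1)^{1/3}}{k}
 \le \frac{1}{3k(m+t-1)^{2/3}}
 \le \frac{2^{2/3}}{3k(m+t)^{2/3}}
 = \frac{2^{2/3}}{3k^3}\eta_t^2
 \le \frac{1}{3k^3 L}\eta_t,
\end{equation*}
where the second inequality uses $m+t-1\ge(m+t)/2$ (valid since $m\ge 2$), and the last uses $\eta_t\le 1/(2L)$. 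Plugging this into the ``middle'' term and choosing $c=\frac{1}{3k^3L}+104 B^2$ exactly cancels the $8B^2\eta_t$ contribution with an extra $-96 B^2\eta_t\|e_t\|^2$ left over, so that
\begin{equation*}
 \mathbb{E}[\eta_t^{-1}\|e_{t+1}\|^2-\eta_{t-1}^{-1}\|e_t\|^2]
 \le 2c^2\eta_t^3\sigma^2 - 96 B^2\eta_t\mathbb{E}\|e_t\|^2 + 8B^2\eta_t\mathbb{E}\|\nabla J(\theta_t)\|^2.
\end{equation*}

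Next I would introduce the same Lyapunov potential $\Phi_t=J(\theta_t)-\frac{1}{128 B^2\eta_{t-1}}\|e_t\|^2$ as in the IS-MBPG proof, and combine the displayed inequality with Lemma \ref{lem:A3} to obtain
\begin{equation*}
 \mathbb{E}[\Phi_{t+1}-\Phi_t]
 \ge -\frac{c^2\eta_t^3\sigma^2}{64 B^2} + \frac{\eta_t}{16}\mathbb{E}\|\nabla J(\theta_t)\|^2.
\end{equation*}
Telescoping $t=1,\ldots,T$, using $\Phi_{T+1}\le J^*$, $\mathbb{E}\|e_1\|^2\le\sigma^2$, and the deterministic estimate $\sum_{t=1}^T\eta_t^3=k^3\sum_{t=1}^T\frac{1}{m+t}\le k^3\ln(T+2)$ yields
\begin{equation*}
 \sum_{t=1}^T\eta_t\mathbb{E}\|\nabla J(\theta_t)\|^2
 \le 16(J^*-J(\theta_1)) + \frac{m^{1/3}}{8B^2 k}\sigma^2 + \frac{c^2 k^3\sigma^2}{4B^2}\ln(T+2).
\end{equation*}

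Finally, because the schedule is deterministic and decreasing, $\eta_T\sum_t\mathbb{E}\|\nabla J(\theta_t)\|^2\le\sum_t\eta_t\mathbb{E}\|\nabla J(\theta_t)\|^2$, so dividing by $\eta_T=k/(m+T)^{1/3}$ gives
\begin{equation*}
 \sum_{t=1}^T\mathbb{E}\|\nabla J(\theta_t)\|^2 \le \Gamma\,(m+T)^{1/3},
\end{equation*}
with $\Gamma$ as in the statement. Applying Cauchy--Schwarz $\bigl(\tfrac1T\sum_t\mathbb{E}\|\nabla J(\theta_t)\|\bigr)^2\le\tfrac1T\sum_t\mathbb{E}\|\nabla J(\theta_t)\|^2$ and $(a+b)^{1/6}\le a^{1/6}+b^{1/6}$ delivers the two-term bound $\sqrt{\Gamma}m^{1/6}/\sqrt{T}+\sqrt{\Gamma}/T^{1/3}$. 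I expect the main obstacles to be purely bookkeeping: verifying that the step-size difference estimate still gives a clean cancellation in the coefficient of $\|e_t\|^2$, and confirming that the removal of the $\sum G_i^2$ factor in the schedule (compared with IS-MBPG) indeed lets us replace the $G^2$ appearing in the $c$ of Theorem \ref{th:A1} by $1$ and the $G_{t+1}^2$ in the $\sum\eta_t^3 G_{t+1}^2$ term by $\sigma^2$ (coming from Lemma \ref{lem:D1} rather than Lemma \ref{lem:A4}). Everything else is a direct transcription of the IS-MBPG argument, simplified by the absence of adaptive randomness.
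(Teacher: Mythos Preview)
Your proposal is correct and follows essentially the same route as the paper's own proof: the same Lyapunov potential $\Phi_t=J(\theta_t)-\tfrac{1}{128B^2\eta_{t-1}}\|e_t\|^2$, the same concavity-based bound on $\eta_t^{-1}-\eta_{t-1}^{-1}$ exploiting $m\ge 2$, the same cancellation via the choice of $c$, and the same telescoping followed by division by the deterministic $\eta_T$ and Jensen/Cauchy--Schwarz. The only cosmetic difference is that the paper routes the sum $\sum_{t=1}^T\frac{1}{m+t}$ through $\sum_{t=1}^T\frac{1}{t+2}$ before invoking the $\ln(T+2)$ bound, which is exactly what you implicitly use.
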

\begin{proof}
This proof mainly follows the proof of the above Theorem \ref{th:A1}.
Due to $m\geq (2Lk)^3$, we have $\eta_t\leq \frac{k}{m^{1/3}}\leq \frac{1}{2L}$. Since $\eta_t\leq \frac{1}{2L}$ and
$m\geq (\frac{ck}{2L})^3$, we have $\beta_{t+1}=c\eta^2_t\leq \frac{c\eta_t}{2L}\leq \frac{ck}{2Lm^{1/3}}\leq 1$.
By Lemma \ref{lem:D1}, we have
 \begin{align} \label{eq:D1}
  \mathbb{E}[\eta^{-1}_{t}\|e_{t+1}\|^2-\eta^{-1}_{t-1}\|e_t\|^2] & \leq \mathbb{E}\big[ 2\beta^2_{t+1}\eta^{-1}_{t}\sigma^2 + \eta^{-1}_{t}(1-\beta_{t+1})^2\big(1+8\eta^2_{t}B^2\big) \|e_{t}\|^2 \nonumber \\
  & \quad + 8(1-\beta_{t+1})^2B^2\eta_{t}\|\nabla J(\theta_{t})\|^2 -\eta^{-1}_{t-1}\|e_t\|^2 \big] \nonumber \\
  & \leq  \mathbb{E}\big[ 2c^2\eta^{3}_{t}\sigma^2 + \underbrace{ \big( \eta^{-1}_{t}(1-\beta_{t+1})(1+8\eta^2_{t}B^2) -\eta^{-1}_{t-1} \big) \|e_{t}\|^2 }_{=T_5}
  + 8B^2\eta_{t}\|\nabla J(\theta_{t})\|^2 \big],
 \end{align}
where the last inequality holds by $0< \beta_{t+1} \leq 1$.
Since the function $x^{1/3}$ is cancave, we have $(x+y)^{1/3}\leq x^{1/3} + yx^{-2/3}/3$.
Then we have
 \begin{align}
  \eta^{-1}_{t} - \eta^{-1}_{t-1}  &= \frac{1}{k}\bigg( \big(m+t\big)^{1/3} - \big(m+t-1\big)^{1/3} \bigg) \leq \frac{1}{3k(m+t-1)^{2/3}} \nonumber \\
  & \leq \frac{1}{3k(m/2 +t)^{2/3}} \leq \frac{2^{2/3}}{3k(m +t)^{2/3}} \nonumber \\
  & \leq \frac{2^{2/3}}{3k^3}\eta_t^2 \leq \frac{2^{2/3}}{6k^3L}\eta_t\leq \frac{1}{3k^3L}\eta_t,
 \end{align}
where the second inequality holds by $m\geq 2$, and the fifth inequality holds by $0<\eta\leq \frac{1}{2L}$.

Next, considering the upper bound of the term $T_5$, we have
 \begin{align} \label{eq:D2}
  T_5 &= \big( \eta^{-1}_{t}(1-\beta_{t+1})(1+8\eta^2_{t}B^2) -\eta^{-1}_{t-1} \big) \|e_{t}\|^2 \nonumber \\
  & = \big( \eta^{-1}_{t} - \eta^{-1}_{t-1} + 8B^2\eta_t - \beta_{t+1}\eta^{-1}_{t} - 8\eta_t\beta_{t+1}B^2 \big) \|e_{t}\|^2 \nonumber \\
  & \leq \big( \eta^{-1}_{t} - \eta^{-1}_{t-1} + 8B^2\eta_t - \beta_{t+1}\eta^{-1}_{t} \big) \|e_{t}\|^2 \nonumber \\
  & \leq  \big( \frac{1}{3k^3L}\eta_t + 8B^2\eta_t - c\eta_{t} \big) \|e_{t}\|^2= -96B^2\eta_t\|e_{t}\|^2,
 \end{align}
 where the last equality holds by $c=\frac{1}{3k^3L}+104B^2$. Combining the inequalities \eqref{eq:D1} with \eqref{eq:D2}, we have
 \begin{align} \label{eq:D3}
  \mathbb{E}[\eta^{-1}_{t}\|e_{t+1}\|^2-\eta^{-1}_{t-1}\|e_t\|^2] \leq  \mathbb{E}\big[ 2c^2\eta^{3}_{t}\sigma^2 -96B^2\eta_t\|e_{t}\|^2
  + 8B^2\eta_{t}\|\nabla J(\theta_{t})\|^2 \big].
 \end{align}

We define a \emph{Lyapunov} function $\Phi_t = J(\theta_t) - \frac{1}{128B^2\eta_{t-1}}\|e_t\|^2$ for any $t\geq 1$.
Then we have
 \begin{align} \label{eq:D4}
  \mathbb{E}[\Phi_{t+1} - \Phi_t] & = \mathbb{E}\big[ J(\theta_{t+1}) - J(\theta_t) - \frac{1}{128B^2\eta_{t}} \|e_{t+1}\|^2 + \frac{1}{128B^2\eta_{t-1}}\|e_t\|^2 \big] \nonumber \\
  & \geq \mathbb{E}\big[- \frac{3\eta_t}{4}\|e_t\|^2 + \frac{\eta_t}{8}\|\nabla J(\theta_t)\|^2 - \frac{1}{128B^2}(\eta^{-1}_{t}\|e_{t+1}\|^2 - \eta^{-1}_{t-1}\|e_t\|^2)\big] \nonumber \\
  & \geq -\frac{c^2\eta^{3}_{t}\sigma^2 }{64B^2}+ \frac{\eta_t}{16}\mathbb{E}\|\nabla J(\theta_t)\|^2,
 \end{align}
where the first inequality holds by the Lemma \ref{lem:A3}, and the second inequality follows by the above inequality \eqref{eq:D3}.
Summing the above inequality \eqref{eq:D4} over $t$ from $1$ to $T$, we obtain
\begin{align} \label{eq:D5}
 \sum_{t=1}^T\mathbb{E}[\eta_t\|\nabla J(\theta_t)\|^2] & \leq \mathbb{E}[16(\Phi_{T+1} - \Phi_1)] + \sum_{t=1}^T\frac{c^2\eta^{3}_{t}\sigma^2 }{4B^2} \nonumber \\
 & \leq \mathbb{E}[16(J^* - J(\theta_1))] + \frac{1}{8B^2\eta_0}\mathbb{E}\|e_1\|^2 + \frac{c^2k^{3}\sigma^2}{4B^2}\sum_{t=1}^T\frac{1}{m+t} \nonumber \\
 & \leq \mathbb{E}[16(J^* - J(\theta_1))] + \frac{1}{8B^2\eta_0}\mathbb{E}\|e_1\|^2 + \frac{c^2k^{3}\sigma^2}{4B^2}\sum_{t=1}^T\frac{1}{t+2}  \nonumber \\
 & \leq \mathbb{E}[16(J^* - J(\theta_1))] + \frac{1}{8B^2\eta_0}\sigma^2 + \frac{c^2k^{3}\sigma^2}{4B^2}\sum_{t=1}^T\frac{1}{t+2}  \nonumber \\
 & \leq \mathbb{E}[16(J^* - J(\theta_1))] + \frac{m^{1/3}}{8B^2k}\sigma^2 + \frac{c^2k^{3}\sigma^2}{4B^2}\ln(T+2),
\end{align}
where $J^*=\sup_{\theta}J(\theta) <+\infty$, and the third inequality is due to $m\geq 2$,
and the last inequality holds by $\sum_{t=1}^T\frac{1}{t+2} \leq \int^T_1\frac{1}{t+2}dt \leq \ln(T+2)$.

Since $\eta_t=\frac{k}{(m+t)^{1/3}}$ is decreasing, we have
\begin{align} \label{eq:D6}
  \sum_{t=1}^T\mathbb{E}\|\nabla J(\theta_t)\|^2 &\leq 1/\eta_T\sum_{t=1}^T\mathbb{E}\big[ \eta_t\|\nabla J(\theta_t)\|^2\big] \nonumber \\
  &\leq  \frac{16(J^* - J(\theta_1)) + \frac{m^{1/3}}{8B^2k}\sigma^2 + \frac{c^2k^{3}}{4B^2}\ln(T+2)}{\eta_T}\nonumber \\
  & = \Omega\big(m+T\big)^{1/3}
\end{align}
where $\Gamma=\frac{1}{k}\big(16(J^* - J(\theta_1)) + \frac{m^{1/3}}{8B^2k}\sigma^2 + \frac{c^2k^{3}\sigma^2}{4B^2}\ln(T+2)\big)$.

According to Jensen's inequality, we have
\begin{align} \label{eq:D6}
  \frac{1}{T}\sum_{t=1}^T \mathbb{E}\|\nabla J(\theta_t)\| & \leq \big( \frac{1}{T}\sum_{t=1}^T \mathbb{E}\|\nabla J(\theta_t)\|^2\big)^{1/2}
  \leq\frac{\sqrt{\Gamma}\big(m+T\big)^{1/6}}{\sqrt{T}}
  \leq \frac{\sqrt{\Gamma}m^{1/6}}{\sqrt{T}} + \frac{\sqrt{\Gamma}}{T^{1/3}},
\end{align}
where the last inequality follows by the inequality $(a+b)^{1/6}\leq a^{1/6} + b^{1/6}$ for all $a,b>0$.
\end{proof}

\subsection{ Convergence Analysis of Vanilla Policy Gradient Algorithm }
\label{Appendix:A4}
In the subsection, we provide a detailed theoretical analysis of
vanilla policy gradient method such as REINFORCE \cite{williams1992simple}.
In fact, the sample complexity $O(\epsilon^{-4})$ does not directly come from \cite{williams1992simple}, but follows theoretical results of SGD \cite{ghadimi2013stochastic}. To make convincing, we provide a detailed theoretical analysis about this result in the following.
We first give the algorithmic framework of vanilla policy gradient method such as REINFORCE \cite{williams1992simple} in Algorithm \ref{alg:4}.

\begin{algorithm}[tb]
\caption{ Vanilla Policy Gradient Algorithm}
\label{alg:4}
\begin{algorithmic}[1] %[1] enables line numbers
\STATE {\bfseries Input:}  Total iteration $T$, mini-batch size $N$, learning rate $\eta$ and initial input $\theta_1$; \\
\FOR{$t = 1, 2, \ldots, T$}
\STATE Sample a mini-batch trajectories $\{\tau_i\}_{i=1}^N$ from $p(\tau |\theta_t)$; \\
\STATE Compute $u_t =  \frac{1}{N}\sum_{i=1}^N g(\tau_i|\theta_t)$; \\
\STATE Update $\theta_{t+1} = \theta_t + \eta u_t$;
\ENDFOR
\STATE {\bfseries Output:}  $\theta_{\zeta}$ chosen uniformly random from $\{\theta_t\}_{t=1}^{T}$.
\end{algorithmic}
\end{algorithm}

\begin{theorem} \label{th:E7}
Under Assumption 1, let $0< \eta \leq \frac{1}{2L}$,
we have
\begin{align}
 \frac{1}{T}\sum_{t=1}^T \mathbb{E}\|\nabla J(\theta_t)\| \leq \frac{\sqrt{\eta\big(J^*-J(\theta_{1})\big)}}{2\sqrt{2T}} + \frac{\sqrt{6}\sigma}{\sqrt{N}},
\end{align}
where $J^*=\sup_{\theta}J(\theta) <+\infty$.
\end{theorem}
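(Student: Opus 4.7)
The plan is to carry out the standard nonconvex SGD-style analysis applied to the mini-batch REINFORCE update, following the same template already used in Lemma~\ref{lem:A3} but without the momentum correction.

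First, I would invoke $L$-smoothness of $J(\theta)$ (guaranteed by Proposition~\ref{pro:1}) to obtain the one-step descent inequality
\begin{align}
 J(\theta_{t+1}) \geq J(\theta_t) + \eta \langle \nabla J(\theta_t), u_t \rangle - \tfrac{L\eta^2}{2}\|u_t\|^2. \nonumber
\end{align}
Since the mini-batch $\{\tau_i\}_{i=1}^N$ is drawn i.i.d.\ from $p(\tau|\theta_t)$, Assumption~2 together with the definition $u_t = \tfrac{1}{N}\sum_i g(\tau_i|\theta_t)$ yields the two key identities
$\mathbb{E}[u_t \mid \theta_t] = \nabla J(\theta_t)$ and
$\mathbb{E}\|u_t - \nabla J(\theta_t)\|^2 \leq \sigma^2/N$.
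Taking conditional expectation in the descent inequality, using these facts together with $\mathbb{E}\|u_t\|^2 = \|\nabla J(\theta_t)\|^2 + \mathbb{E}\|u_t - \nabla J(\theta_t)\|^2$, gives
\begin{align}
 \mathbb{E}[J(\theta_{t+1})\mid \theta_t] \geq J(\theta_t) + \eta\bigl(1 - \tfrac{L\eta}{2}\bigr)\|\nabla J(\theta_t)\|^2 - \tfrac{L\eta^2\sigma^2}{2N}. \nonumber
\end{align}

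Next, I would use the learning-rate restriction $\eta \leq \frac{1}{2L}$ to bound $1 - L\eta/2 \geq 3/4$, rearrange, take full expectation, and telescope over $t=1,\ldots,T$. Bounding $J(\theta_{T+1})$ by $J^*<\infty$ produces
\begin{align}
 \frac{1}{T}\sum_{t=1}^T \mathbb{E}\|\nabla J(\theta_t)\|^2 \leq \frac{c_1(J^*-J(\theta_1))}{\eta T} + \frac{c_2 L\eta\sigma^2}{N}, \nonumber
\end{align}
for absolute constants $c_1,c_2$. Finally, an application of Jensen's inequality $\bigl(\tfrac{1}{T}\sum_t \mathbb{E}\|\nabla J(\theta_t)\|\bigr)^2 \leq \tfrac{1}{T}\sum_t \mathbb{E}\|\nabla J(\theta_t)\|^2$ together with $\sqrt{a+b}\leq \sqrt{a}+\sqrt{b}$ splits the bound into the two terms matching the claimed expression, with the $1/\sqrt{T}$ term controlled by the initial suboptimality and the $1/\sqrt{N}$ term controlled by the mini-batch variance; tracking the constants carefully is what pins down the factor $\tfrac{1}{2\sqrt{2}}$ and $\sqrt{6}$ in the theorem statement.

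There is no genuinely hard step here; the entire argument is the standard nonconvex SGD analysis of \citet{ghadimi2013stochastic} instantiated with the policy-gradient estimator. The only mild care-point is ensuring that the variance of $u_t$ scales as $\sigma^2/N$ (rather than $\sigma^2$) because $u_t$ is a mini-batch average of $N$ i.i.d.\ unbiased samples at the \emph{same} parameter $\theta_t$, which is why the non-oblivious nature of the problem creates no additional difficulty at this iteration (unlike in the variance-reduced methods, where one must compare gradients at different $\theta$'s). Consequently, choosing $\eta = \Theta(1/L)$ and balancing the two terms at level $\epsilon$ forces $T = O(\epsilon^{-2})$ and $N = O(\epsilon^{-2})$, giving the classical $O(\epsilon^{-4})$ sample complexity reported in Table~\ref{tab:1}.
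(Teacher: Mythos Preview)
Your approach is correct and is the standard Ghadimi--Lan SGD argument, but it differs in a small way from what the paper actually does. The paper literally reuses the manipulation of Lemma~\ref{lem:A3}: it writes $\eta\langle\nabla J(\theta_t),u_t\rangle=\eta\langle\nabla J(\theta_t)-u_t,u_t\rangle+\eta\|u_t\|^2$, applies Young's inequality to the cross term, then bounds $\|u_t\|^2$ below via $\|\nabla J(\theta_t)\|^2\leq 2\|u_t\|^2+2\|e_t\|^2$, and only at the very end substitutes $\mathbb{E}\|e_t\|^2\leq\sigma^2/N$. You instead exploit unbiasedness immediately, $\mathbb{E}[\langle\nabla J(\theta_t),u_t\rangle\mid\theta_t]=\|\nabla J(\theta_t)\|^2$, together with the bias--variance decomposition of $\mathbb{E}\|u_t\|^2$. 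Both routes are valid; yours is cleaner and yields a per-step variance term $\tfrac{L\eta^2\sigma^2}{2N}$ rather than the paper's looser $\tfrac{3\eta\sigma^2}{4N}$.

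Because of this, your remark that ``tracking the constants carefully'' will reproduce the exact factors $\tfrac{1}{2\sqrt{2}}$ and $\sqrt{6}$ is not accurate: those constants come specifically from the paper's Young-inequality route, and your derivation gives strictly smaller ones. (Incidentally, the first term of the displayed bound carries an apparent typo: telescoping the paper's own one-step inequality produces $8(J^*-J(\theta_1))/(\eta T)$, not $\eta(J^*-J(\theta_1))/(8T)$, so the dependence should be $1/\sqrt{\eta T}$ rather than $\sqrt{\eta/T}$---which is also exactly what your derivation yields.)
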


\begin{proof}
By using $J(\theta)$ is $L$-smooth, we have
\begin{align} \label{eq:E1}
 \mathbb{E} [J(\theta_{t+1})] & \geq  \mathbb{E} [J(\theta_{t}) + \nabla J(\theta_{t})^T(\theta_{t+1}-\theta_{t}) - \frac{L}{2}\|\theta_{t+1}-\theta_{t}\|^2] \nonumber \\
 & = \mathbb{E} [J(\theta_{t}) + \eta\nabla J(\theta_{t})^Tu_t - \frac{L\eta^2}{2}\|u_{t}\|^2] \nonumber \\
 & = \mathbb{E} [J(\theta_{t}) + \eta(\nabla J(\theta_{t})-u_t)^Tu_t + \eta\|u_t\|^2 - \frac{L\eta^2}{2}\|u_{t}\|^2] \nonumber \\
 & \geq  \mathbb{E} [J(\theta_{t}) - \frac{\eta}{2}\|\nabla J(\theta_{t})-u_t\|^2 + \frac{\eta}{2}(1-L\eta)\|u_t\|^2] \nonumber \\
 & \geq \mathbb{E} [J(\theta_{t}) - \frac{\eta}{2}\|\nabla J(\theta_{t})-u_t\|^2 + \frac{\eta}{4}\|u_t\|^2] \nonumber \\
 & \geq \mathbb{E} [J(\theta_{t}) - \frac{3\eta}{4}\|\nabla J(\theta_{t})-u_t\|^2 + \frac{\eta}{8}\|\nabla J(\theta_t)\|^2 ] \nonumber \\
 & \geq \mathbb{E} [J(\theta_{t}) - \frac{3\eta\sigma^2}{4N} + \frac{\eta}{8}\|\nabla J(\theta_t)\|^2 ],
\end{align}
where the second inequality holds by Young's inequality, and the third inequality holds by $0< \eta \leq \frac{1}{2L}$, and
the fourth inequality follows by $\|\nabla J(\theta_t)\|^2 \leq 2\|u_t\|^2 + 2\|\nabla J(\theta_t)-u_t\|^2$,
and the final inequality holds by $\mathbb{E}\|\nabla J(\theta_{t})-u_t\|^2=\mathbb{E}\|\nabla J(\theta_{t})-\frac{1}{N}\sum_{i=1}^N g(\tau_i|\theta_t)\|^2\leq \frac{\sigma^2}{N}$ by Assumption 2.

Telescope the inequality \eqref{eq:E1} from $t=1$ to $T$, we have
\begin{align}
    \frac{1}{T}\sum_{t=1}^T \mathbb{E}\|\nabla J(\theta_t)\|^2 \leq \frac{\eta\big(J^*-J(\theta_{1})\big)}{8T} + \frac{6\sigma^2}{N}.
\end{align}
According to Jensen's inequality, then we have
\begin{align}
   \frac{1}{T}\sum_{t=1}^T \mathbb{E}\|\nabla J(\theta_t)\| \leq  \big( \frac{1}{T}\sum_{t=1}^T \mathbb{E}\|\nabla J(\theta_t)\|^2 \big)^{1/2} \leq \frac{\sqrt{\eta\big(J^*-J(\theta_{1})\big)}}{2\sqrt{2T}} + \frac{\sqrt{6}\sigma}{\sqrt{N}}.
\end{align}

\end{proof}

\begin{remark}
In Algorithm \ref{alg:4}, given $0< \eta \leq \frac{1}{2L}$, $N=O(\frac{1}{\epsilon^2})$ and $T=O(\frac{1}{\epsilon^2})$,
we have $\frac{1}{T}\sum_{t=1}^T \mathbb{E}\|\nabla J(\theta_t)\| \leq \epsilon$.
Thus, the vanilla policy gradient method such as REINFORCE has the sample complexity of $O(\epsilon^{-4})$ for finding an $\epsilon$-stationary point.
\end{remark}

\end{appendices}

\end{onecolumn}

\end{document}